\newcommand{\citet}{\cite}
\newcommand{\citep}{\cite}
\title{Learning Rate Scheduling with Matrix Factorization for Private Training} %TODO Please add
\author{Nikita P. Kalinin}{Institute of Science and Technology Austria, Klosterneuburg, Austria}{nikita.kalinin@ist.ac.at}{https://orcid.org/0009-0003-1134-8141}{Funded in part by the Austrian Science Fund (FWF) [10.55776/COE12].}%TODO mandatory, please use full name; only 1 author per \author macro; first two parameters are mandatory, other parameters can be empty. Please provide at least the name of the affiliation and the country. The full address is optional. Use additional curly braces to indicate the correct name splitting when the last name consists of multiple name parts.
\author{Joel Daniel Andersson}{Institute of Science and Technology Austria, Klosterneuburg, Austria}{joel.andersson@ist.ac.at}{https://orcid.org/0000-0003-2530-0520}{
Funded by the European Union. Views and opinions expressed are however those of the author(s) only and do not necessarily reflect those of the European Union or the European Research Council Executive Agency. Neither the European Union nor the granting authority can be held responsible for them.
This project has received funding from the European Research Council (ERC) under the European Union's Horizon 2020 research and innovation programme (MoDynStruct, No. 101019564)  \includegraphics[width=0.9cm]{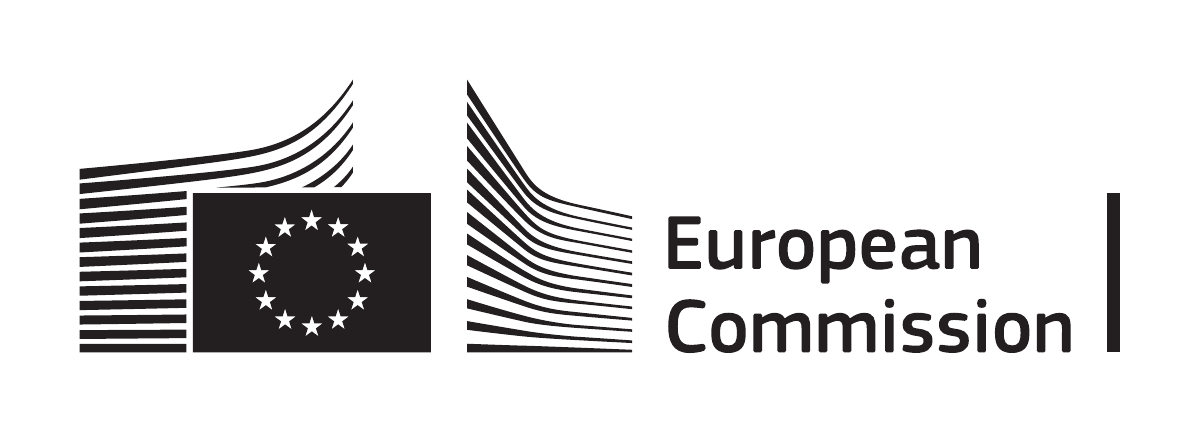}.
Additional funding by Providentia, a Data Science Distinguished Investigator grant from Novo Nordisk Fonden, with additional support from VILLUM Investigator grant 54451.
}
\authorrunning{N.\,P.\,Kalinin and J.\,D.\,Andersson} %TODO mandatory. First: Use abbreviated first/middle names. Second (only in severe cases): Use first author plus 'et al.'
\keywords{differential privacy, machine learning, matrix factorization} %TODO mandatory; please add comma-separated list of keywords
\begin{document}

\maketitle

%TODO mandatory: add short abstract of the document
\begin{abstract}
We study differentially private model training with stochastic gradient descent under learning rate scheduling and correlated noise. Although correlated noise, in particular via matrix factorizations, has been shown to improve accuracy, prior theoretical work focused primarily on the prefix-sum workload. That workload assumes a constant learning rate, whereas in practice learning rate schedules are widely used to accelerate training and improve convergence. We close this gap by deriving general upper and lower bounds for a broad class of learning rate schedules in both single- and multi-epoch settings. Building on these results, we propose a learning-rate-aware factorization that achieves improvements over prefix-sum factorizations under both MaxSE and MeanSE error metrics. Our theoretical analysis yields memory-efficient constructions suitable for practical deployment, and experiments on CIFAR-10 and IMDB datasets confirm that schedule-aware factorizations improve accuracy in private training.
\end{abstract}

\section{Introduction}

Privacy has become a major concern as machine learning systems are trained on sensitive data such as personal communications, financial transactions, and medical records. Beyond the risk of direct data exposure, models themselves may memorize and unintentionally reveal private information, creating serious ethical and security challenges. These concerns are especially pressing for production-level large language models trained on vast and heterogeneous datasets.

A widely studied approach to mitigating these risks is differential privacy (DP), which provides formal mathematical guarantees that the output of a learning algorithm does not reveal sensitive information about any individual training example \citep{dwork2006calibrating}. In practice, DP is often achieved by injecting carefully calibrated noise into either the gradients, ensuring that an adversary cannot infer the presence or absence of a single data point with high confidence. 
More recently, large-scale efforts such as VaultGemma \citep{vaultgemma2025} have demonstrated that it is possible to train billion-parameter models with rigorous privacy guarantees, showing that DP can be integrated into state-of-the-art architectures without prohibitive utility loss.

To make model training differentially private, algorithms typically inject noise into the gradients to mask the contribution of any individual data point. The most common approach, DP-SGD, adds independent Gaussian noise at each update, which provides strong privacy guarantees but can significantly reduce accuracy \citep{abadi2016deep}. \textit{Matrix factorization} has emerged as a more general alternative that introduces correlations in the injected noise, enabling improved accuracy while preserving privacy \citep{choquette2023amplified,choquette2023multi}.
The method is also applied beyond centralized model training to applications such as federated learning \citep{bienstock2025dmm, zhang2025locally} as well as decentralized learning \citep{bellet2025unified}.
The approach has also seen practical adoption, with Google reporting its use for training production on-device language models in their 2024 blog post ``Advances in private training for production on-device language models'' \citep{xu2024advances}. 

Recent work has focused on making matrix factorization memory efficient \citep{mckenna2024scaling,andersson2024streaming,kalinin2025back,mcmahan2024hassle}, and it has also been analyzed theoretically, mostly in the setting of Toeplitz workloads \citep{fichtenberger2023constant,henzinger2024,henzinger2025improved, dvijotham2024efficient}. However, existing utility analyses assume a constant learning rate. While Denisov, McMahan, Rush, Smith and Thakurta~\citet{denisov2022improved} introduced a non-Toeplitz workload with varying learning rates, its theoretical properties remain largely unexplored. In this work, we address this gap by studying matrix factorization under learning rate schedules.

Learning rate scheduling plays a critical role in the optimization of machine learning models, and a variety of strategies have been proposed in the literature. Popular approaches include cosine annealing \citep{loshchilov2016sgdr} and cyclical learning rates \citep{smith2017cyclical}, which adapt the step size during training to improve convergence. Another common technique is warm-starting \citep{he2016deep}, where models begin with a small learning rate that is gradually increased, as used in large-scale training setups \citep{goyal2017accurate}. In this work, we focus on learning rate decays available in PyTorch \citep{paszke2019pytorch} such as exponential, linear, polynomial and cosine, which are widely used and can be easily applied in practice.

Learning rate scheduling can be particularly useful in private training when the number of iterations is limited. By accelerating convergence, it enables higher accuracy in settings such as warm-up training \citep{kurakin2022toward}, private fine-tuning \citep{luo2021scalable}, and training under computational constraints. It has also been combined with matrix factorization as a form of learning rate cool-down \citep{choquette2023multi, choquette2023amplified, choquette2024near}, and was shown to provide improvements over fixed learning rates in \citep{denisov2022improved}, where the workload of our interest was originally introduced. 
%However, no theoretical guarantees were established for factorizations of this workload, nor were lower bounds derived for the achievable errors under the described setting.

\paragraph*{Contributions:}
\begin{itemize}
    \item We theoretically analyze the problem of matrix factorization under learning rate scheduling. We establish general lower and upper bounds for MaxSE and MeanSE in single-epoch, as well as for MeanSE in multi-epoch for a large class of schedulers. Here, MaxSE characterizes the maximum variance of the added noise, while MeanSE captures the average variance across iterations.
    \item We propose a learning-rate-aware Toeplitz factorization, which for exponentially decaying learning rate is provably optimal in MaxSE under single-epoch and improves upon the proposed upper bound for MeanSE. We adopt this factorization for memory efficient, multi-epoch training by making it banded inverse.
    \item We show numerically that the proposed factorization is close to optimal in all metrics.
    \item We show experimentally on CIFAR-10 and IMDB datasets that banded inverse factorizations benefit from learning rate scheduling. Moreover, we demonstrate that the proposed learning-rate-aware factorization achieves even further accuracy improvements.
\end{itemize}

\section{Background}
The most common way to train differentially private models is by using \textit{DP-SGD} \citep{abadi2016deep}. At each step we receive a gradient $g_i \in \mathbb{R}^d$, clip it to a fixed $\ell_2$ norm $\zeta > 0$, and add appropriately scaled independent Gaussian noise $z_i \sim \mathcal{N}(0, I_d \,\sigma^2)$, where $\sigma$ depends on the target privacy level $(\epsilon, \delta)$. The model is then updated as
\begin{equation}
    \theta_i = \theta_{i - 1} - \eta_i \big(\text{clip}(g_i, \zeta) + z_i\big),
\end{equation}
where $\eta_i$ is the learning rate at step $i$.

This procedure can be improved by correlating the noise across iterations. To formalize this, we define a matrix $G \in \mathbb{R}^{n \times d}$ of stacked gradients, a matrix $\Theta \in \mathbb{R}^{n \times d}$ of intermediate models, and a workload matrix $A \in \mathbb{R}^{n \times n}$ that encodes the training process such that $\Theta = AG$. If we use a constant learning rate, this matrix, denoted $A_1$, is a lower triangular matrix of ones. For varying learning rates we instead use a matrix $A_{\chi}$, described later.

To ensure that the intermediate models are differentially private, we can apply a matrix factorization mechanism. Specifically, we factorize $A$ into two matrices $B$ and $C$, then compute $CG$, add Gaussian noise $Z \in \mathcal{N}(0, \sigma^2)^{n\times d}$ to ensure differential privacy, and finally multiply the result by $B$ as post-processing:
\begin{equation}
    \widehat{AG} = B(CG + Z) = A(G + C^{-1}Z),
\end{equation}
which is equivalent to adding correlated Gaussian noise with covariance structure induced by $C^{-1}$ to the gradients.

The remaining question is: \emph{how much noise must be added, and does this procedure remain differentially private when the gradients are not known in advance but depend on the current model?} The foundational work of \citet{denisov2022improved} shows that the procedure is indeed differentially private, even when the gradients adaptively depend on the model, provided we add noise of scale
\begin{equation}
    \sigma = \zeta \cdot \sigma_{\epsilon, \delta} \cdot \text{sens}(C) 
    = \zeta \cdot \sigma_{\epsilon, \delta} \cdot \|C\|_{1 \to 2},
\end{equation}
where $\zeta$ denotes the clipping norm, and $\sigma_{\epsilon, \delta}$ is the noise multiplier of the standard Gaussian mechanism, which can be computed numerically \citep{balle2018improving}. The term $\text{sens}(C)$ represents the global sensitivity of the Gaussian mechanism for the product $CG$ when the row or rows corresponding to a single datapoint in $G$ change; it can be computed explicitly as $\|C\|_{1 \to 2}$, the maximum column norm of $C$. The case of multi-participation (multi-epoch) is discussed in Section~\ref{sub:multi_participation}. Now we have all the steps to train the model with differential privacy as presented in Algorithm~\ref{alg:mf_with_learning_rates}.

\begin{algorithm}[t]
\caption{Differentially Private SGD with Matrix Factorization and Learning Rate Schedules}\label{alg:mf_with_learning_rates} 
\begin{algorithmic}
\Require Model initialization $\theta_0 \in \mathbb{R}^d$, dataset $D$, batchsize $b$, model loss $\ell(\theta,d)$, clipnorm $\zeta > 0$,  learning rate $\eta$, correlation matrix $C \in \mathbb{R}^{n \times n}$, learning rate scheduler $\chi_i$,\\ 
noise matrix $Z \in \mathbb{R}^{n \times d}$ with i.i.d. entries $\mathcal{N}(0, \text{sens}^2(C)\sigma_{\epsilon,\delta}^2 \zeta^2)$.

\For{$i = 1, 2, \dots ,n$}
\State $S_i \leftarrow \{d_1, \dots, d_b\} \subseteq D$\quad (select a data batch \footnotemark)
\State $g_i \leftarrow \nabla_\theta \ell(\theta_{i-1}, d_j) \quad\text{for $j=1,\dots,b$}$
\State $x_i \leftarrow \sum\nolimits_{j = 1}^{b} \min(1, \zeta/||g_j||)\cdot g_j$ \quad
(clip gradients) 
\State $\hat{x}_i \leftarrow \frac{1}{b}\left(x_i + [C^{-1}Z]_{[i,\cdot]}\right)$
\State $\theta_{i} \leftarrow \theta_{i-1} - (\chi_i \eta) \hat{x}_i$
\EndFor
\Ensure $\theta_n$
\end{algorithmic}
\end{algorithm}
\footnotetext{For batch formation, one could simply take the data and go over the epochs in the same order, thereby guaranteeing the separation (see Subsection~\ref{sub:multi_participation}). When amplification by subsampling is included (see the discussion in the Experiments section, Section~\ref{sec:experiments}), one should first partition the data into batches in a randomized way and repeat this partitioning across epochs.}

The choice of factorization $A = BC$ significantly impacts the quality of the private estimation. Following the work of \citep{denisov2022improved, henzinger2024} we quantify the \emph{approximation quality} by either the  mean squared error (MeanSE) or the maximum expected squared error (MaxSE), which can be computed as
\begin{align}
\mathrm{MeanSE}(B,C) &= \sqrt{\tfrac{1}{n}\,\mathbb{E}_Z \|AG - \widehat{AG}\|_\mathrm{F}^2}= \tfrac{1}{\sqrt{n}} \,\|B\|_\mathrm{F}\, \|C\|_{1\to 2}\, \sigma_{\epsilon,\delta}\, \zeta, \label{eq:rmse}\\
\mathrm{MaxSE}(B,C) &= \sqrt{\mathbb{E}_Z \|AG - \widehat{AG}\|^2_{\infty}} = \|B\|_{2\to\infty}\, \|C\|_{1\to 2}\, \sigma_{\epsilon,\delta}\, \zeta, \label{eq:maxse}
\end{align}
where $\|\cdot\|_\mathrm{F}$ denotes the Frobenius norm and $\|\cdot\|_{2\to\infty}$ the maximum row $\ell_2$-norm. These approximation errors are independent of $G$, and the term $\sigma_{\epsilon,\delta}\zeta$ is independent of the matrix factorization. To isolate the contribution of the factorization $(B,C)$, we will use the notation $\text{MeanSE}(B, C)$, $\text{MaxSE}(B, C)$ assuming $\zeta = \sigma_{\epsilon, \delta} = 1$ in the theoretical analysis. 

\section{Method}
We now turn to the workload of stochastic gradient descent (SGD) with learning rate scheduling. Let $\chi_1, \chi_2, \ldots, \chi_n$ be a sequence with $\min \chi_t = \beta > 0$ and $\max \chi_t = 1$, representing a learning rate scheduler such that the actual learning rate at time $t$ is $\eta_t = \eta \chi_t$. We assume that $\beta$ is reasonably separated from $1$, as the regime $\beta \to 1$ is not of interest since it nullifies the benefits of scheduling. The workload matrix of interest is then:

\begin{equation}
    A_{\chi} = \begin{pmatrix}
    \chi_1 & 0 & 0 & \cdots & 0 \\
    \chi_1 & \chi_2 & 0 & \cdots & 0 \\
    \chi_1 & \chi_2 & \chi_3 & \cdots & 0 \\
    \vdots & \vdots & \vdots & \ddots & \vdots \\
    \chi_1 & \chi_2 & \chi_3 & \cdots & \chi_n
    \end{pmatrix} =A_1 \times D,
\end{equation}

where $A_1$ is a prefix-sum matrix (lower triangular matrix of all ones) and $D$ is a diagonal matrix of learning rates, i.e., $D = \mathrm{diag}(\chi_1, \dots, \chi_n)$. We will study the problem of \textit{optimal matrix factorization} in MaxSE and MeanSE metrics for the matrix $A_{\chi}$ with the \textbf{learning rate decays} given in Table~\ref{tab:learning_rate_decays}.
We emphasize that $\chi_n = \beta$ for all the listed decays.
For the experiments, we will also include the constant learning rate $(\chi_k = 1)$.

\begin{table}[t]

\centering
\caption{Learning rate decays.}
\label{tab:learning_rate_decays}
\begin{tabular}{ll}
\hline
Exponential & $\chi_k = \beta^{\tfrac{k - 1}{n - 1}}$ \\[6pt]
Polynomial  & $\chi_k = \beta + (1 - \beta)\,\dfrac{\left(\tfrac{n}{k}\right)^{\gamma} - 1}{n^{\gamma} - 1}, \ \gamma \ge 1$ \\[10pt]
Linear      & $\chi_k = 1 - \tfrac{k - 1}{n - 1}(1 - \beta)$ \\[6pt]
Cosine      & $\chi_k = \beta + \tfrac{1 - \beta}{2}\left(1 + \cos\!\left(\tfrac{k - 1}{n - 1}\pi\right)\right)$ \\
\hline
\end{tabular}
\end{table}

In this work, we prove general lower and upper bounds for the MaxSE and MeanSE errors. For the upper bound, we use a prefix-sum-based factorization given by $B = A_{\chi}(A_1)^{-1/2}$ and $C = A_1^{1/2}$, which has been shown to be nearly optimal up to the next asymptotic term for the prefix sum problem ($\chi_k = 1$) \citep{henzinger2025normalized}. To further improve the bounds, we propose a learning-rate-aware factorization. To define it, let $A_{\chi}^{\text{Toep}}$ denote the Toeplitz matrix with $\chi_1, \dots, \chi_n$ on its subdiagonals:
\begin{equation}
    A_{\chi}^{\text{Toep}} = \begin{pmatrix}
        \chi_1& 0 & 0 & \dots & 0\\
        \chi_2 & \chi_1 & 0 &\dots & 0\\
        \chi_3 & \chi_2 & \chi_1 & \dots & 0\\
        \vdots &\vdots & \vdots & \ddots & \vdots\\
        \chi_n & \chi_{n - 1} & \chi_{n - 2} & \dots & \chi_1
    \end{pmatrix}
\end{equation}
We propose $C_{\chi} = (A_{\chi}^{\text{Toep}})^{1/2}$ as a learning-rate-aware correlation matrix. To analyze its properties, we consider the exponentially decaying learning rate $\chi_t = \beta^{\tfrac{t - 1}{n - 1}} = \alpha^{t - 1}$ with $\alpha = \beta^{\tfrac{1}{n - 1}}$. In this setting, the correlation matrix can be computed explicitly as
\begin{equation}
\label{eq:C_alpha} 
C_{\alpha} = \begin{pmatrix}
        1 & 0 & \dots & 0\\
        \alpha r_1 & 1 & \dots & 0\\
        \vdots & \vdots & \ddots & \vdots\\
        \alpha^{n - 1} r_{n - 1} & \alpha^{n - 2} r_{n - 2} & \dots & 1
    \end{pmatrix},
\end{equation}
where the coefficients are $r_{j} = \left|\binom{-1/2}{j}\right| = \tfrac{1}{4^j} \binom{2j}{j}$.

\section{Results}

In this work, we derive upper and lower bounds on the MaxSE and MeanSE errors of the learning rate scheduling workload $A_{\chi}$ for a large class of learning rate schedulers $\chi_1, \ldots, \chi_n$. In the following theorem, we prove an upper bound based on the prefix-sum factorization $A_{\chi}A_1^{-1/2} \times A_{1}^{1/2}$.

\begin{restatable}{theorem}{thmPrefixErrorSingle}
\label{thm:prefix-error-single}
Let $(\chi_t)_{t=1}^n$ be a sequence on $[\beta, 1]$ for some constant $\beta > 0$.
For $n \geq 2$ we define
\begin{equation}
    \Delta_t = \lvert \chi_t - \chi_{t+1} \rvert \qquad (\text{for all}\ 1\leq t\leq n-1).
\end{equation}
If either of the following two conditions holds ($c>0$ an absolute constant):
\begin{align}
    \Delta_t \leq \frac{c}{t(1+\log t)}\qquad(\text{for all}\ 1\leq t\leq n-1),
    \qquad\emph{or}\qquad
    \sum_{t=1}^{n-1} \Delta_t^2 = o\!\left(\frac{\log n}{n}\right),
\end{align}
then the factorization $B_{\chi} \times A_1^{1/2}$, where $B_{\chi}: = A_{\chi}(A_1)^{-1/2}$, satisfies
\begin{align}
    &\mathrm{MaxSE}(B_{\chi}, A_1^{1/2}) 
    =  \Theta\!\left(\!\sqrt{\log n}\cdot\sqrt{\max_{m\in[n]} \chi_m^2 \log m} \right),\\
    &\mathrm{MeanSE}(B_{\chi}, A_1^{1/2}) 
    = \Theta\!\left(\!\sqrt{\log n}\cdot\sqrt{\frac{1}{n}\sum_{m=1}^{n} \chi_m^2\log m}\right).
\end{align}
\end{restatable}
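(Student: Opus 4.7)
The plan is to bound the row and Frobenius norms of $B_\chi = A_\chi A_1^{-1/2}$ and combine them with the standard column-norm bound $\|A_1^{1/2}\|_{1\to 2}^2 = \sum_{\ell=0}^{n-1} r_\ell^2 = \Theta(\log n)$, where $r_\ell = \binom{2\ell}{\ell}/4^\ell \sim 1/\sqrt{\pi \ell}$.

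First I would derive an explicit formula for the entries of $B_\chi$. Writing $A_1^{-1/2}$ as a lower-triangular Toeplitz matrix with Maclaurin coefficients $s_\ell = (-1)^\ell \binom{1/2}{\ell}$ of $(1-x)^{1/2}$, using the identity $\sum_{\ell\le L} s_\ell = r_L$ (which follows from $(1-x)^{-1/2}=(1-x)^{1/2}/(1-x)$), and Abel summation in $\ell \mapsto \chi_{j+\ell}$, one obtains
\[
(B_\chi)_{mj} \;=\; \chi_m\, r_{m-j} \;-\; v^{(m)}_j,
\qquad v^{(m)}_j \;=\; \sum_{k=j}^{m-1}\delta_k\, r_{k-j},
\]
with $\delta_k = \chi_{k+1}-\chi_k$. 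Squaring and summing over $j \le m$ gives
\[
\|B_\chi e_m\|_2^2 \;=\; \chi_m^2\sum_{\ell=0}^{m-1} r_\ell^2 \;-\; 2\chi_m\,\langle r, v^{(m)}\rangle \;+\; \|v^{(m)}\|_2^2 \;=\; \chi_m^2\cdot\Theta(\log m) \;+\; \mathrm{error}(m),
\]
and it remains to bound the error so that it is dominated by the main term, which satisfies $\chi_m^2\log m = \Omega(\log m)$ since $\chi_m\ge\beta>0$.

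Under the $\ell^2$ hypothesis $\sum_t\Delta_t^2 = o(\log n/n)$, the operator-norm bound $\|v^{(m)}\|_2 \le \|A_1^{1/2}\|_{\mathrm{op}}\,\|\delta\|_2 = O(\sqrt n)\,\|\delta\|_2$ (using $\|A_1\|_{\mathrm{op}} = \Theta(n)$) immediately yields $\|v^{(m)}\|_2^2 = o(\log n)$. Under the pointwise hypothesis $\Delta_t \le c/(t(1+\log t))$, the argument is more delicate: the naive Young-type inequality $\|v^{(m)}\|_2^2 \le \|r\|_2^2\|\delta\|_1^2$ picks up a parasitic $(\log\log n)^2$ factor because $\|\delta\|_1 = O(\log\log n)$ under that hypothesis, so one must exploit the joint decay of $r_\ell$ and $\delta_k$ by splitting $v^{(m)}_j$ according to whether the index $k$ is close to or far from $j$ and applying $r_\ell\sim\ell^{-1/2}$ and $\delta_k = O((k\log k)^{-1})$ in each range, to obtain $\sum_j (v^{(m)}_j)^2 = O(\log m)$ with a constant uniform over the admissible class. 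Combining with the elementary bounds $(a-b)^2 \in [(1-\varepsilon)a^2 - O_\varepsilon(b^2),\, (1+\varepsilon)a^2 + O_\varepsilon(b^2)]$ yields matching upper and lower bounds $\|B_\chi e_m\|_2^2 = \Theta(\chi_m^2\log m)$; taking $\max_m$ and $\tfrac1n\sum_m$, and multiplying by $\|A_1^{1/2}\|_{1\to 2} = \Theta(\sqrt{\log n})$, delivers the MaxSE and MeanSE statements.

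The main obstacle is the pointwise-condition error bound: a crude $\ell^1$-$\ell^2$ or operator-norm estimate is off by a $\log\log$ factor, and eliminating it requires a careful Toeplitz-kernel double-sum decomposition that exploits the simultaneous decay of the Catalan-like weights $r_\ell$ and of $\delta_k$, rather than separating the two factors.
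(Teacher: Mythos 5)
Your overall strategy matches the paper's: Abel summation to isolate the main term $\chi_m r_{m-j}$ from the correction $v^{(m)}_j$, bound the $\ell^2$ norm of the correction, and invoke $(a-b)^2 = a^2 \pm O(ab) + b^2$ to transfer the bound to the squared row norms. Your handling of the $\ell^2$ hypothesis via the operator-norm bound $\|A_1^{1/2}\|_{\mathrm{op}} = O(\sqrt{n})$ (which follows from Young's convolution inequality and $\|r\|_1 = O(\sqrt{n})$) is essentially equivalent to the paper's argument and correct.

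The problem is the pointwise case, and it is not just that the argument is unfinished: the target bound you state, $\sum_j (v^{(m)}_j)^2 = O(\log m)$, is both insufficient and not the right answer. It is insufficient because the main term $\chi_m^2 \sum_{\ell < m} r_\ell^2 = \Theta(\chi_m^2 \log m)$ is \emph{also} $\Theta(\log m)$ for constant $\chi_m \in [\beta, 1]$, so if the error term is $O(\log m)$ with an unspecified constant, the lower bound $\frac{1}{2}\chi_m^2\sum_\ell r_\ell^2 - \|v^{(m)}\|_2^2 \ge (\frac{\chi_m^2}{2\pi} - C)\log m + O(1)$ can turn negative (and certainly need not be $\Theta(\chi_m^2\log m)$). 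In particular, in the cross term $-2\chi_m\langle r, v^{(m)}\rangle$, Cauchy--Schwarz gives $O(\sqrt{\log m}\cdot\sqrt{\log m})$, the same order as the main term, so matching $\Theta$ constants are lost. What the paper actually proves (its Lemma~\ref{lem:prefix_sum_error_nonuniform}) is that under $\Delta_t \le c/(t(1+\log t))$ the error $Q = \sum_{l=1}^{n-1}(\sum_t \Delta_{l+t}r_t)^2$ is $O(1)$, uniformly in $n$ --- not merely that the $(\log\log n)^2$ factor from the naive $\|r\|_2^2\|\Delta\|_1^2$ bound can be shaved. A direct estimate confirms this: with $\Delta_k \asymp (k\log k)^{-1}$ and $r_t \asymp t^{-1/2}$, the inner sum $\sum_{t}\Delta_{l+t} r_t$ is $O\big((\sqrt{l}\log l)^{-1}\big)$, and squaring and summing over $l$ gives $O(\sum_l (l\log^2 l)^{-1}) = O(1)$. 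So the near/far split you gesture at is indeed the right tool, but it must be pushed to a constant bound, not $O(\log m)$; until that is done the lower bound of the $\Theta$ claim does not follow.
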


The conditions assumed in Theorem~\ref{thm:prefix-error-single} are satisfied for all learning rate decays presented in Table~\ref{tab:learning_rate_decays}, more formally:

\begin{restatable}{lemma}{prefixErrorSingle}\label{lem:prefix-error-single}
    Every learning rate schedule $(\chi_t)_{t=1}^{n}$ with constant $\beta\in(0, 1/e)$ presented in Table~\ref{tab:learning_rate_decays} %(including cosine schedule with $P=o\left(\sqrt{\log n}\right)$ cycles) 
    satisfies the assumptions of Theorem~\ref{thm:prefix-error-single}.
\end{restatable}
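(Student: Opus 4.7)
The plan is to verify, for each of the four families in Table~\ref{tab:learning_rate_decays}, that at least one of the two assumptions of Theorem~\ref{thm:prefix-error-single} is satisfied. In each case I would first obtain a closed-form or mean-value-theorem bound on the difference $\Delta_t=|\chi_t-\chi_{t+1}|$, then decide on a case-by-case basis whether to verify the pointwise bound (condition~1) or the $\ell_2$-summability bound (condition~2).

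For the linear schedule the differences are constant: $\Delta_t=(1-\beta)/(n-1)$, so $\sum_{t=1}^{n-1}\Delta_t^2=(1-\beta)^2/(n-1)=O(1/n)=o(\log n/n)$, and condition~2 holds. For the cosine schedule I would apply the MVT to $x\mapsto \cos(\pi x)$ on the interval $[(t-1)/(n-1),t/(n-1)]$, obtaining $\Delta_t\le \tfrac{(1-\beta)\pi}{2(n-1)}=O(1/n)$, and again $\sum_t \Delta_t^2=O(1/n)$. For the exponential schedule with $\alpha=\beta^{1/(n-1)}$ one has the exact identity $\Delta_t=(1-\alpha)\alpha^{t-1}$, and summing a geometric series gives
\begin{equation}
\sum_{t=1}^{n-1}\Delta_t^2=(1-\alpha)^2\cdot\frac{1-\alpha^{2(n-1)}}{1-\alpha^2}=\frac{(1-\alpha)(1-\beta^2)}{1+\alpha}=O(1/n),
\end{equation}
since $1-\alpha=1-\beta^{1/(n-1)}=\Theta(\log(1/\beta)/n)$ for fixed $\beta\in(0,1/e)$. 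So condition~2 again applies.

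The polynomial case is the only one where condition~2 fails, and this is where I expect the main (minor) subtlety: applying the MVT to $f(x)=x^{-\gamma}$ yields $k^{-\gamma}-(k+1)^{-\gamma}\le \gamma k^{-\gamma-1}$, whence
\begin{equation}
\Delta_k \;=\;(1-\beta)\,\frac{n^\gamma\bigl(k^{-\gamma}-(k+1)^{-\gamma}\bigr)}{n^\gamma-1}\;\le\;C_{\beta,\gamma}\,k^{-\gamma-1}\;\le\;C_{\beta,\gamma}\,k^{-2},
\end{equation}
using $\gamma\ge 1$ and the fact that $n^\gamma/(n^\gamma-1)\le 2$ for $n\ge 2$. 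Since $\Delta_1$ stays bounded away from $0$ as $n\to\infty$, $\sum_k\Delta_k^2=\Omega(1)$ and condition~2 is out of reach. Instead I would verify condition~1 by noting that $C_{\beta,\gamma}\,k^{-2}\le c/(k(1+\log k))$ is equivalent to $1+\log k\le (c/C_{\beta,\gamma})\,k$, which holds for all $k\ge 1$ once $c$ is taken large enough depending on $\beta$ and $\gamma$.

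The only real work is the polynomial estimate and checking that the constants in front of $1/n$ from $(1-\alpha)$ are uniformly controlled for $\beta\in(0,1/e)$; everything else is routine differentiation or geometric summation. No matching lower bound on $\Delta_t$ is needed, since both assumptions of Theorem~\ref{thm:prefix-error-single} are one-sided.
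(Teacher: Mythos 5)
Your proposal is correct and follows essentially the same route as the paper: for linear, cosine, and exponential schedules you verify the $\ell_2$ summability condition via direct bounds on $\Delta_t$ (with the cosine case handled by the mean value theorem, exactly as in the paper), and for the polynomial schedule you switch to the pointwise condition after observing $\Delta_k = O(k^{-\gamma-1})$ with $\gamma\ge 1$. The only cosmetic difference is that you evaluate the geometric sum exactly in the exponential case, whereas the paper just bounds $\Delta_t$ uniformly by $O(\log(1/\beta)/n)$; the conclusion is identical.
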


Moreover, in this work we also prove general lower bounds for any learning rate schedules:

\begin{restatable}{theorem}{gammaBounds}\label{thm:gamma_bounds}
Let $A_{\chi} = A_1 D_{\chi}$, where $D_{\chi}=\mathrm{diag}(\chi_1,\dots,\chi_n)$ with positive
$\chi_t > 0$. 
Then
\begin{align}
&\inf_{B\times C=A_{\chi}}\mathrm{MaxSE}(B, C) \;\ge\; \max_{1 \le t \le n} \frac{1}{\pi}\,(\min\limits_{j \le t} \chi_j)\log t,\\
&\inf_{B\times C=A_{\chi}}\mathrm{MeanSE}(B, C) \;\ge\; \max_{1 \le t \le n} \frac{1}{\pi}\,\sqrt{\frac{t}{n}}\,(\min\limits_{j \le t} \chi_j)\log t.
\end{align}
\end{restatable}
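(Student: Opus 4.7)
The plan is to reduce both lower bounds to the corresponding known lower bounds for the $t\times t$ prefix-sum matrix $A_1^{(t)}$ on the leading block, for each $t\in[n]$, and then take the maximum over $t$. Two elementary operations drive the reduction: a submatrix restriction, and a diagonal rescaling that strips the $D_\chi$ factor.

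Fix any factorization $A_\chi=BC$ with $B\in\mathbb{R}^{n\times k}$ and $C\in\mathbb{R}^{k\times n}$. Because $A_\chi=A_1 D_\chi$ is lower-triangular with $(A_\chi)_{ij}=\chi_j$ for $j\le i$, its top-left $t\times t$ block equals $A_1^{(t)} D_\chi^{(t)}$, where $D_\chi^{(t)}=\mathrm{diag}(\chi_1,\dots,\chi_t)$. Writing $B^{(t)}$ for the first $t$ rows of $B$ and $C^{(t)}$ for the first $t$ columns of $C$, one has $B^{(t)} C^{(t)}=A_1^{(t)} D_\chi^{(t)}$, and monotonicity of the relevant norms under restriction gives $\|B\|_{2\to\infty}\ge\|B^{(t)}\|_{2\to\infty}$, $\|B\|_F\ge\|B^{(t)}\|_F$, and $\|C\|_{1\to 2}\ge\|C^{(t)}\|_{1\to 2}$. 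Setting $M_t:=\min_{j\le t}\chi_j>0$, invertibility of $D_\chi^{(t)}$ yields the factorization $B^{(t)}\bigl(C^{(t)}(D_\chi^{(t)})^{-1}\bigr)=A_1^{(t)}$ of the unscaled prefix-sum matrix, together with the column-norm bound $\|C^{(t)}(D_\chi^{(t)})^{-1}\|_{1\to 2}=\max_{j\le t}\|C^{(t)}_{:,j}\|_2/\chi_j\le \|C^{(t)}\|_{1\to 2}/M_t$.

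Next I invoke the tight prefix-sum lower bounds from prior work on the $\gamma_2$ factorization norm and its Frobenius counterpart for MeanSE: for any $PQ=A_1^{(t)}$ one has $\|P\|_{2\to\infty}\|Q\|_{1\to 2}\ge \tfrac{1}{\pi}\log t$ and $\|P\|_F\|Q\|_{1\to 2}\ge \tfrac{\sqrt{t}}{\pi}\log t$. Applying these to $(B^{(t)},C^{(t)}(D_\chi^{(t)})^{-1})$ and multiplying through by $M_t$ yields $\|B^{(t)}\|_{2\to\infty}\|C^{(t)}\|_{1\to 2}\ge \tfrac{M_t}{\pi}\log t$ and $\|B^{(t)}\|_F\|C^{(t)}\|_{1\to 2}\ge \tfrac{\sqrt{t}\,M_t}{\pi}\log t$. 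Combining with the restriction inequalities and the definitions $\mathrm{MaxSE}(B,C)=\|B\|_{2\to\infty}\|C\|_{1\to 2}$ and $\mathrm{MeanSE}(B,C)=\|B\|_F\|C\|_{1\to 2}/\sqrt{n}$, we obtain $\mathrm{MaxSE}(B,C)\ge \tfrac{M_t}{\pi}\log t$ and $\mathrm{MeanSE}(B,C)\ge \tfrac{1}{\pi}\sqrt{t/n}\,M_t\log t$ for every $t\in[n]$; taking the maximum over $t$ finishes both bounds.

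The main obstacle will be the clean invocation of the prefix-sum lower bounds with the sharp constant $1/\pi$ at finite $t$ (rather than only asymptotically), and in particular locating the Frobenius-norm version with the extra $\sqrt{t}$ factor, since the trivial inequality $\|B\|_F\ge\|B\|_{2\to\infty}$ only recovers the weaker $\tfrac{M_t}{\pi}\log t$ bound rather than the desired $\tfrac{\sqrt{t}M_t}{\pi}\log t$. Once these two ingredients are fixed as black boxes, the rest of the argument is a routine chain of norm monotonicity and the diagonal-rescaling identity described above.
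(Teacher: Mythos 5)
Your proof is correct, and it takes a route that is genuinely different from the paper's. The paper splits the theorem into two lemmas and handles the diagonal $D_\chi$ differently in each: for MaxSE it invokes the variational characterization $\gamma_2(A)=\max\{\|P^{1/2}AQ^{1/2}\|_*:P,Q\ \text{diag.\ PSD},\ \mathrm{Tr}\,P=\mathrm{Tr}\,Q=1\}$ and chooses $Q\propto D_\chi^{-2}$ to absorb the scaling inside the nuclear norm; for MeanSE it cites $\gamma_F(A_{:k,:k})\ge\|A_{:k,:k}\|_*/k$ and then constructs a dual certificate to show $\|A_1 D_\chi\|_*\ge(\min_j\chi_j)\|A_1\|_*$. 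Your argument instead performs a single uniform reduction: after the submatrix restriction (which both proofs share), you move $(D_\chi^{(t)})^{-1}$ into the right factor and control its effect on the column norm via $\max_j \|C^{(t)}_{:,j}\|_2/\chi_j \le \|C^{(t)}\|_{1\to 2}/M_t$, then apply the prefix-sum lower bounds directly to the resulting factorization of $A_1^{(t)}$. This has the advantage of treating MaxSE and MeanSE symmetrically and avoiding the nuclear-norm dual certificate construction; the only ingredient you need beyond elementary norm monotonicity is the nuclear norm bound $\|A_1^{(t)}\|_*\ge\tfrac{t}{\pi}\log t$, combined with $\gamma_2(A_1^{(t)})\ge\|A_1^{(t)}\|_*/t$ and $\gamma_F(A_1^{(t)})\ge\|A_1^{(t)}\|_*/t$, both of which the paper itself invokes (the latter from \citet{henzinger2023almost}). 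So the ``black boxes'' you flag as the remaining obstacle are exactly those the paper uses; once you cite them, your proof is complete, with the same $1/\pi$ constant.
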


In particular, plugging in the exponential learning rate decay $\chi_k = \beta^{\tfrac{k - 1}{n - 1}}$ yields the following upper and lower bounds.

\begin{restatable}{corollary}{prefixSumExp}

For exponential learning rate decay $\chi_k = \beta^{\frac{k - 1}{n - 1}}$ with $\beta\in(0, 1/e)$, the prefix-sum--based factorization $A_{\chi} = A_{\chi} (A_1)^{-1/2} \times A_1^{1/2}$ gives the following values for MaxSE and MeanSE:
\begin{align}
&\mathrm{MaxSE}(B_{\chi}, A_1^{1/2}) 
= \Theta\!\left(\!\sqrt{\log n} \, \sqrt{\log \frac{n}{\log (1/\beta)}} \right),\\
&\mathrm{MeanSE}(B_{\chi}, A_1^{1/2}) 
= \Theta\!\left(\!\frac{\log n}{\sqrt{\log (1/\beta)}} \right).
\end{align}
\label{cor:prefixsummaxseexp}
\end{restatable}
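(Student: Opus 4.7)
The plan is to apply Theorem~\ref{thm:prefix-error-single} directly to the exponential schedule $\chi_m = \beta^{(m-1)/(n-1)}$, after first invoking Lemma~\ref{lem:prefix-error-single} (which covers this schedule and the range $\beta\in(0,1/e)$) to verify the hypotheses. Once the theorem is applicable, the entire corollary reduces to evaluating the two schedule-dependent quantities $M := \max_{m\in[n]}\chi_m^2\log m$ and $S := \tfrac{1}{n}\sum_{m=1}^n \chi_m^2\log m$. Writing $\lambda := \log(1/\beta)/(n-1)$, one has $\chi_m^2 = e^{-2\lambda(m-1)}$, so both computations reduce to analyzing $e^{-2\lambda(m-1)}\log m$ in the regime $\lambda\to 0$, $\lambda n \to \log(1/\beta)$.

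For the MaxSE bound I would treat $m$ as continuous and differentiate. The first-order condition yields $2\lambda\,m^\star\log m^\star = 1$, i.e.\ $m^\star\log m^\star = (n-1)/(2\log(1/\beta))$. Inverting implicitly through $\log m^\star + \log\log m^\star = \log((n-1)/(2\log(1/\beta)))$ gives $\log m^\star = (1+o(1))\log(n/\log(1/\beta))$, and at the critical point $e^{-2\lambda m^\star} = e^{-1/\log m^\star} = \Theta(1)$. Substituting back yields $M = \Theta(\log(n/\log(1/\beta)))$, which plugged into Theorem~\ref{thm:prefix-error-single} gives the claimed MaxSE. A quick check at the endpoints $m=1$ (value $0$) and $m=n$ (value $\beta^2\log n$) confirms the interior maximum dominates whenever $\beta$ is bounded away from $1$.

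For the MeanSE bound I would pass from the sum to the integral $\tfrac{1}{n}\int_1^n e^{-2\lambda(x-1)}\log x\,dx$ (monotonicity bounds the Riemann-sum error by $O((\log n)/n)$, which is absorbed by the main term). The substitution $y = 2\lambda x$ transforms this into
\[
\frac{e^{2\lambda}}{2\lambda n}\int_{2\lambda}^{2\lambda n} e^{-y}\bigl(\log y - \log(2\lambda)\bigr)\,dy.
\]
As $n\to\infty$, $2\lambda n\to 2\log(1/\beta)$ and $-\log(2\lambda)\sim\log n$, while $\int_{2\lambda}^{2\lambda n}e^{-y}\,dy\to 1-\beta^2 = \Theta(1)$ and $\int_{2\lambda}^{2\lambda n}e^{-y}\log y\,dy$ remains bounded (the integrand is integrable near $0$ and exponentially suppressed at the upper limit). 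Combining these gives $S = \Theta(\log n/\log(1/\beta))$, and Theorem~\ref{thm:prefix-error-single} then produces the claimed MeanSE.

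The main obstacle, as I see it, will be cleanly separating the two-sided $\Theta$-constants in $S$ so that the advertised $\beta$-dependence matches on both sides. The tricky part is that after the $y$-substitution, the leading contribution is $\tfrac{-\log(2\lambda)}{2\lambda n}\cdot(1-\beta^2+o(1)) = \Theta(\log n/\log(1/\beta))$, but the subleading $\tfrac{1}{2\lambda n}\int e^{-y}\log y\,dy$ term is of the same order $1/\log(1/\beta)$ (without the $\log n$ factor), so one must confirm the $\log n$ prefactor in the main term indeed strictly dominates for all $n$ large enough uniformly in $\beta\in(0,1/e)$. The hypothesis $\beta<1/e$, which guarantees $\log(1/\beta)>1$ and keeps the transformed integration interval bounded below away from zero on one end and above by a constant on the other, is precisely what makes this separation clean.
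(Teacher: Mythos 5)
Your proposal is correct and follows the same high-level route as the paper: invoke Lemma~\ref{lem:prefix-error-single} to verify the hypotheses of Theorem~\ref{thm:prefix-error-single}, then reduce the corollary to evaluating $\max_m\chi_m^2\log m$ and $\tfrac1n\sum_m\chi_m^2\log m$. The paper packages these two evaluations into Lemma~\ref{lem:alpha_m_log_m_max} and Lemma~\ref{lem:alpha_m_log_m_mean} (applied with $\alpha\mapsto\alpha^2$, valid since $\beta^2<1/e$), whereas you re-derive them inline. Your MaxSE computation is essentially identical to Lemma~\ref{lem:alpha_m_log_m_max}: differentiate the continuous relaxation, solve $m^\star\log m^\star=\Theta(n/\log(1/\beta))$, verify the value at the critical point is $\Theta(1)\cdot\log m^\star$. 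For MeanSE the paper takes a slightly different path: it splits $\log m = \log n + \log(m/n)$, evaluates the first term as a geometric series, and identifies the second as a Riemann sum for $\int_0^1\beta^x\log x\,dx=O(1)$. Your direct substitution $y=2\lambda x$ after passing to the integral achieves the same end; the paper's split has the small advantage of not requiring the explicit endpoint-$0$ integrability argument for $e^{-y}\log y$, but both are clean. Your closing caution about uniformity in $\beta\in(0,1/e)$ is exactly the concern the paper resolves by noting $1-\beta$ is bounded below and $|I(\beta)|=O(1)$; you have correctly identified where the hypothesis $\beta<1/e$ matters, so there is no gap.
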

\begin{restatable}{corollary}{gammaConsequences}\label{cor:beta_final}
Suppose $\chi_k=\beta^{\tfrac{k-1}{n-1}}$ with $\beta\in(0,1/e)$. Then
\begin{align}
&\inf_{B\times C=A_{\chi}}\mathrm{MaxSE}(B, C)=\;\Omega\!\left(\log\!\frac{n}{\log(1/\beta)}\right)\\
&\inf_{B\times C=A_{\chi}}\mathrm{MeanSE}(B, C)\;=\;\Omega\!\left(\frac{1}{\sqrt{\log(1/\beta)}}\,
\log\!\frac{n}{\log(1/\beta)}\right).
\end{align}
\end{restatable}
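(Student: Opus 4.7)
The plan is to derive Corollary~\ref{cor:beta_final} directly from Theorem~\ref{thm:gamma_bounds} by plugging in the exponential schedule and then witnessing the outer maximum with a carefully chosen index $t^{\star}$. Writing $\alpha := \log(1/\beta) > 1$ (since $\beta \in (0, 1/e)$), the schedule becomes $\chi_k = e^{-\alpha(k-1)/(n-1)}$, which is monotonically non-increasing in $k$. Consequently $\min_{j \leq t} \chi_j = \chi_t = e^{-\alpha(t-1)/(n-1)}$, and Theorem~\ref{thm:gamma_bounds} reduces to the two scalar problems
\begin{equation}
\max_{1 \leq t \leq n} e^{-\alpha(t-1)/(n-1)} \log t \qquad \text{and} \qquad \max_{1 \leq t \leq n} \sqrt{t/n}\, e^{-\alpha(t-1)/(n-1)} \log t.
\end{equation}

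The next step is to pick the witnessing index. Intuitively, the product $\chi_t \log t$ is maximized where the exponential kicks in, namely where $\alpha(t-1)/(n-1) = \Theta(1)$. I would therefore choose $t^{\star} = \lceil (n-1)/\alpha \rceil + 1$, so that $(t^{\star}-1)/(n-1) \in [1/\alpha, 2/\alpha]$ and hence $\chi_{t^{\star}} \in [e^{-2}, e^{-1}] = \Theta(1)$. Since $\beta < 1/e$ gives $\alpha > 1$, this choice satisfies $1 \leq t^{\star} \leq n$ whenever $n$ is at least a small constant (which is implicit in the $\Omega$ notation). Then $\log t^{\star} = \log\!\left(\Theta(n/\alpha)\right) = \log(n/\log(1/\beta)) + O(1)$, and plugging into Theorem~\ref{thm:gamma_bounds} yields the first claim
\begin{equation}
\inf_{B \times C = A_\chi} \mathrm{MaxSE}(B, C) \geq \tfrac{1}{\pi}\, \chi_{t^{\star}} \log t^{\star} = \Omega\!\left(\log \tfrac{n}{\log(1/\beta)}\right).
\end{equation}

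For the MeanSE bound I reuse the same $t^{\star}$ and simply pick up the extra factor $\sqrt{t^{\star}/n} = \Theta(1/\sqrt{\alpha}) = \Theta(1/\sqrt{\log(1/\beta)})$. This gives
\begin{equation}
\inf_{B \times C = A_\chi} \mathrm{MeanSE}(B, C) \geq \tfrac{1}{\pi}\, \sqrt{t^{\star}/n}\, \chi_{t^{\star}} \log t^{\star} = \Omega\!\left(\tfrac{1}{\sqrt{\log(1/\beta)}} \log \tfrac{n}{\log(1/\beta)}\right),
\end{equation}
which is the second claim. One could verify via a short differentiation argument that $t^{\star}$ is indeed of the right order for both maxima (so the bounds are tight given Theorem~\ref{thm:gamma_bounds}), but that is not needed for the lower-bound direction.

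There is essentially no obstacle here beyond bookkeeping. The only subtlety is making sure that $t^{\star}$ is a valid index and that $\log t^{\star}$ simplifies cleanly; the condition $\beta \in (0, 1/e)$ is what guarantees $\alpha > 1$ and therefore that $t^{\star} = \Theta(n/\alpha)$ is both at least a constant and at most $n$ for all sufficiently large $n$. The constants $1/(\pi e)$ and $1/(\pi e \sqrt{\alpha})$ are absorbed into the $\Omega(\cdot)$ notation.
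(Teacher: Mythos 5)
Your proof is essentially the same as the paper's: both apply Theorem~\ref{thm:gamma_bounds}, exploit that $\min_{j\le t}\chi_j=\chi_t$ for the monotone exponential schedule, witness the maximum at $t^\star=\Theta(n/\log(1/\beta))$ so that $\chi_{t^\star}=\Theta(1)$, and read off $\log t^\star=\Theta(\log\tfrac{n}{\log(1/\beta)})$ and $\sqrt{t^\star/n}=\Theta(1/\sqrt{\log(1/\beta)})$. The cosmetic difference in how $t^\star$ is rounded has no effect on the argument.
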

Note that our results do not report the leading constant in the error bounds.
The reason is simply that the workload matrices we study are harder to analyze than the much-studied lower-triangular matrix of all-ones, for which more fine-grained analysis has been performed.
For comparison, even for the square-root factorization of the Toeplitz workload with exponential weight decay, the exact leading constant has not been determined~\cite{henzinger2024,kalinin2024}.

We further improve the upper bound by considering a learning-rate–aware factorization
$C = (A_{\chi}^{\text{Toep}})^{1/2}$, which can be computed explicitly for the exponential
learning rate decay $\chi_k = \beta^{\tfrac{k-1}{n-1}} = \alpha^{k-1}$. This yields the
factorization $A_{\chi} = B_{\alpha} \times C_\alpha$, where $C_\alpha$ is defined in
equation~\eqref{eq:C_alpha}, and $B_{\alpha}$ is obtained as
$A_{\chi} (C_{\alpha})^{-1}$.

In Lemma~7 of \citep{kalinin2024}, the sensitivity of the matrix $C_{\alpha}$ has been computed as:
\begin{equation}
\label{eq:_C_alpha_sens}
    \|C_\alpha\|_{1 \to 2}
    = \mathcal{O}\!\left(\tfrac{1}{\alpha}\sqrt{\log\!\tfrac{1}{1-\alpha^2}}\right)
    = \mathcal{O}\!\left(\sqrt{\log\tfrac{n}{\log(1/\beta)}}\right).
\end{equation}

We then bound both the maximum row norm and the Frobenius norm of $B_\alpha$, which
leads to the following lemma.

\begin{restatable}{lemma}{BxCxUpperBound}\label{lem:BxCx_upper}
Let $\beta \in (0,1/e)$ and $\alpha = \beta^{1/(n-1)}$. For the factorization $A_{\chi} = B_{\alpha}\times C_\alpha$,  
\begin{align}
    &\mathrm{MaxSE}(B_\alpha, C_\alpha)
    = \mathcal{O}\!\left(\log\tfrac{n}{\log(1/\beta)}\right), \\
    &\mathrm{MeanSE}(B_\alpha, C_\alpha)
    = \mathcal{O}\!\left(\sqrt{\tfrac{\log n}{\log(1/\beta)}} \;
      \sqrt{\log\tfrac{n}{\log(1/\beta)}}\right).
\end{align}
\end{restatable}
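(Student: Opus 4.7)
The plan is to reduce the claim to estimates on $\|B_\alpha\|_{2\to\infty}$ (which controls MaxSE via~\eqref{eq:maxse}) and on $\|B_\alpha\|_F$ (which controls MeanSE via~\eqref{eq:rmse}), since the sensitivity $\|C_\alpha\|_{1\to 2} = \mathcal{O}(\sqrt{\log(n/\log(1/\beta))})$ is already given in~\eqref{eq:_C_alpha_sens}. To make $B_\alpha = A_{\chi} C_\alpha^{-1}$ tractable, observe that $C_\alpha^2 = A_{\chi}^{\mathrm{Toep}}$ is a lower-triangular Toeplitz matrix with generating function $(1-\alpha z)^{-1}$, so $C_\alpha^{-1}$ is lower-triangular Toeplitz with coefficients $c_l = \binom{1/2}{l}(-\alpha)^l$ from $(1-\alpha z)^{1/2}$. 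Multiplying by $A_{\chi}$ (whose $i$-th row has entries $1, \alpha, \alpha^2, \ldots, \alpha^{i-1}, 0, \ldots, 0$) gives
\[
(B_\alpha)_{ij} = \alpha^{j-1}\,S_{i-j}(\alpha), \qquad S_m(\alpha) := \sum_{l=0}^m \alpha^l c_l = \sqrt{1-\alpha^2}+T_m(\alpha),
\]
where $T_m(\alpha) = \sum_{l=m+1}^\infty \left|\binom{1/2}{l}\right|\alpha^{2l}$ and the limit $S_\infty(\alpha)=\sqrt{1-\alpha^2}$ comes from plugging $z=\alpha$ into $(1-\alpha z)^{1/2}$.

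The main technical step, and main obstacle, is a two-regime estimate for $S_m(\alpha)$. Set $M := 1/(1-\alpha^2)$, so that $M \asymp n/\log(1/\beta)$ since $\alpha = \beta^{1/(n-1)}$. Using the Stirling asymptotic $\left|\binom{1/2}{l}\right| \asymp l^{-3/2}$ together with an Abel-type tail estimate for $\sum_{l>m}\alpha^{2l}l^{-3/2}$, I would show $S_m(\alpha) \asymp m^{-1/2}$ for $1 \le m \lesssim M$ (where the tail $T_m$ dominates $\sqrt{1-\alpha^2}$) and $S_m(\alpha) \asymp \sqrt{1-\alpha^2}$ for $m \gtrsim M$ (where $\sqrt{1-\alpha^2}$ dominates $T_m$). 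The delicate part is controlling these bounds uniformly across the crossover $l \approx M$, where $\alpha^{2l}$ moves from $\Theta(1)$ to exponentially small.

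Given this estimate, the $i$-th squared row norm $\sum_{m=0}^{i-1}\alpha^{2(i-1-m)}S_m(\alpha)^2$ can be analyzed by splitting at $m = M$. The $m \le M$ part is bounded by summing $1/m$ against the weights $\alpha^{2(i-1-m)} \le 1$, giving $\mathcal{O}(\log\min(i,M))$, while the $m > M$ part contributes $\mathcal{O}((1-\alpha^2)\cdot M) = \mathcal{O}(1)$ by a geometric sum. This yields $\|\mathrm{row}_i(B_\alpha)\|_2^2 = \mathcal{O}(\log(n/\log(1/\beta)))$ uniformly in $i$, and hence the MaxSE bound after multiplying by~\eqref{eq:_C_alpha_sens}. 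Summing these row bounds over $i$ gives $\|B_\alpha\|_F^2 = \mathcal{O}(M\log M + n) = \mathcal{O}(n\log n/\log(1/\beta))$; dividing by $n$ and combining with~\eqref{eq:_C_alpha_sens} yields the MeanSE bound claimed in the lemma.
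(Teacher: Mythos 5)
Your overall plan is sound and is structurally similar to the paper's proof: express $(B_\alpha)_{ij}=\alpha^{j-1}S_{i-j}(\alpha)$ via the Toeplitz structure of $C_\alpha^{-1}$, estimate the squared row sums, and multiply by the sensitivity $\|C_\alpha\|_{1\to 2}=\mathcal{O}(\sqrt{\log(n/\log(1/\beta))})$. Your decomposition $S_m(\alpha)=\sqrt{1-\alpha^2}+T_m(\alpha)$ (limit value plus tail) is genuinely different from the paper's, which applies Abel summation to obtain $S_j=\alpha^{2j}r_j+(1-\alpha^2)\sum_{t<j}\alpha^{2t}r_t$. Both are workable, and the MaxSE bound goes through from either one once you nail down the tail estimate $T_m\lesssim\alpha^{2(m+1)}\sum_{t>m}\lvert\binom{1/2}{t}\rvert\lesssim\alpha^{2m}m^{-1/2}$.

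The gap is in the Frobenius bookkeeping. Your stated per-row bound is $\mathcal{O}(\log\min(i,M)+1)$, obtained by replacing the weights $\alpha^{2(i-1-m)}\le 1$ in the $m\le M$ portion of the sum. Summing that bound over $i\in[n]$ gives $\mathcal{O}(n\log M+n)$, not the $\mathcal{O}(M\log M+n)$ you then write down. The stronger Frobenius estimate you assert \emph{is} true, but it requires the row norms to decay like $\alpha^{2i}\log M$ for $i\gg M$, and exactly that decay is destroyed when you discard the $\alpha$-weights. Consequently, what actually follows from your argument as written is $\mathrm{MeanSE}=\mathcal{O}(\log\frac{n}{\log(1/\beta)})$, which is off by roughly a $\sqrt{\log(1/\beta)}$ factor from the claimed $\mathcal{O}(\sqrt{\tfrac{\log n}{\log(1/\beta)}}\sqrt{\log\tfrac{n}{\log(1/\beta)}})$ whenever $\beta$ is bounded away from $1$. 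The paper avoids this by keeping the decay factor: its per-row bound is $\frac{2}{\pi}\alpha^{2m}\log m+\mathcal{O}(1)$, so the Frobenius average is $\frac{1}{n}\sum_m\alpha^{2m}\log m=\Theta(\frac{\log n}{\log(1/\beta)})$. To repair your version, do not bound $\alpha^{2(i-1-m)}\le 1$; instead use the refined tail estimate $T_m\lesssim\alpha^{2m}m^{-1/2}$ so that $\sum_{m\le i-1}\alpha^{2(i-1-m)}T_m^2\lesssim\alpha^{2(i-1)}\sum_m\alpha^{2m}/m\lesssim\alpha^{2(i-1)}\log\frac{1}{1-\alpha^2}$, which carries the needed exponential decay in $i$.
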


This factorization achieves the \textbf{optimal rate for the MaxSE error} and, asymptotically, performs better than alternative factorizations for the MeanSE error. 

We summarize the errors for the exponential learning rate decay in Table~\ref{tab:exponential_decay}. In addition, we consider four alternative factorizations: the trivial factorizations $A_{\chi} \times I$ and $I \times A_{\chi}$, a prefix-sum–inspired factorization $A_1^{1/2} \times A_1^{1/2}D$, and the square-root factorization $A_{\chi}^{1/2} \times A_{\chi}^{1/2}$. The square-root factorization is highly nontrivial to obtain since the matrix is not Toeplitz; due to space constraints, we defer the detailed discussion to Appendix~\ref{sub:square_root}.

\begin{table}[t!]
\caption{Factorizations with corresponding MaxSE and MeanSE errors for exponential learning rate scheduling $\chi_t = \beta^{\frac{t - 1}{n - 1}}$ for $\beta \in (0, 1/e)$. The proof of the first three bounds is rather technical and can be found in Lemma~\ref{lem:abc-bounds} in the Appendix. The errors for square-root factorization (d) can be found in Corollary~\ref{cor:square_root_factorization}. Learning-rate-aware factorization (e) is computed in Lemma~\ref{lem:BxCx_upper}. The prefix-sum-based factorization (f) is computed in Corollary~\ref{cor:prefixsummaxseexp}. The lower bounds are computed in Corollary~\ref{cor:beta_final} in the Appendix.}
\label{tab:exponential_decay}
\centering
\begin{tabular}{lll}

\toprule
\textbf{Factorization} & \textbf{MaxSE} & \textbf{MeanSE} \\
\midrule
(a) $A_{\chi} = A_1^{1/2} \times A_1^{1/2}D$ & $\Theta(\log n)$ & $\Theta(\log n)$\\
(b) $A_{\chi} = A_{\chi} \times I$ & $\Theta\left(\sqrt{\frac{n }{\log 1/\beta}}\right)$ & $\Theta\left(\sqrt{\frac{n }{\log 1/\beta}}\right)$\\
(c) $A_{\chi} = I \times A_{\chi}$ & $\Theta(\sqrt{n})$ & $\Theta(\sqrt{n})$\\
(d) $A_{\chi} = A_{\chi}^{1/2} \times A_{\chi}^{1/2}$ & $\Omega\left(\sqrt{\log n}\sqrt{\log \frac{n}{\log 1 / \beta}}\right)$ & $\Omega\!\left(\frac{\log n}{\sqrt{\log(1/\beta)}}\right)$\\
(e) $A_{\chi} = A_{\chi} (A_{\chi}^{\text{Toep}})^{-1/2} \times (A_{\chi}^{\text{Toep}})^{1/2}$ & $\mathcal{O}   \left(\log \frac{n}{\log 1/\beta}\right)$ & $\mathcal{O}   \left(\sqrt{\frac{\log n}{\log 1/\beta}}\sqrt{\log \frac{n}{\log 1/\beta}}\right)$ \\
(f) $A_{\chi} = A_1DA_1^{-1/2} \times A_1^{1/2}$ & $\Theta\left(\sqrt{\log n}\sqrt{\log \frac{n}{\log 1 / \beta}}\right)$ & $\Theta\left(\frac{\log n}{\sqrt{\log 1/\beta}}\right)$\\
\qquad Lower Bound & $\Omega\left(\log \frac{n}{\log 1/\beta}\right)$ & $\Omega\left(\frac{1}{\sqrt{\log 1/\beta}}\log\left(\frac{n}{\log 1/\beta}\right)\right)$\\
\bottomrule
\end{tabular}

\end{table}

\begin{figure}[t!]
    \centering
    % First row
    \begin{subfigure}{0.4\textwidth}
        \centering
        \includegraphics[width=\linewidth]{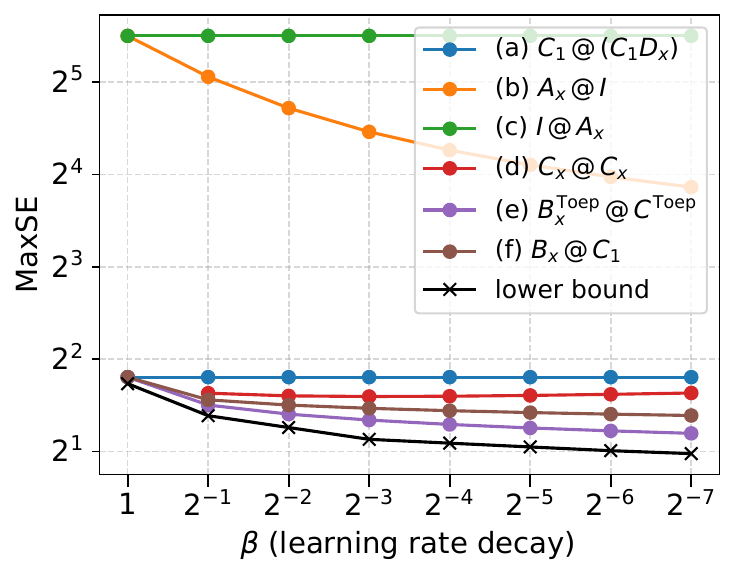}
        \caption{$n=2048$, MaxSE}
    \end{subfigure}
    \hfill
    \begin{subfigure}{0.4\textwidth}
        \centering
        \includegraphics[width=\linewidth]{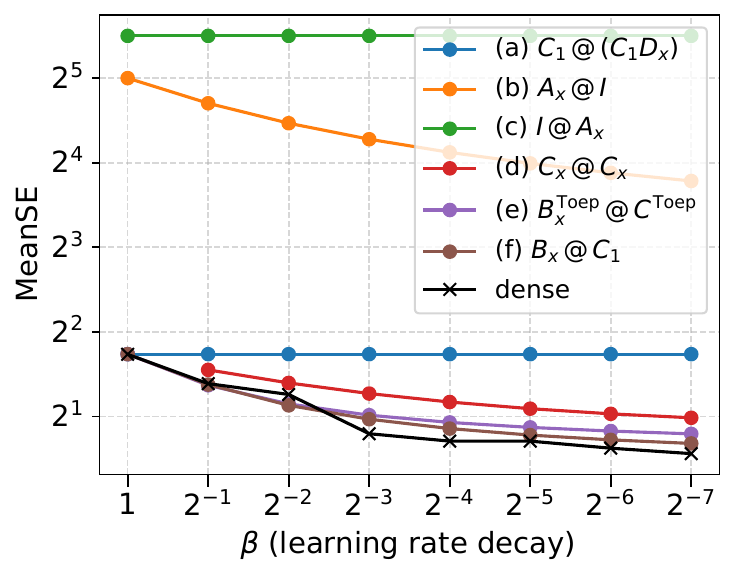}
        \caption{$n=2048$, MeanSE}
    \end{subfigure}

    % Second row
    \begin{subfigure}{0.43\textwidth}
        \centering
        \includegraphics[width=\linewidth]{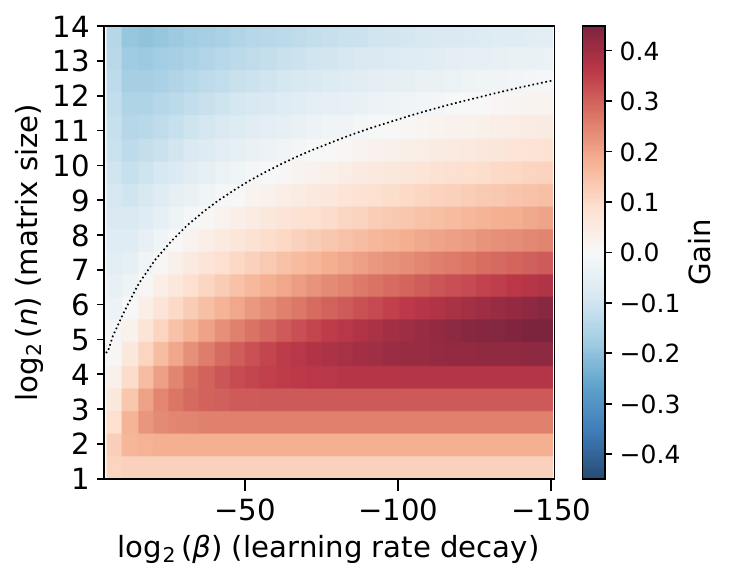}
        \caption{MeanSE error gain}
    \end{subfigure}
    \hfill
    \begin{subfigure}{0.43\textwidth}
        \centering
        \includegraphics[width=\linewidth]{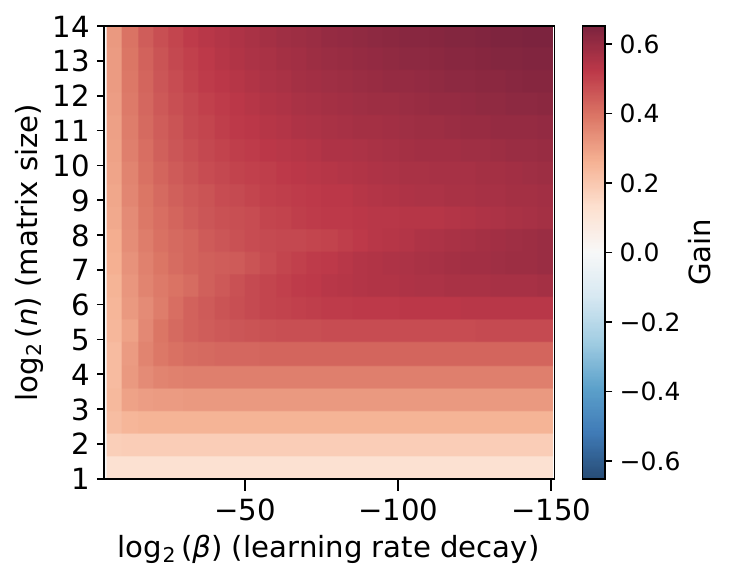}
        \caption{MaxSE error gain}
    \end{subfigure}

    \caption{Comparison of MaxSE and MeanSE errors under an \textbf{exponentially decaying} learning rate, for the proposed factorizations (see Table~\ref{tab:exponential_decay}), with fixed matrix size $n = 2048$ and varying decay $\beta$. We refer to the approximately optimal value of MeanSE computed by dense factorization \citep{denisov2022improved} as ``dense.'' For MaxSE, we report a lower bound since no scalable and accurate solution for its optimal value is available. The bottom row compares our learning-rate-aware factorization with the prefix-sum based one, validating the theoretical improvements in both MeanSE and MaxSE.}
    \label{fig:errors_combined}
\end{figure}

We then numerically compare the proposed factorizations in the single-epoch (single-participation) setting using the MaxSE and MeanSE metrics, as functions of the learning rate decay $\beta$ and the matrix size $n$ (see Figure~\ref{fig:errors_combined} for exponential decay and Figure~\ref{fig:lr_schedulers} in the Appendix for other learning rate decays). As an approximation of the actual optimal value for MeanSE, we use a dense factorization \citep{denisov2022improved} implemented in the jax-privacy library \citep{jax-privacy2022github}. On the plots, we refer to this approximation as ``dense''. For MaxSE, it is computationally infeasible to compute the exact optimal value for large matrix sizes. Therefore, we rely on the lower bound derived in Theorem~\ref{thm:gamma_bounds}, which we denote on the plots as ``lower bound''. We observe that our learning-rate-aware factorization outperforms the others in terms of MaxSE. However, for the proposed values of $n$ and $\beta$, it performs worse than the prefix sum based factorization in terms of MeanSE. To further investigate this, we plot the colormap of the gain over the prefix sum based approach (see Figure~\ref{fig:errors_combined}). In the blue regions, our method performs worse, while in the red regions it performs better. As can be seen, for any fixed $n$, sufficiently small values of $\beta$ lead to the learning-rate-aware factorization outperforming the prefix sum based approach, thereby numerically validating our theoretical findings.

\subsection{Multi-participation}
\label{sub:multi_participation}

\begin{figure}[t!]
    \centering
    \begin{subfigure}{0.48\linewidth}
        \centering
        \includegraphics[width=\linewidth]{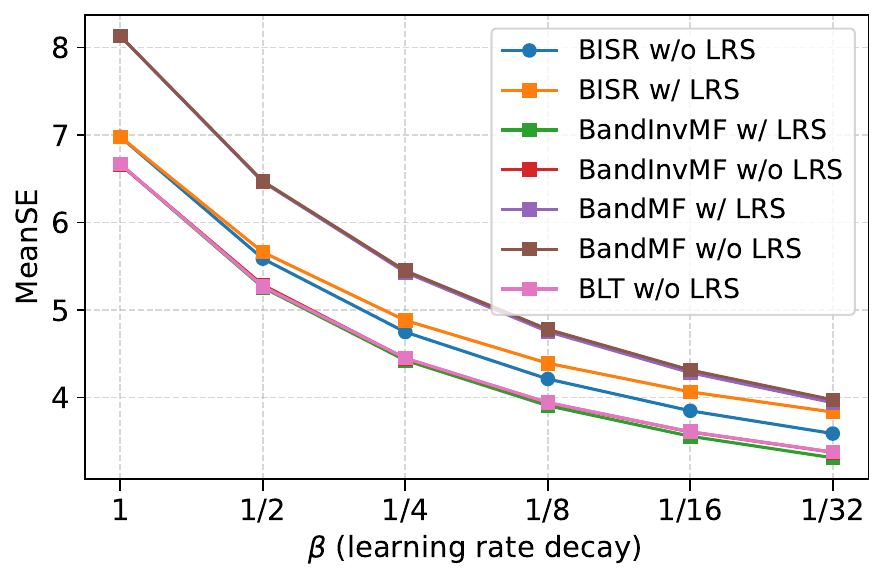}
        \subcaption{number of participation $k = 4$}
        \label{fig:multi_k4}
    \end{subfigure}
    \hfill
    \begin{subfigure}{0.48\linewidth}
        \centering
        \includegraphics[width=\linewidth]{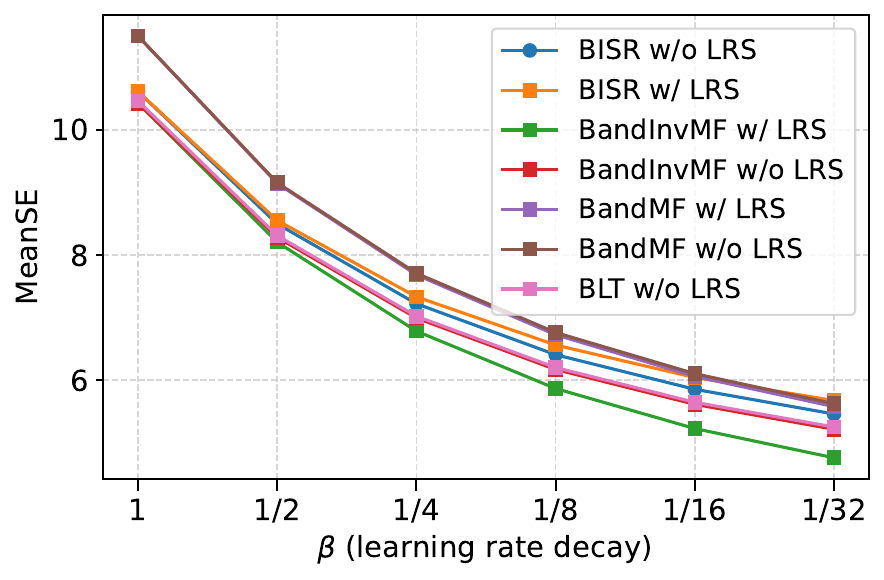}
        \subcaption{number of participation $k = 8$}
        \label{fig:multi_k8}
    \end{subfigure}
    \caption{Multi-participation MeanSE error with matrix size $n = 2048$. Lines are computed for bandwidth $p = 64$. For the exponential workload, we observe that with a larger participation number it becomes beneficial to optimize the factorization with respect to the learning rate decay workload. However, for the considered values of $n$ and $\beta$, we do not observe any benefit from incorporating learning rate scheduling for BISR.}
    \label{fig:multi_participation}
\end{figure}

Following the line of work on multi-participation matrix factorization \citep{choquette2023multi, choquette2023amplified, kalinin2024, mckenna2024scaling, kalinin2025back}, we allow each user or datapoint to participate multiple times. Without imposing any restriction on the participation pattern, the guarantees would be no stronger than those obtained via the privacy composition. To overcome this, we adopt the notion of \emph{$b$-min separation}, which requires that the gap between any two consecutive participations of the same user be at least $b > 0$. Under this condition, each user may participate up to $k = \lceil n/b \rceil$ times. The assumption is practical because, in a centralized setting, one has full control over the participation pattern. In the federated learning setting, if a few users contribute disproportionately, their updates could be ignored for the sake of their privacy until the next contribution from another user. This restriction naturally affects the computation of sensitivity, which we refine as
\begin{equation}
    \text{sens}_{k, b}(C) \;=\; \sup_{G \sim G'} \, \|CG - CG'\|_F,
\end{equation}
where $G$ and $G'$ differ in the participations of a single user, with the corresponding rows separated by at least $b$.
We then generalize the notion of MeanSE error to the multi-participation setting:
\begin{equation}
    \mathcal{E}(B, C) \;=\; \frac{1}{\sqrt{n}} \, \|B\|_F \cdot \text{sens}_{k, b}(C).
\end{equation}
In this section, we establish both upper and lower bounds on the optimal value $\mathcal{E}(B, C)$ among all factorizations,
for the learning-rate workload.
This extends the results of \citep{kalinin2025back} on SGD with momentum and weight-decay workloads to the non-Toeplitz case. For the prefix-sum workload, it was shown that the \textbf{Banded Inverse Square Root (BISR)} factorization is asymptotically optimal in the multi-participation setting. The BISR is defined as follows: given a workload matrix $A$, we compute the square root of its inverse, $C = A^{-1/2}$, band it to width $p$ by nullifying all elements below the $p$-th diagonal and then invert the result. The corresponding correlation matrix is denoted $C^{p}$. Then there exists a unique matrix $B^{p}$ such that $B^{p} C^{p} = A$. By using the BISR matrix corresponding to the prefix-sum workload $A_1$, we establish a general upper bound in the multi-participation setting for workloads with learning rates $A_{\chi}$.

\begin{restatable}{theorem}{thmPrefixErrorMulti}\label{thm:prefix-error-multi}
%Let $(\chi_t)_{t=1}^n$ be a sequence on $[\beta, \infty)$ for constant $\beta > 0$.
%Fix $n \geq 2$ and define
%\begin{equation*}
%    \Delta_t = \lvert \chi_t - \chi_{t+1} \rvert \qquad (\text{for all}\ 1\leq t\leq n-1).
%\end{equation*}
%If either of the following two condition holds:
%\begin{align*}
%    \Delta_t = O\left(\frac{1}{t(1+\log t)}\right)\quad(\text{for all}\ 1\leq t\leq n-1),
%    \qquad\emph{or}\qquad
%    \sqrt{\sum_{t=1}^{n-1} \Delta_t^2 } = o\left(\frac{\log p}{\sqrt{p}}\right),
%\end{align*}
Under the same assumptions on learning rate scheduling $\chi_t$ as in Theorem~\ref{thm:prefix-error-single}, the following holds.
\begin{align}
    \mathcal{E}(B_{\chi}^p, C_1^{p}) 
    = \mathcal{O}\left(\sqrt{\frac{k}{n}\left(\log p + \frac{p}{b}\right)\sum_{m=1}^{n} \left[\chi_m^2\log\left(\min\{m,p\}\right) + \frac{1}{p}\sum_{t=p}^{m-1} \chi_t^2\right]}\right).
\end{align}
\end{restatable}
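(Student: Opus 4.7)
The plan is to bound $\mathcal{E}^2(B_\chi^p, C_1^p) = \tfrac{1}{n}\|B_\chi^p\|_F^2\cdot\mathrm{sens}_{k,b}^2(C_1^p)$ by handling the two factors separately. The multi-participation sensitivity of the BISR correlation matrix for the prefix-sum workload was already established in \citet{kalinin2025back} as $\mathrm{sens}_{k,b}^2(C_1^p) = O(k\log p + pk/b)$, which is precisely the outer factor in the statement. So the whole argument reduces to showing $\|B_\chi^p\|_F^2 = O\!\left(\sum_{m=1}^{n}\bigl[\chi_m^2\log(\min\{m,p\}) + \tfrac{1}{p}\sum_{t=p}^{m-1}\chi_t^2\bigr]\right)$.

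My first step is to derive a usable row-by-row expression. Writing $B_\chi^p = A_1 D_\chi (C_1^p)^{-1}$ and letting $\tilde r_t$ denote the $t$-th subdiagonal of $A_1^{-1/2}$ (with $|\tilde r_t|=r_t$), one gets $(B_\chi^p)_{m,l} = \chi_l + \sum_{t=1}^{\min(m-l,\,p-1)}\tilde r_t\,\chi_{t+l}$, and summation by parts in $t$ rewrites this as
\begin{equation*}
(B_\chi^p)_{m,l} \;=\; \chi_m\, r_{\min(m-l,\,p-1)} \;+\; \sum_{t=0}^{m-l-1}(\chi_{l+t}-\chi_{l+t+1})\,r_{\min(t,\,p-1)},
\end{equation*}
which is the workhorse identity for the remainder of the proof.

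I then split $\|B_\chi^p\|_F^2$ into the near region $m-l<p$ and the far region $m-l\ge p$. In the near region both the leading factor and every weight in the difference sum reduce to $r_{m-l}$ and $r_t$ respectively, so the identity is literally the single-epoch formula used in the proof of Theorem~\ref{thm:prefix-error-single} with $p$ in place of the horizon; the same $(a+b)^2\le 2a^2+2b^2$ split, together with $\sum_{j=0}^{p-1}r_j^2 = \Theta(\log p)$ and the difference-sum control provided by the hypotheses on $\Delta_t$, produces a row-$m$ contribution of order $\chi_m^2\log(\min\{m,p\})$. In the far region the tail $t\ge p-1$ is flat ($r_{\min(t,p-1)}=r_{p-1}$) and telescopes, so the identity collapses to $(B_\chi^p)_{m,l} = \chi_{l+p-1}\,r_{p-1} + \sum_{t=0}^{p-2}(\chi_{l+t}-\chi_{l+t+1})\,r_t$. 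The leading piece, combined with $r_{p-1}^2 = \Theta(1/p)$, yields $\sum_{l=1}^{m-p}\chi_{l+p-1}^2 r_{p-1}^2 = \Theta\!\bigl(\tfrac{1}{p}\sum_{t=p}^{m-1}\chi_t^2\bigr)$, which is exactly the second summand in the theorem; the residual inner-difference term is absorbed by Cauchy--Schwarz, $\bigl(\sum_t\Delta_{l+t}r_t\bigr)^2\le\bigl(\sum_t r_t^2\bigr)\bigl(\sum_t\Delta_{l+t}^2\bigr)$, combined with $\sum_s\Delta_s^2 = o(\log n/n)$, and turns out to be lower order.

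The step I expect to be the main obstacle is the far-region analysis. The telescoping trick cleanly produces the right leading term, but controlling the surviving inner difference sum uniformly across all window starts $l$ requires that the global hypothesis on $\Delta_t$ still give useful bounds on every length-$p$ subwindow, and for schedules that vary slowly near the boundary (e.g.\ cosine) one has to be careful not to lose a $\log$ factor when converting Cauchy--Schwarz into a global sum over $m$. Once both regimes are assembled and multiplied by $\mathrm{sens}_{k,b}^2(C_1^p)/n$, the stated bound follows.
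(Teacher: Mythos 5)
Your overall plan matches the paper's: the same product structure $\mathcal{E}^2 = \tfrac{1}{n}\|B_\chi^p\|_F^2\cdot\mathrm{sens}_{k,b}^2(C_1^p)$, the same citation for the sensitivity factor, the same summation-by-parts identity for $(B_\chi^p)_{m,l}$, the same near/far split at $m-l = p-1$, and the same telescoping collapse in the far region producing the $\chi_{l+p-1}r_{p-1}$ leading term. Up through that point your reconstruction is correct.

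The gap is exactly where you flagged it, in controlling the surviving difference sums, and it is a real one. Your proposed Cauchy--Schwarz bound
$\bigl(\sum_t\Delta_{l+t}r_t\bigr)^2\le\bigl(\sum_t r_t^2\bigr)\bigl(\sum_t\Delta_{l+t}^2\bigr)$,
combined with the hypothesis $\|\Delta\|_2^2 = o(\log n/n)$, gives a per-row error of order $\log p\cdot(m-p)\|\Delta\|_2^2$, and after summing over $m$ and normalizing by $n$ this contributes roughly $\tfrac{(n-p)^2\log p}{n^2}\cdot o(\log n)$, which for $p$ of order $n$ is $o(\log^2 n)$ --- strictly larger than the $\Omega(\log n)$ main term it is supposed to be absorbed into, so the argument does not close for mid-range $p$. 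Separately, this route only uses alternative (ii) of Theorem~\ref{thm:prefix-error-single}'s hypothesis; alternative (i), $\Delta_t = O(1/(t(1+\log t)))$, is not implied by (ii) (under (i) one only gets $\|\Delta\|_2^2 = O(1)$), so the proof would be incomplete for that branch. The cleaner and tighter route, which is what the paper does, avoids Cauchy--Schwarz entirely: because banding $A_1^{-1/2}$ to width $p$ only replaces $r_{\min(t,p-1)}$ by a nonnegative quantity no larger than $r_t$ in every summand, the per-row banded difference quantity $Q_1+Q_2$ is pointwise dominated by the \emph{unbanded} single-epoch quantity $Q$ of Lemma~\ref{lem:prefix-error-single-mech}, which is already shown to be $o(\log n)$ under \emph{both} hypotheses in Lemmas~\ref{lem:prefix_sum_error_uniform} and~\ref{lem:prefix_sum_error_nonuniform}. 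Since the normalized main term is $\Omega(\log n)$ uniformly over $p$, the $O(Q)$ residual is absorbed without any loss of $\log$ or $p$ factors. If you replace your Cauchy--Schwarz step by this monotonicity observation, the rest of your argument goes through.
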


For exponential decay the upper bound (after optimizing over $p$) has the following form:
\begin{restatable}{corollary}{corPrefixErrorMultiExp}\label{cor:prefix-error-multi-exp}
Let $\chi_t = \beta^{\frac{t - 1}{n - 1}}$ with $\beta \in (0, 1/e)$.
Then, in multi-participation with $b$-min-separation and at most $k = \lceil\frac{n}{b}\rceil$ participations, we have for $p^* = O(b\log b)$ the following optimized upper bound:
\begin{equation}
        \mathcal{E}(B_{\chi}^p, C_1^p)
        = \mathcal{O}\left(\tfrac{\sqrt{k}\log n + k}{\sqrt{\log(1/\beta)}} \right).
    \end{equation}
\end{restatable}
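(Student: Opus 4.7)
The plan is to specialize Theorem~\ref{thm:prefix-error-multi} to the exponential schedule and then optimize the resulting expression over the bandwidth $p$. Let $\alpha = \beta^{1/(n-1)}$, so that $\chi_t = \alpha^{t-1}$. The first observation to bank on is that $1 - \alpha^2 = 1 - \beta^{2/(n-1)} = \Theta(\log(1/\beta)/n)$ for $\beta \in (0, 1/e)$ and $n$ large, which gives the geometric-series estimate
\begin{equation}
    \sum_{m=1}^{n} \chi_m^2 \;=\; \frac{1-\alpha^{2n}}{1-\alpha^2} \;=\; \mathcal{O}\!\left(\frac{n}{\log(1/\beta)}\right).
\end{equation}
This one bound drives everything, because it controls both parts of the bracketed sum inside Theorem~\ref{thm:prefix-error-multi}.

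Next, I would bound the two pieces of the sum separately. For the logarithmic term, the crude but sufficient estimate $\log(\min\{m,p\}) \le \log p$ gives
$\sum_{m=1}^{n} \chi_m^2 \log(\min\{m,p\}) = \mathcal{O}(n\log p/\log(1/\beta))$.
For the tail term, I would swap the order of summation,
\begin{equation}
    \frac{1}{p}\sum_{m=1}^{n}\sum_{t=p}^{m-1}\chi_t^2 \;=\; \frac{1}{p}\sum_{t=p}^{n-1}(n-t)\,\chi_t^2 \;\le\; \frac{n}{p}\sum_{t=p}^{n-1}\chi_t^2 \;=\; \mathcal{O}\!\left(\frac{n^2}{p\log(1/\beta)}\right),
\end{equation}
again by the same geometric-series bound (a sharper estimate using $\alpha^{2(p-1)}$ is available but not needed for the target rate). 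Plugging both into Theorem~\ref{thm:prefix-error-multi} yields
\begin{equation}
    \mathcal{E}(B_\chi^p, C_1^p)^2 \;=\; \mathcal{O}\!\left(\frac{k\,(\log p + p/b)}{\log(1/\beta)}\left[\log p + \frac{n}{p}\right]\right).
\end{equation}

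Finally, I would set $p^* = b\log b$ (so $p^*/b = \log b = \Theta(\log p^*)$) and substitute. Then $\log p^* + p^*/b = \mathcal{O}(\log b)$, and $\log p^* + n/p^* = \mathcal{O}(\log b + k/\log b)$ using $k = n/b$, so the squared error collapses to
\begin{equation}
    \mathcal{E}(B_\chi^{p^*}, C_1^{p^*})^2 \;=\; \mathcal{O}\!\left(\frac{k\log b\,(\log b + k/\log b)}{\log(1/\beta)}\right) \;=\; \mathcal{O}\!\left(\frac{k\log^2 b + k^2}{\log(1/\beta)}\right),
\end{equation}
and taking square roots together with $\log b \le \log n$ gives the claimed bound. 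The main obstacle I anticipate is justifying the choice $p^* = b\log b$ as actually (near-)optimal rather than just sufficient: one needs to check that the balance between the $\log^2 p$ term and the $(p/b)\cdot (n/p) = n/b$ term occurs near this value, and to verify that competing regimes (very small $b$, or very small $k$ where $\log b$ could dominate $\log n$ pathologically) do not produce a larger contribution; this amounts to a short case analysis around the minimizer of $(\log p + p/b)(\log p + n/p)$.
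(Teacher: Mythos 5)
Your proof is correct and takes essentially the same approach as the paper: specialize Theorem~\ref{thm:prefix-error-multi} to $\chi_t = \alpha^{t-1}$, bound both bracketed terms via the geometric-series estimate $\sum_m \alpha^{2(m-1)} = \Theta(n/\log(1/\beta))$ after swapping summation order, arrive at the form $\mathcal{O}\bigl(\sqrt{\tfrac{k}{\log(1/\beta)}(\log p + p/b)(\log p + n/p)}\bigr)$, and substitute $p^* \sim b\log b$. The only difference is cosmetic: the paper observes that this expression matches \cite[Theorem~2]{kalinin2025back} up to the $1/\sqrt{\log(1/\beta)}$ factor and imports the optimization wholesale, whereas you carry out the substitution $p^* = b\log b$ by hand. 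Your worry at the end about justifying $p^*$ as a true minimizer is unnecessary: the corollary asserts an upper bound \emph{for} $p^* \sim b\log b$, so sufficiency (not optimality) is all that is required, and the potential pathology you flag (``$\log b$ dominating $\log n$'') cannot arise since $b \le n$ always gives $\log b \le \log n$.
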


We prove a general lower bound for multi-participation error with arbitrary learning rate scheduling. 
\begin{restatable}[Lower bound for multi-participation]{theorem}{lowerboundmulti}
\label{thm:lower-multi}
Let $A_{\chi} = A_1 D_{\chi}$, where $D_{\chi} = \mathrm{diag}(\chi_1,\dots,\chi_n)$ with positive
$\chi_t > 0$. Assume any factorization $A_{\chi} = B\times C$. Then, in multi-participation with $b$-min-separation and at most $k = \lceil\frac{n}{b}\rceil$ participations, we have
\begin{equation}
    \mathcal{E}(B, C) \;\ge\; \max\left\{\max_{t \le n} \frac{\sqrt{k} \, t \, \chi_t \, }{\pi \sqrt{2} n}\,(\min\limits_{j \le t} \chi_j)\log(t), \; \sum\limits_{j = 0}^{k - 1}\chi_{1 + jb}\left(1 - \frac{j}{k - 1}\right)\right\}.
\end{equation}
\end{restatable}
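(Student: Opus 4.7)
I would prove the two lower bounds inside the $\max$ separately, and both arguments start from a Cauchy--Schwarz identity for a single factorization $A_{\chi} = BC$: for any $v \in \mathbb{R}^{n}$ and any valid participation indicator $w = \mathbbm{1}_{S}$,
\begin{equation*}
|v^{\top} A_{\chi} w| = |v^{\top} B C w| \le \|B^{\top} v\|_{2}\,\|C w\|_{2} \le \|v\|_{2}\,\|B\|_{F}\,\text{sens}_{k,b}(C),
\end{equation*}
which rearranges to $\mathcal{E}(B,C) \ge |v^{\top} A_{\chi} w|/(\sqrt{n}\,\|v\|_{2})$. For the matrix lifting, one sets $W = [\mathbbm{1}_{S_{1}}\,|\,\cdots\,|\,\mathbbm{1}_{S_{r}}]$ with valid participation patterns and uses $\|B C W\|_{*} \le \|B\|_{F}\,\|C W\|_{F} \le \|B\|_{F}\sqrt{r}\,\text{sens}_{k,b}(C)$, yielding $\mathcal{E}(B,C) \ge \|A_{\chi} W\|_{*}/\sqrt{nr}$.

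For the summation bound I would take $v = \mathbbm{1}_{n}$ (so $\|v\|_{2} = \sqrt{n}$) and the canonical pattern $S = \{1, 1+b, \ldots, 1+(k-1)b\}$. A direct calculation gives
\begin{equation*}
v^{\top} A_{\chi} w = \sum_{j \in S}\chi_{j}(n - j + 1) = \sum_{\ell=0}^{k-1}\chi_{1+\ell b}(n - \ell b),
\end{equation*}
hence $\mathcal{E}(B,C) \ge \sum_{\ell=0}^{k-1}\chi_{1+\ell b}(1 - \ell b/n)$. Since $k = \lceil n/b\rceil$ forces $b/n \le 1/(k-1)$, one has $1 - \ell b/n \ge 1 - \ell/(k-1)$, giving the claimed second term.

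For the $\sqrt{k}\,\log t$ bound I would lift the single-participation Schatten-$1$ argument behind Theorem~\ref{thm:gamma_bounds} to the multi-participation regime. Take the $b$ shifted disjoint patterns $S_{i} = \{i, i+b, \ldots, i+(k-1)b\}$ for $i = 1, \ldots, b$, so $W^{\top} W = k\,I_{b}$, and obtain $\mathcal{E}(B,C) \ge \|A_{\chi} W\|_{*}/\sqrt{nb}$. To lower bound this nuclear norm, restrict to the top $t$ rows: since row-selection contracts the nuclear norm and $A_{\chi}$ is lower triangular, $(A_{\chi} W)_{[1:t,\,:]} = A_{\chi}^{(t)}\,W_{[1:t,\,:]}$, where $A_{\chi}^{(t)}$ denotes the top-left $t\times t$ block of $A_{\chi}$. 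The columns of $W_{[1:t,\,:]}$ are orthogonal indicators of residue classes of $[1,t]$ modulo $b$, each of squared norm $k'(t) = \lceil t/b\rceil$, so $W_{[1:t,\,:]} = \sqrt{k'(t)}\,\tilde{W}$ with $\tilde{W}$ having orthonormal columns, and $\|A_{\chi}^{(t)} W_{[1:t,\,:]}\|_{*} = \sqrt{k'(t)}\,\|A_{\chi}^{(t)}\tilde{W}\|_{*}$. Writing $A_{\chi}^{(t)} = A_{1}^{(t)} D_{t}$ with $\sigma_{\min}(D_{t}) = \min_{j\le t}\chi_{j}$ extracts the $\min_{j\le t}\chi_{j}$ factor, while a Fourier/Toeplitz-style lower bound on $\|A_{1}^{(t)}\tilde{W}\|_{*}$ of order $t\log t$, combined with $\sqrt{k'(t)/(nb)}$ and $kb \ge n$, yields the claimed $\sqrt{k}\,(t\chi_{t}/n)\,(\min_{j\le t}\chi_{j})\,\log t$ shape; the extra $\chi_{t}$ out front arises by tracking the column norm of $A_{\chi}$ near index $t$ through the Schatten-$1$ estimate.

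The main obstacle I expect is exactly this last nuclear-norm lower bound on $A_{1}^{(t)}\tilde{W}$ with the sharp constant $1/(\pi\sqrt{2})$. Although $\|A_{1}^{(t)}\|_{*} \asymp t\log t/\pi$ is well understood via the Fourier analysis of the discrete antiderivative (cf.\ \citet{fichtenberger2023constant,henzinger2024,henzinger2025normalized}), compressing by an orthonormal $\tilde{W}$ with only $b$ columns can drop a non-trivial constant and must be controlled carefully, particularly because $\tilde{W}$ is the residue-averaging operator and its action on the singular vectors of $A_{1}^{(t)}$ is not perfectly aligned. A secondary subtlety is reconciling the asymmetry between $\chi_{t}$ (reflecting the instantaneous column of $A_{\chi}$) and $\min_{j\le t}\chi_{j}$ (reflecting the spectral bound on $D_{t}$); tying both into one Schatten-$1$ estimate likely requires coupling the choice of restriction index $t$ with the participation patterns rather than optimising them independently.
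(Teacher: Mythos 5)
Your argument for the second term (the participation sum) is correct and slightly more direct than the paper's: you apply Cauchy--Schwarz with a fixed test vector $v=\mathbbm{1}_n$ and a single valid participation indicator $w$, whereas the paper invokes $\mathcal{E}(B,C) \ge \frac{1}{\sqrt{n}}\|A_\chi\pi_1\|_2$ from \citet{kalinin2025back} and then lower-bounds the resulting $\ell_2$ norm. Both yield $\sum_\ell\chi_{1+\ell b}(1-\ell b/n)\ge\sum_\ell\chi_{1+\ell b}(1-\ell/(k-1))$ using $b(k-1)\le n$; no concern there.

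The first term is where your plan breaks, and not in the constant-factor way you anticipated. Your lifted inequality $\mathcal{E}(B,C)\ge\|A_\chi W\|_*/\sqrt{nb}$ is correct, but because $W$ has only $b$ columns the matrix $A_\chi^{(t)}\tilde{W}$ has rank at most $b$, so
\begin{equation*}
\|A_1^{(t)}\tilde{W}\|_* \;\le\; \sum_{j=1}^{b}\sigma_j\bigl(A_1^{(t)}\bigr) \;=\; \Theta(t\log b)\,,
\end{equation*}
using the known asymptotics $\sigma_j(A_1^{(t)})\asymp t/j$. When $b\ll t$ this is genuinely smaller than the $t\log t/\pi$ you need, by a factor of $\log t/\log b$, and no clever choice of orthonormal $\tilde{W}$ with $b$ columns can close this gap. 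Accepting $t\log b$ instead of $t\log t$, and using $k'(t)\approx t/b$, your chain gives something of order $t^{3/2}\log b/(b\sqrt{n})\cdot\min_j\chi_j$, which does not recover the claimed $\sqrt{k}\,t\log t/(\pi\sqrt{2}\,n)\cdot\min_j\chi_j$. The paper circumvents the rank bottleneck entirely by never committing to a fixed rank-$b$ probe: it restricts $C$ to its first $t$ columns, invokes the Frobenius lower bound on sensitivity, $\text{sens}_{k,b}(C_{:,:t})\ge\frac{1}{\sqrt{2b}}\|C_{:,:t}\|_F$ (Lemma~9 of \citet{kalinin2025back}), and then applies the Schatten--H\"older inequality $\|B_{:t,:}\|_F\|C_{:,:t}\|_F\ge\|(A_\chi)_{:t,:t}\|_*$ to reach the full nuclear norm of the $t\times t$ principal submatrix, which is lower-bounded via Lemma~\ref{lem:gammaF_lower_bound}. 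The essential missing ingredient in your plan is this sensitivity-to-Frobenius reduction; without it, a fixed $b$-column lift cannot see the full $t\log t$ nuclear mass.
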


For the exponential learning rate decay we can simplify the lower bound in the following Corollary.

\begin{restatable}{corollary}{geolowerbound}
\label{cor:geo-lower}
Let $\chi_k = \beta^{\frac{k - 1}{n - 1}}$ with $\beta \in (0, 1/e)$. Then Theorem~\ref{thm:lower-multi} yields
\begin{equation}
    \mathcal{E}(B, C) = \Omega\left(\frac{\sqrt{k}}{\log (1/\beta)} \log \frac{n}{\log (1/\beta)} + \frac{k}{\log (1/\beta)}\right).
\end{equation}
\end{restatable}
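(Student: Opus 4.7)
Since the exponential schedule $\chi_t = \alpha^{t-1}$ with $\alpha = \beta^{1/(n-1)}$ is monotonically decreasing in $t$, the innermost minimum in Theorem~\ref{thm:lower-multi} simplifies to $\min_{j\le t}\chi_j = \chi_t$. Writing $T_1 := \max_{t\le n}\frac{\sqrt{k}\,t\,\chi_t^2\,\log t}{\pi\sqrt{2}\,n}$ and $T_2 := \sum_{j=0}^{k-1}\chi_{1+jb}(1 - j/(k-1))$, Theorem~\ref{thm:lower-multi} yields $\mathcal{E}(B,C) \ge \max\{T_1,T_2\} \ge (T_1+T_2)/2$, so it suffices to lower-bound $T_1$ and $T_2$ separately by the corresponding summands in the claimed bound.

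For $T_1$, I evaluate at the near-optimizer $t^\star := \lfloor (n-1)/(2\log(1/\beta))\rfloor + 1$. The hypothesis $\beta \in (0,1/e)$ gives $\log(1/\beta) > 1$, so $t^\star \le (n+1)/2 \le n$, and for $n$ larger than an absolute constant multiple of $\log(1/\beta)$ one also has $t^\star \ge 2$. By construction, $2(t^\star - 1)/(n-1) \le 1/\log(1/\beta)$, which produces the constant lower bound
\begin{equation*}
\chi_{t^\star}^2 \;=\; \beta^{2(t^\star - 1)/(n-1)} \;\ge\; \beta^{1/\log(1/\beta)} \;=\; e^{-1}.
\end{equation*}
Combined with $t^\star = \Theta(n/\log(1/\beta))$, hence $\log t^\star = \Theta(\log(n/\log(1/\beta)))$, substitution produces $T_1 = \Omega\!\left(\frac{\sqrt{k}}{\log(1/\beta)}\,\log\frac{n}{\log(1/\beta)}\right)$.

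For $T_2$, write $\chi_{1+jb} = \beta^{jb/(n-1)}$ and set $\lambda := (k-1)b/(n-1)$. Since $k \le n/b + 1$, we have $(k-1)b \le n$, hence $\lambda \le n/(n-1) \le 2$ (for $n \ge 2$). The function $x \mapsto \beta^{\lambda x}(1-x)$ is strictly decreasing on $[0,1]$, so the sum majorises the corresponding left Riemann approximation, giving $T_2 \ge (k-1)\int_0^1 \beta^{\lambda x}(1-x)\,dx \ge (k-1)\int_0^1 \beta^{2x}(1-x)\,dx$ via $\lambda \le 2$. A routine integration by parts (writing $\gamma = 2\log(1/\beta)$) yields
\begin{equation*}
\int_0^1 \beta^{2x}(1-x)\,dx \;=\; \frac{1}{\gamma} - \frac{1-\beta^2}{\gamma^2} \;\ge\; \frac{1}{2\gamma} \;=\; \frac{1}{4\log(1/\beta)},
\end{equation*}
where the middle inequality uses $\gamma \ge 2 \ge 2(1-\beta^2)$. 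Therefore $T_2 = \Omega(k/\log(1/\beta))$, and combining with the bound on $T_1$ gives the claim.

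The main technical point is the second step: the integral $\int_0^1 \beta^{2x}(1-x)\,dx$ is a difference of two positive terms of comparable sizes, and the correction could in principle cancel the main term. The hypothesis $\beta < 1/e$ (equivalently $\log(1/\beta) \ge 1$) is precisely what pins the quantity $\beta^{1/\log(1/\beta)} = e^{-1}$ used in both steps and simultaneously keeps $\gamma \ge 2(1-\beta^2)$; without it, both contributions $T_1$ and $T_2$ could in principle collapse. Everything else is bookkeeping with monotone schedules and a Riemann sum comparison.
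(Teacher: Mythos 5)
Your proof is correct, and its overall structure matches the paper's: evaluate both branches of Theorem~\ref{thm:lower-multi} and combine via $\max\{T_1,T_2\} \ge (T_1+T_2)/2$. For $T_1$ you choose $t^\star = \lfloor(n-1)/(2\log(1/\beta))\rfloor + 1$ while the paper uses $t^\star = \lceil n/\log(1/\beta)\rceil$; both give $\chi_{t^\star} = \Theta(1)$ and $\log t^\star = \Theta\bigl(\log(n/\log(1/\beta))\bigr)$, so they are equivalent. (You carry the extra $\chi_t$ factor from the theorem statement so that $T_1$ contains $\chi_t^2$, whereas the paper's own proof uses only one $\chi_t$; the proof of Theorem~\ref{thm:lower-multi} in fact only derives a single $\min_{j\le t}\chi_j$ factor, so the extra one in the displayed statement looks like a typo -- but since $\chi_{t^\star}=\Theta(1)$ this makes no difference to the corollary.)

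The one genuine divergence is the treatment of $T_2$. The paper evaluates the arithmetic--geometric sum in closed form and reads off $T_2 \sim 1/(1-\alpha^b) \sim n/(b\log(1/\beta))$ asymptotically. You instead recognize the summand as samples of the decreasing function $\beta^{\lambda x}(1-x)$ at a regular grid, compare the sum to the left Riemann sum of $\int_0^1\beta^{\lambda x}(1-x)\,dx$, replace $\lambda \le 2$ by $2$ to lose only a constant, and lower-bound the resulting integral by integration by parts. Your route is more elementary (no exact geometric-series evaluation) and, usefully, makes explicit the role of the hypothesis $\beta < 1/e$: it gives $\gamma = 2\log(1/\beta) > 2 > 2(1-\beta^2)$, which is precisely what keeps the correction term $(1-\beta^2)/\gamma^2$ from cancelling the main term $1/\gamma$. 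The minor price is that your final bound $T_2 \ge (k-1)/(4\log(1/\beta))$ is vacuous at $k=1$, but this corner case is already excluded implicitly since the theorem's sum contains the factor $1 - j/(k-1)$.
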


For the numerical comparison in the multi-participation we study several recently proposed memory-efficient factorizations. Including banded matrix factorization \cite{mckenna2024scaling}, banded inverse factorization BandInvMF and BISR \citep{kalinin2025back} and Buffered Linear Toeplitz (BLT) \citep{mcmahan2024hassle}. We can optimize banded and banded inverse matrices, accounting for the learning rate decay, as well as like if it was a prefix-sum workload with constant learning rate, we refer to this difference as ``w/ LRS'' and ``w/o LRS''. See the plots in the Figure~\ref{fig:multi_participation} for the exponential decay, and Figure~\ref{fig:multi_scheduler} in the Appendix for other learning rate schedulers.

\begin{figure}[t]
    \centering
    \begin{subfigure}{0.47\linewidth}
        \centering
        \includegraphics[width=\linewidth]{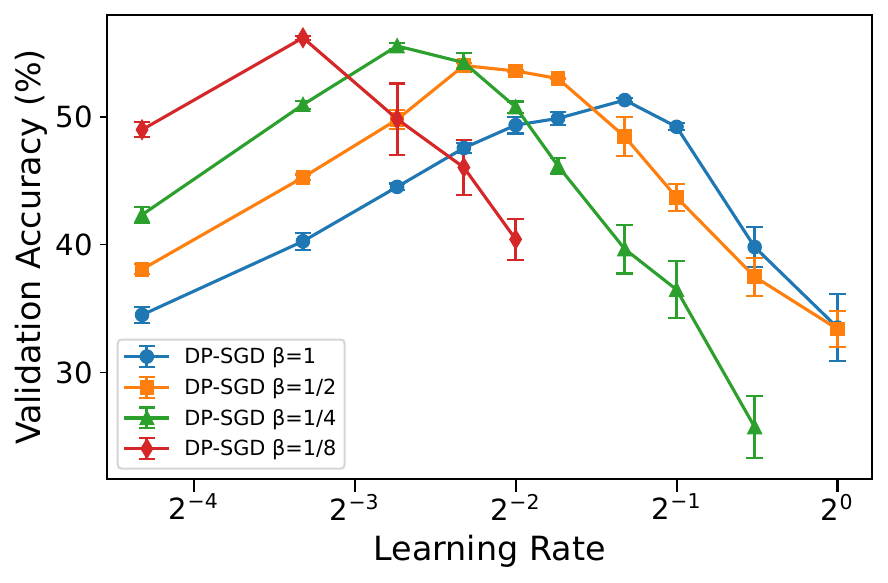}
        \caption{Validation accuracy with DP-SGD.}
    \end{subfigure}
    \hfill
    \begin{subfigure}{0.48\linewidth}
        \centering
        \includegraphics[width=\linewidth]{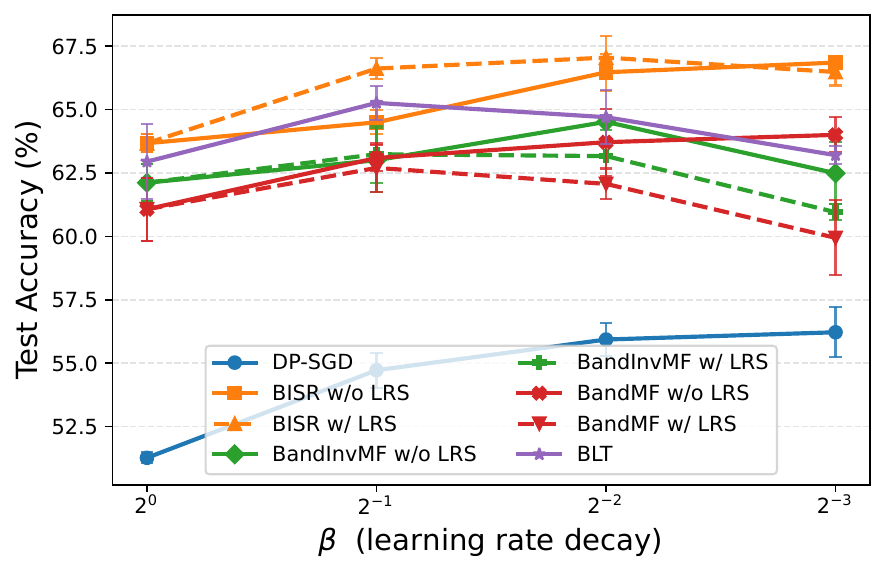}
        \caption{Test accuracy with different matrix factorizations.}
    \end{subfigure}
    
    \caption{\textbf{CIFAR-10} results under $(9,10^{-5})$-differential privacy. 
(a) Validation accuracy with \textbf{exponential learning rate scheduling} for different learning rates in DP-SGD. 
We report the points corresponding to the lowest learning rate; for example, a learning rate of $1/2$ for $\beta = 1/4$ indicates that training starts with a learning rate of $2$ and decays to $1/2$.  
(b) Test accuracy across different matrix factorizations with \textbf{exponential learning rate scheduling}. 
Training hyperparameters are provided in Table~\ref{tab:mf-hparams}.
}
    \label{fig:cifar10_validation}
\end{figure}

\begin{figure}[t]
    \centering
    \begin{subfigure}[t]{0.48\linewidth}
        \centering
        \includegraphics[width=\linewidth]{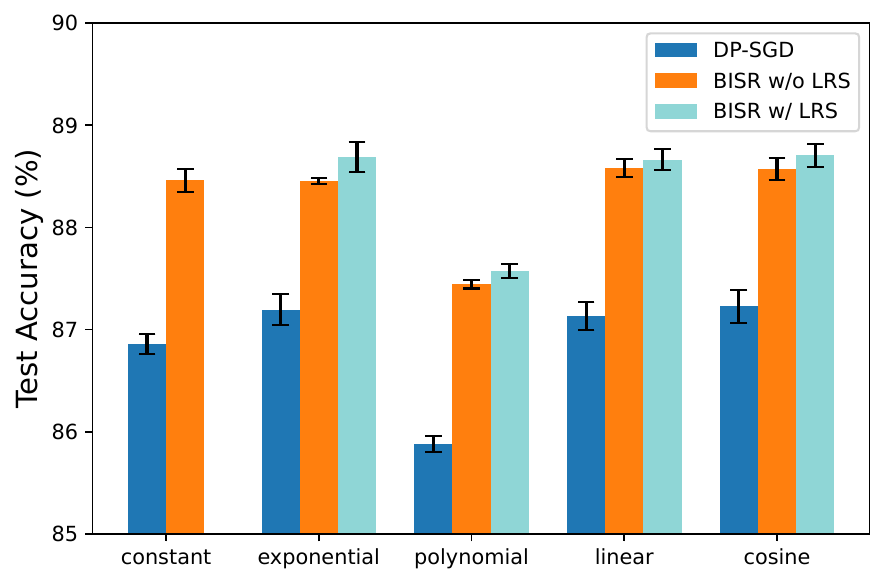}
        \subcaption{BERT-base on the IMDB dataset ($\varepsilon = 4$)}
        \label{fig:bert-imdb-lr}
    \end{subfigure}
    \hfill
    \begin{subfigure}[t]{0.48\linewidth}
        \centering
        \includegraphics[width=\linewidth]{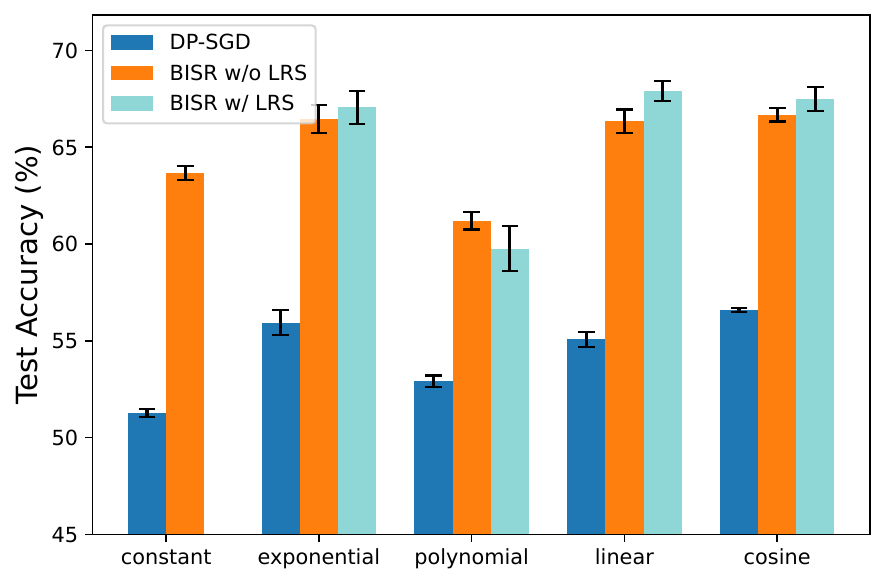}
        \subcaption{CNN on the CIFAR-10 dataset ($\varepsilon = 9$)}
        \label{fig:cnn-cifar10-lr}
    \end{subfigure}
    \caption{ Test accuracy of different learning rate schedulers for (a) BERT-base on IMDB and (b) CNN on CIFAR-10 under differential privacy with $\varepsilon = 4$ and $\varepsilon = 9$, respectively. 
        Training hyperparameters are listed in Table~\ref{tab:mf-schedulers}.}
    \label{fig:lr-schedulers}
\end{figure}

\section{Experiments}
\label{sec:experiments}

We demonstrate the practical benefits of learning rate scheduling in Figure~\ref{fig:cifar10_validation} on CIFAR-10 dataset. All experiments satisfy $(9,10^{-5})$-DP and use a 3-block CNN trained for 10 epochs with batch size 128 and clipping norm 1. For privacy accounting, we use Poisson subsampling with PLD accounting \citep{koskela2021tight} for DP-SGD and amplification by Ball-in-Bins subsampling with Monte Carlo accounting \citep{choquette2024near} for all matrix mechanisms. Subfigure~(a) shows validation accuracy across different initial learning rates, where exponential learning rate scheduling improves performance compared to DP-SGD with a fixed learning rate ($\beta=1$).  
Subfigure~(b) reports test accuracy using the best learning rate chosen on the validation set. All factorizations benefit substantially from scheduling, and the learning-rate–aware factorization (denoted as BISR w/ LRS) achieves even further improvements. However, optimizing the factorization with respect to learning rate workload does not necessarily lead to additional gains: while RMSE can serve as a proxy for performance, it does not perfectly predict it. In practice, workload optimization increases the added noise per iteration, and this effect is not fully compensated during training due to the non-linearity introduced by large noise. This was also stated as an open problem in a recent survey on matrix factorization~\citet{pillutla2025correlated}.

In Figure~\ref{fig:lr-schedulers}, we compare different learning rate schedulers with a constant one. We observe that learning rate scheduling improves accuracy for DP-SGD in all cases except for the polynomial decay with $\gamma = 2$, which deteriorates performance. The other schedulers substantially improve the accuracy of BISR. Moreover, our proposed learning-rate-aware factorization (BISR w/ LRS) further improves the quality, with the largest improvement for linear LRS, making it a suitable factorization for high-performance private training.

\section{Conclusion and Future Directions}
Learning rate scheduling has been shown to improve convergence in both private and non-private machine learning. In this work, we combine learning rate scheduling with matrix factorization and propose a learning-rate-aware factorization, which in the case of exponential learning rate decay is theoretically shown to improve the error. Through numerical experiments using the MaxSE and MeanSE metrics, as well as CIFAR-10 model training, we demonstrate its benefits.

We have primarily studied learning rate decay, but similar techniques can be applied to warm-starting, where the learning rate is initially small and then gradually increased. Optimization-based approaches for matrix factorization are generally agnostic to the choice of learning rate scheduling, but adapting our learning-rate-aware factorization to this setting may pose extra challenges.

\bibliography{lit}
\appendix
\clearpage

\section{Additional Experiments}

\begin{figure}[h!]
    \centering

    % Second row
    \begin{subfigure}{0.48\linewidth}
        \centering
        \includegraphics[width=\linewidth]{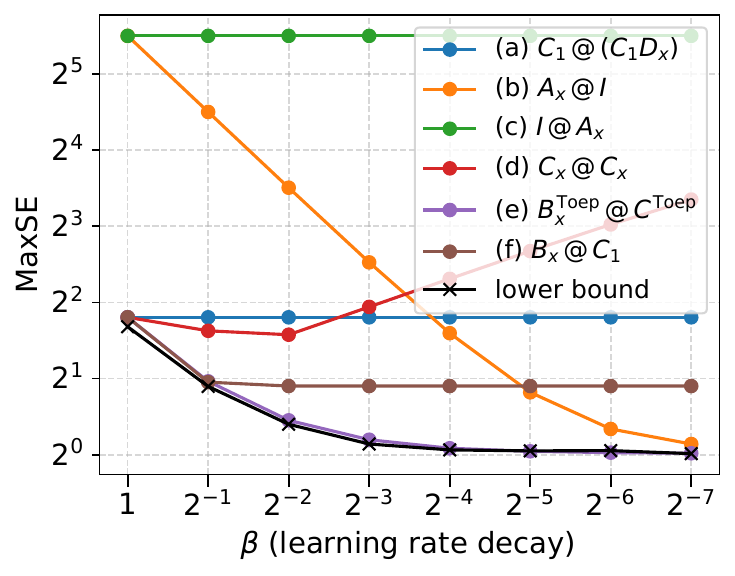}
        \caption{MaxSE, Polynomial $\gamma=2$}
    \end{subfigure}
    \hfill
    \begin{subfigure}{0.48\linewidth}
        \centering
        \includegraphics[width=\linewidth]{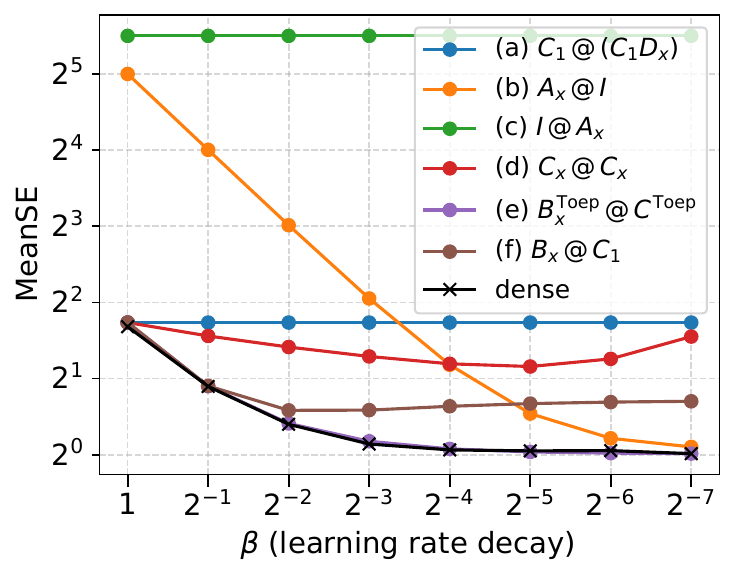}
        \caption{MeanSE, Polynomial $\gamma=2$}
    \end{subfigure}

    % Third row
    \begin{subfigure}{0.48\linewidth}
        \centering
        \includegraphics[width=\linewidth]{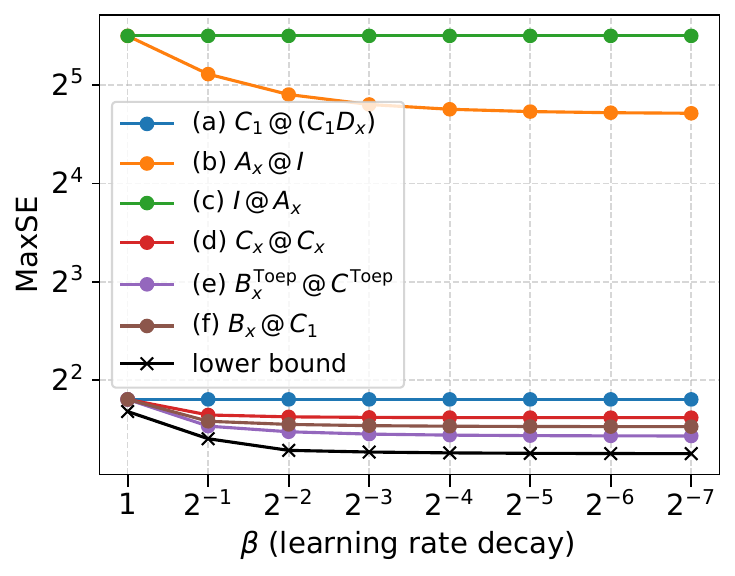}
        \caption{MaxSE, Linear}
    \end{subfigure}
    \hfill
    \begin{subfigure}{0.48\linewidth}
        \centering
        \includegraphics[width=\linewidth]{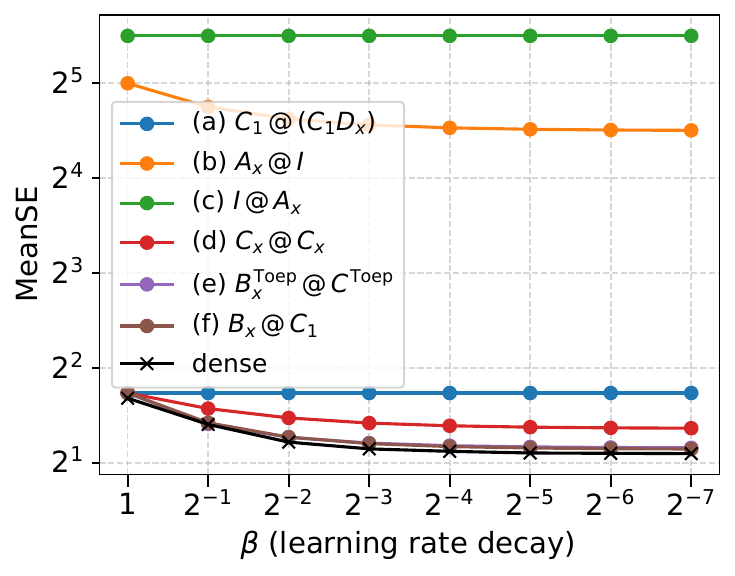}
        \caption{MeanSE, Linear}
    \end{subfigure}

    % Fourth row
    \begin{subfigure}{0.48\linewidth}
        \centering
        \includegraphics[width=\linewidth]{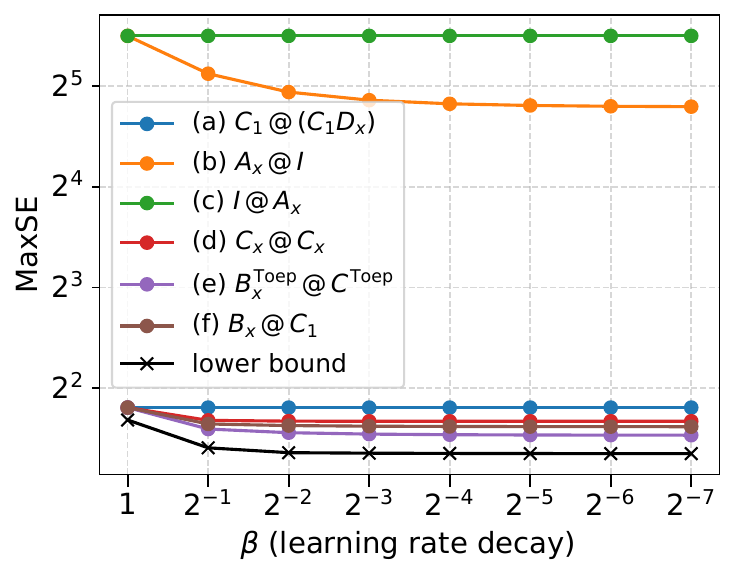}
        \caption{MaxSE, Cosine}
    \end{subfigure}
    \hfill
    \begin{subfigure}{0.48\linewidth}
        \centering
        \includegraphics[width=\linewidth]{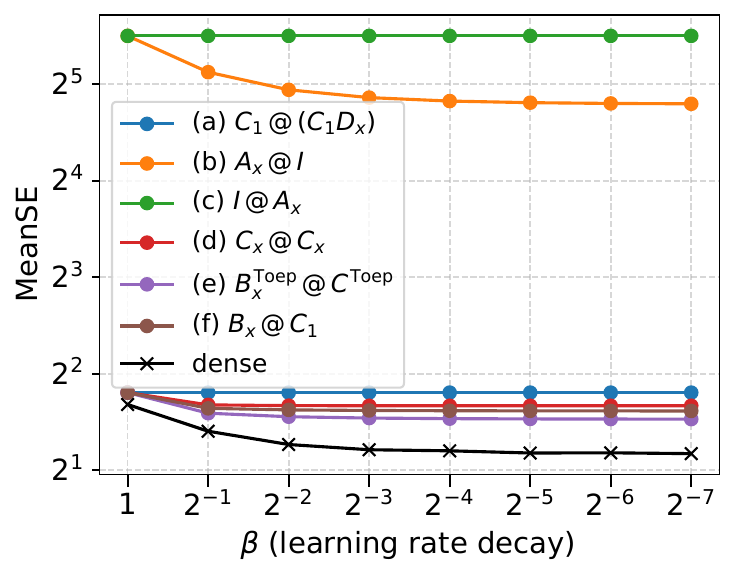}
        \caption{MeanSE, Cosine}
    \end{subfigure}

    \caption{Comparison of different LR schedulers ($n=2048$) in single participation.}
    \label{fig:lr_schedulers}
\end{figure}
\begin{figure}[h!]
    \centering
    % Polynomial
    \begin{subfigure}{0.48\linewidth}
        \centering
        \includegraphics[width=\linewidth]{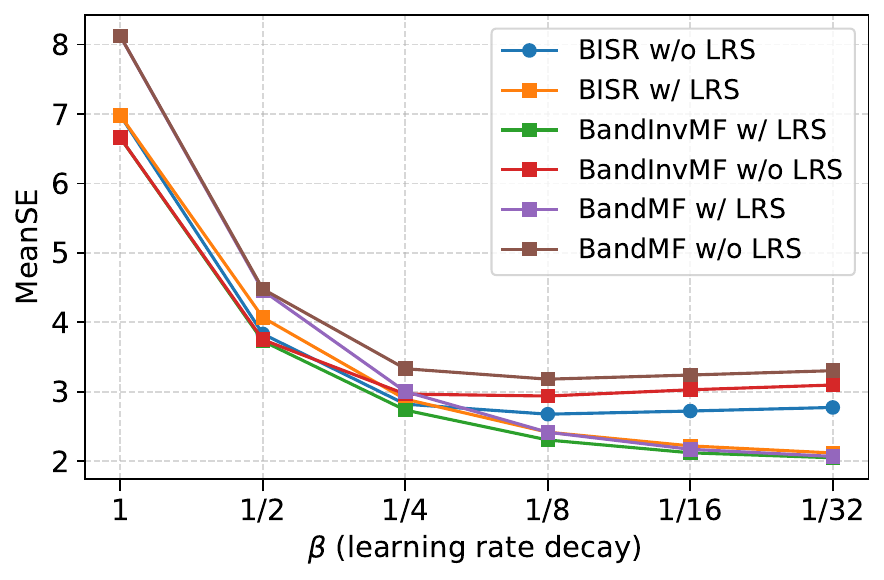}
        \subcaption{Polynomial $\gamma = 2$, $k=4$}
        \label{fig:poly_k4}
    \end{subfigure}
    \hfill
    \begin{subfigure}{0.48\linewidth}
        \centering
        \includegraphics[width=\linewidth]{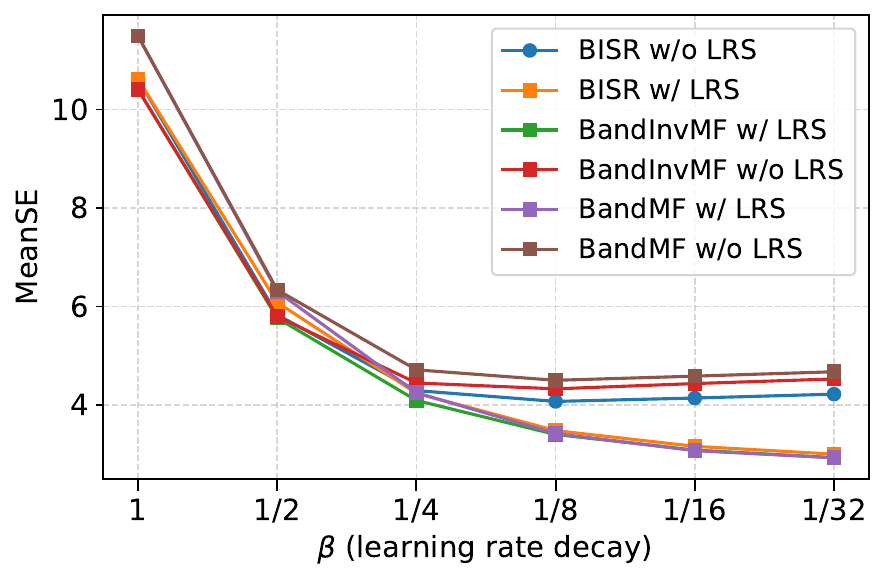}
        \subcaption{Polynomial $\gamma = 2$, $k=8$}
        \label{fig:poly_k8}
    \end{subfigure}

    % Linear
    \begin{subfigure}{0.48\linewidth}
        \centering
        \includegraphics[width=\linewidth]{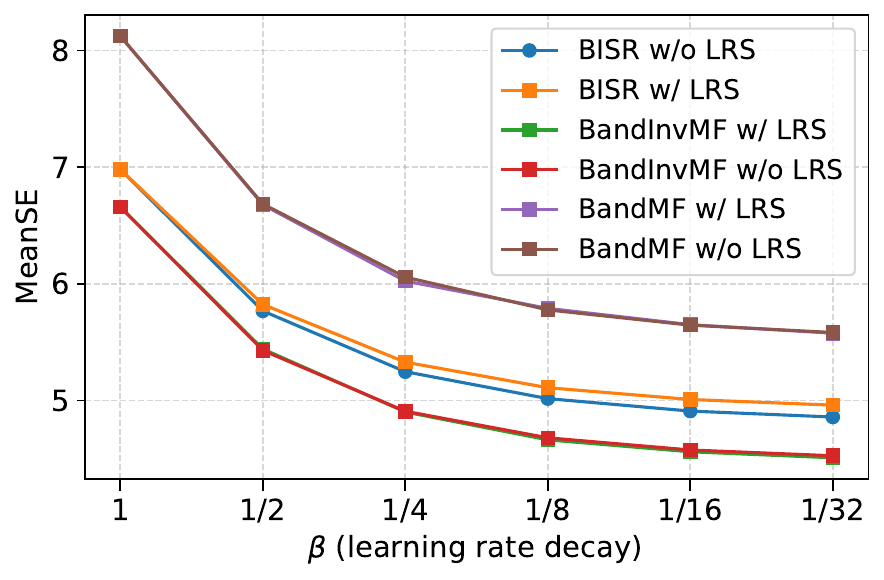}
        \subcaption{Linear, $k=4$}
        \label{fig:linear_k4}
    \end{subfigure}
    \hfill
    \begin{subfigure}{0.48\linewidth}
        \centering
        \includegraphics[width=\linewidth]{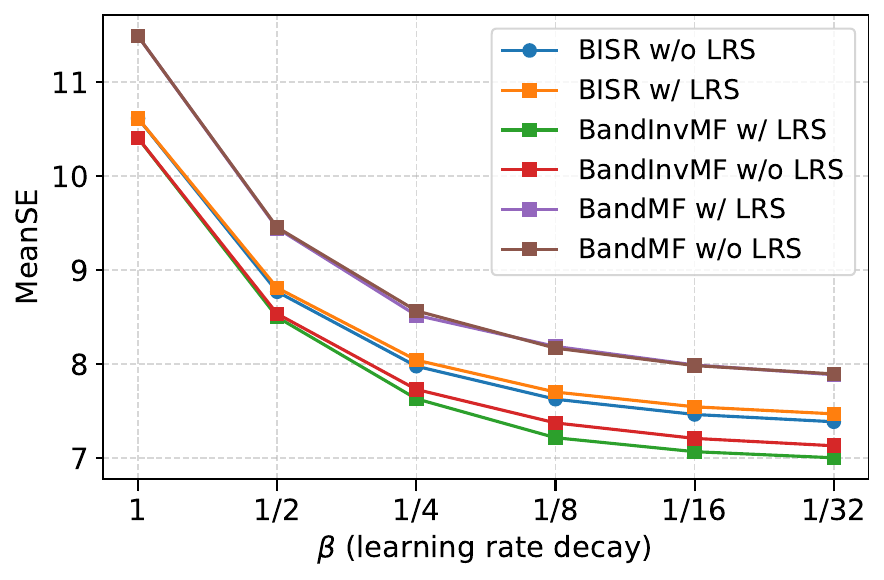}
        \subcaption{Linear, $k=8$}
        \label{fig:linear_k8}
    \end{subfigure}

    % Cosine
    \begin{subfigure}{0.48\linewidth}
        \centering
        \includegraphics[width=\linewidth]{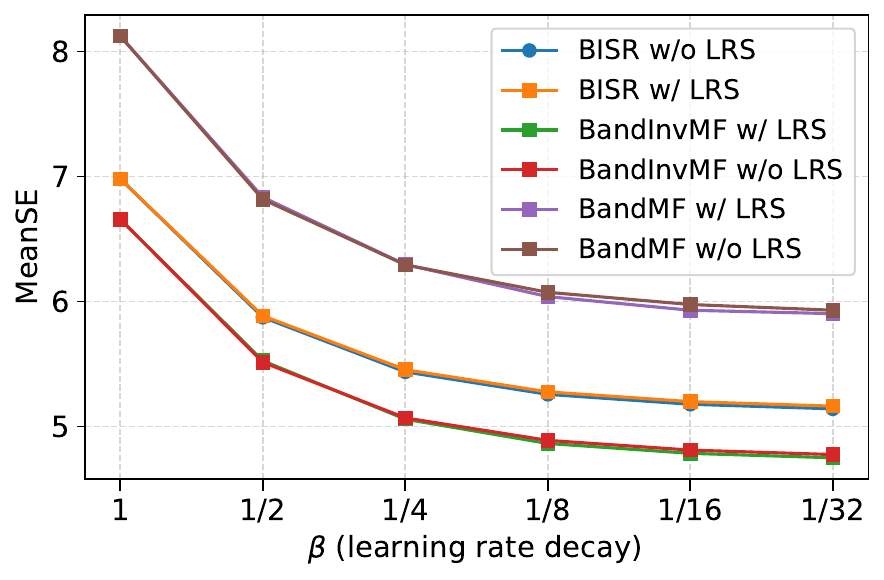}
        \subcaption{Cosine, $k=4$}
        \label{fig:cosine_k4}
    \end{subfigure}
    \hfill
    \begin{subfigure}{0.48\linewidth}
        \centering
        \includegraphics[width=\linewidth]{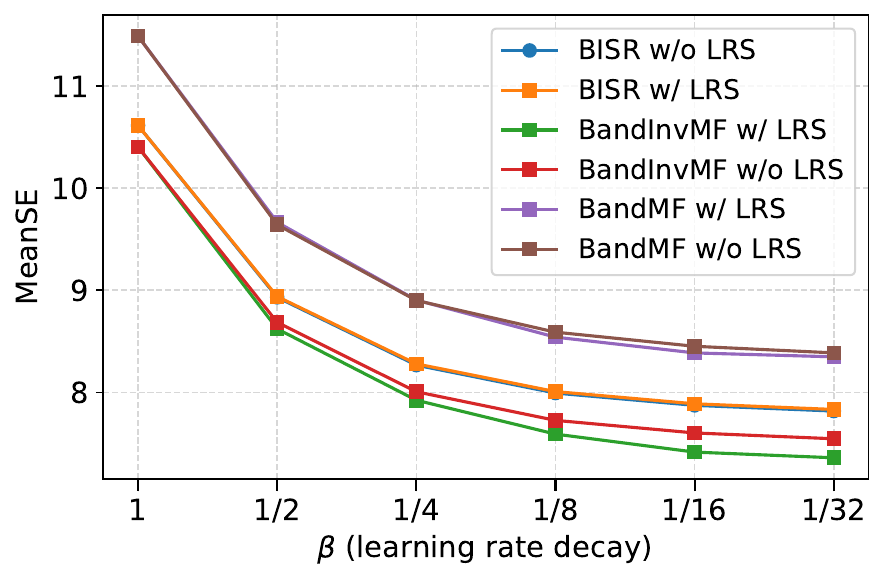}
        \subcaption{Cosine, $k=8$}
        \label{fig:cosine_k8}
    \end{subfigure}

    \caption{Multi-participation MeanSE error under different learning-rate schedulers (Polynomial $\gamma=2$, Linear, Cosine) for $k=4$ and $k=8$. Matrix size $n = 1024$, bandwidth $p=64$.}
    \label{fig:multi_scheduler}
\end{figure}

\begin{table}[t]
\centering
\caption{We train four different methods for matrix optimization: DP-SGD, BISR, BandInvMF, and BandMF. 
Each factorization method can be computed either with a workload induced by learning rate scheduling (w/ LRS) or with a constant workload corresponding to prefix sums (w/o LRS).
All experiments use clipping norm $\zeta = 1$ and batch size $128$. 
For each method, the noise multiplier $\sigma$ is computed using a privacy accountant: Poisson accounting for DP-SGD and bins-and-balls sampling with an MCMC accountant~\cite{choquette2024near} for the matrix factorization methods. 
Learning rates $\eta$ are tuned on a validation set separately for each method and decay setting.}

\label{tab:mf-hparams}
\setlength{\tabcolsep}{6pt}
\renewcommand{\arraystretch}{1.2}
\begin{tabular}{lcccccccccccc}
\toprule
\multirow{2}{*}{Method} 
& \multicolumn{1}{c}{} 
& \multicolumn{1}{c}{} 
& \multicolumn{1}{c}{} 
& \multicolumn{2}{c}{$\beta = 1$} 
& \multicolumn{2}{c}{$\beta = \tfrac{1}{2}$} 
& \multicolumn{2}{c}{$\beta = \tfrac{1}{4}$} 
& \multicolumn{2}{c}{$\beta = \tfrac{1}{8}$} \\
\cmidrule(lr){5-6}\cmidrule(lr){7-8}\cmidrule(lr){9-10}\cmidrule(lr){11-12}
& $\zeta$ & BS & $p$ & $\eta$ & $\sigma$ & $\eta$ & $\sigma$ & $\eta$ & $\sigma$ & $\eta$ & $\sigma$ \\
\midrule
DP\text{-}SGD
& 1 & 128 & 1
& 0.4 & 0.479 & 0.4 & 0.479 & 0.6 & 0.479 & 0.8 & 0.479 \\
BISR (w/o LRS)
& 1 & 128 & 64
& 0.8 & 1.910 & 1.6 & 1.910 & 1.8 & 1.910 & 1.8 & 1.910 \\
BISR (w/ LRS)
& 1 & 128 & 64
& 0.8 & 1.908 & 1.6 & 1.901 & 1.9 & 1.894 & 2.0 & 1.888 \\
BandInvMF (w/o LRS)
& 1 & 128 & 64
& 1.0 & 2.597 & 1.5 & 2.597 & 1.6 & 2.597 & 1.7 & 2.597 \\
BandInvMF (w/ LRS)
& 1 & 128 & 64
& 1.0 & 2.597 & 1.5 & 2.681 & 1.6 & 2.814 & 1.6 & 2.870 \\
BandMF (w/o LRS)
& 1 & 128 & 64
& 0.9 & 2.921 & 1.5 & 2.921 & 1.6 & 2.921 & 1.7 & 2.921 \\
BandMF (w/ LRS)
& 1 & 128 & 64
& 0.9 & 2.921 & 1.1 & 3.053 & 1.6 & 3.158 & 1.7 &  3.222 \\
BLT 
& 1 & 128 & 64
& 0.9 & 2.580 & 1.3 & 2.580 & 1.4 & 2.580 & 1.8 &  2.580 \\
\bottomrule
\end{tabular}
\end{table}

\begin{table}[b]
\centering
\caption{Comparison of different learning rate schedulers for training with matrix factorization with fixed learning rate decay $\beta = \tfrac{1}{4}$. 
We evaluate DP-SGD, BISR (w/o LRS), and BISR (w/ LRS) under four learning rate decay strategies: exponential, polynomial, linear, and cosine. 
All experiments use clipping norm $\zeta = 1$ and batch size $128$, for BISR we use bandwidth $p = 64$.
Learning rates $\eta$ are tuned on a validation set for each decay setting.}
 \label{tab:mf-schedulers}
    \setlength{\tabcolsep}{6pt}
    \renewcommand{\arraystretch}{1.2}
    \begin{tabular}{llcccccccc}
        \toprule
        \multirow{2}{*}{Dataset} & \multirow{2}{*}{Method}
        & $\zeta$ & BS & $p$
        & \multicolumn{4}{c}{Learning rate $\eta$ by scheduler} \\
        \cmidrule(lr){6-9}
        & & & & & Exponential & Polynomial & Linear & Cosine \\
        \midrule
        \multirow{3}{*}{CIFAR-10}
        & DP\text{-}SGD       & 1 & 128 & 1   & 0.6  & 1.1  & 0.6  & 0.5 \\
        & BISR (w/o LRS)      & 1 & 128 & 64  & 1.8  & 1.8  & 1.6  & 1.5 \\
        & BISR (w/ LRS)       & 1 & 128 & 64  & 1.9  & 1.8  & 1.6  & 1.4 \\
        \midrule
        \multirow{3}{*}{IMDB}
        & DP\text{-}SGD       & 1 & 512 & 1   & 0.05 & 0.05 & 0.05 & 0.05 \\
        & BISR (w/o LRS)      & 1 & 512 & 64  & 0.1  & 0.1  & 0.1  & 0.1 \\
        & BISR (w/ LRS)       & 1 & 512 & 64  & 0.1  & 0.1  & 0.1  & 0.1 \\
        \bottomrule
    \end{tabular}
\end{table}

\clearpage
\section{Single-participation: Square root of the workload}
%\subsection{Matrix Square Root of the Workload}
\label{sub:square_root}

As one of the baseline factorizations we propose the square-root factorization
\begin{equation}
    A_{\chi} = A_{\chi}^{1/2}\times A_{\chi}^{1/2}, \quad \text{where}\quad A_{\chi}= \begin{pmatrix}
\chi_1 & 0 & 0 & \cdots & 0 \\
\chi_1 & \chi_2 & 0 & \cdots & 0 \\
\chi_1 & \chi_2 & \chi_3 & \cdots & 0 \\
\vdots & \vdots & \vdots & \ddots & \vdots \\
\chi_1 & \chi_2 & \chi_3 & \cdots & \chi_n
\end{pmatrix}
\end{equation}

In the case of exponential learning rate decay we can compute the matrix square root explicitly and tightly bound its values from below.
\begin{restatable}{theorem}{squareRootTheorem}\label{thm:square_root_bound}
For any $n \ge 1$ and $\alpha \in (0,1)$, with learning rates $\chi_i = \alpha^{i - 1}$ the following lower bound holds:
\begin{equation}
    (A_{\chi}^{1/2})_{m, l} = \alpha^{(l - 1)/2}\prod_{k = 1}^{m - l} \frac{1 - \alpha^{k - 1 / 2}}{1 - \alpha^{k}} 
    \ge \alpha^{(l - 1)/2} \max\left\{\left|\binom{-1/2}{n}\right|, 
    \frac{\sqrt{1 - \alpha^2}}{\Gamma_{\alpha^2}(1/2)}\right\},
\end{equation}
where $\Gamma_q(x)$ denotes the $q$-Gamma function, and 
$\lim_{\alpha\to 1^{-}}\Gamma_{\alpha^2}(1/2) = \Gamma(1/2) = \sqrt{\pi}$.
\end{restatable}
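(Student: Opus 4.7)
The plan is to exploit that $(A_{\chi})_{m,l} = \alpha^{l-1}$ depends only on the column index, which suggests the ansatz $(A_{\chi}^{1/2})_{m,l} = \alpha^{(l-1)/2}\, a_{m-l}$ with $a_0 = 1$. Expanding $(A_{\chi}^{1/2} A_{\chi}^{1/2})_{i,j} = \alpha^{j-1}$ and changing the summation variable collapses the requirement to the convolution identity
\begin{equation}
    \sum_{s=0}^{m} \alpha^{s/2}\, a_{m-s}\, a_s \;=\; 1 \qquad (m \ge 0),
\end{equation}
which uniquely determines the $a_j$. I would then pass to the generating function $A(z) = \sum_{n \ge 0} a_n z^n$, turning the recursion into the functional equation $A(z)\, A(\alpha^{1/2} z) = 1/(1-z)$. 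The $q$-binomial theorem $\sum_n \frac{(a;q)_n}{(q;q)_n} z^n = \frac{(az;q)_\infty}{(z;q)_\infty}$ motivates the ansatz $A(z) = (\alpha^{1/2} z;\alpha)_\infty / (z;\alpha)_\infty$, after which the functional equation reduces to a one-line telescoping of $q$-Pochhammer symbols, using $(z;\alpha)_\infty = (1-z)(\alpha z;\alpha)_\infty$. Reading off the Taylor coefficients gives $a_j = \prod_{k=1}^j \frac{1-\alpha^{k-1/2}}{1-\alpha^k}$ as stated.

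\textbf{Lower bounds.} Each factor $\frac{1-\alpha^{k-1/2}}{1-\alpha^k}$ lies in $(0,1)$ for $\alpha \in (0,1)$, so $a_j$ is positive and monotonically decreasing in $j$. The $q$-Gamma bound then follows directly: $a_{m-l} \ge a_\infty := \prod_{k \ge 1} \frac{1-\alpha^{k-1/2}}{1-\alpha^k}$, and rewriting this infinite product in $q$-Pochhammer notation and invoking the identity $\Gamma_q(x) = (1-q)^{1-x}(q;q)_\infty / (q^x;q)_\infty$ yields the stated $q$-Gamma expression. For the binomial bound, I would prove the factor-wise inequality $\frac{1-\alpha^{k-1/2}}{1-\alpha^k} \ge \frac{k-1/2}{k}$ for every $k \ge 1$ and $\alpha \in (0,1)$. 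Clearing denominators and substituting $y = \alpha^{1/2} \in (0,1)$, this reduces to showing $g(y) := y^{2k-1}\bigl(k - (k-\tfrac{1}{2})y\bigr) \le \tfrac{1}{2}$ on $(0,1]$. A direct calculation gives $g'(y) = k(2k-1)\, y^{2k-2}(1-y) \ge 0$, so $g$ is increasing on $(0,1]$ with $g(1) = \tfrac{1}{2}$, establishing the inequality. Taking the product over $k=1,\ldots,j$ yields $a_j \ge \prod_{k=1}^{j} \frac{k-1/2}{k} = \bigl|\binom{-1/2}{j}\bigr|$; since the right-hand side is decreasing in $j$ (consecutive ratio $(2j+1)/(2j+2) < 1$), we obtain $a_{m-l} \ge \bigl|\binom{-1/2}{n}\bigr|$ for all $m-l \le n$. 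Taking the maximum of the two lower bounds on $a_{m-l}$ and multiplying by $\alpha^{(l-1)/2}$ gives the result.

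\textbf{Main obstacle.} Once the ansatz $A(z) = (\alpha^{1/2} z;\alpha)_\infty / (z;\alpha)_\infty$ is identified, the remaining derivations are routine $q$-series manipulations and simple monotonicity arguments. The subtler step will be the factor-wise inequality $\frac{1-\alpha^{k-1/2}}{1-\alpha^k} \ge \frac{k-1/2}{k}$: at fixed $\alpha$, both sides lie strictly in $(0,1)$ and the ratio tends to equality as $\alpha \to 1$, so the inequality is not transparent by inspection. The substitution $y = \alpha^{1/2}$, which linearizes the exponents appearing on the two sides and turns the claim into a single-variable monotonicity check with a clean derivative formula, is where the main ingenuity of the argument lies.
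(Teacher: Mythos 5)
Your argument is correct and follows the same route as the paper: establish the closed form $a_j=\prod_{k=1}^j\frac{1-\alpha^{k-1/2}}{1-\alpha^k}$ via the $q$-binomial theorem and the telescoping functional equation $A(z)A(\alpha^{1/2}z)=1/(1-z)$, then bound the truncated product below by its two natural limits (the infinite product for the $q$-Gamma term, and the $\alpha\to1$ limit for the central-binomial term). The one place where you diverge is the factor-wise estimate. The paper rewrites $\frac{1-\alpha^{2k-1}}{1-\alpha^{2k}}=1-\frac{1}{1+\alpha^{-1}+\cdots+\alpha^{-(2k-1)}}$ to see directly that each factor is decreasing in $\alpha$, hence bounded below by its $\alpha\to1$ limit $\frac{2k-1}{2k}$. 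You instead prove the equivalent inequality $\frac{1-\alpha^{k-1/2}}{1-\alpha^k}\ge\frac{k-1/2}{k}$ by substituting $y=\alpha^{1/2}$ and checking monotonicity of $g(y)=y^{2k-1}\bigl(k-(k-\tfrac12)y\bigr)$ on $(0,1]$; that calculation is right ($g'=k(2k-1)y^{2k-2}(1-y)\ge0$, $g(1)=1/2$) and is a fine, slightly more computational alternative. You also explicitly note that $a_j$ and $\bigl|\binom{-1/2}{j}\bigr|$ are both decreasing in $j$, which is the step the paper leaves implicit when passing from its fixed-$n$ lemma to the matrix entries with $m-l\le n$.

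One small caveat: when you say the infinite product "yields the stated $q$-Gamma expression," you should double-check the final form. Working directly with $\chi_i=\alpha^{i-1}$ as you do, the infinite product is $\frac{(\alpha^{1/2};\alpha)_\infty}{(\alpha;\alpha)_\infty}=\frac{\sqrt{1-\alpha}}{\Gamma_\alpha(1/2)}$, not $\frac{\sqrt{1-\alpha^2}}{\Gamma_{\alpha^2}(1/2)}$ as written in the theorem statement; the latter is the expression that appears in Lemma~\ref{lem:square_root_lower_bound} for the $\chi_i=\alpha^{2i}$ normalization, and the theorem statement appears to have retained it without applying the $\alpha\mapsto\alpha^{1/2}$ reparameterization that the paper's own proof prescribes. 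Your derivation is therefore the correct one; just make the final expression explicit rather than appealing to the stated form.
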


We can also compute the inverse of this matrix (see Appendix, Lemma~\ref{lem:inverse_matrix}). Using the lower bound, we now establish the following bounds for the MaxSE and MeanSE errors under an exponentially decaying learning rate.
\begin{restatable}{corollary}{squareRootFactorizationCor}\label{cor:square_root_factorization}
Let $\beta\in(0,1/e)$ and $\alpha=\beta^{1/(n-1)}$. For the square-root factorization $A_{\chi}=A_{\chi}^{1/2}A_{\chi}^{1/2}$, we have
\begin{align}
    &\mathrm{MaxSE}(A_{\chi}^{1/2},A_{\chi}^{1/2})
    = \Omega\!\left(\sqrt{\log n}\,\sqrt{\log\frac{n}{\log(1/\beta)}}\right), \\
    &\mathrm{MeanSE}(A_{\chi}^{1/2},A_{\chi}^{1/2})
    = \Omega\!\left(\frac{\log n}{\sqrt{\log(1/\beta)}}\right).
\end{align}
\end{restatable}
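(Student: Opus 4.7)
The plan is to separately lower-bound the three relevant norms of $A_\chi^{1/2}$ and then read off MaxSE and MeanSE from their definitions. The starting point will be a $j$-dependent sharpening of the pointwise estimate in Theorem~\ref{thm:square_root_bound}. Setting $P_j(\alpha) := \prod_{k=1}^{j}(1-\alpha^{k-1/2})/(1-\alpha^{k})$, each factor $(1-\alpha^{k-1/2})/(1-\alpha^{k})$ is decreasing in $\alpha$ on $(0,1]$ with limit $(k-1/2)/k$ as $\alpha\uparrow 1$, so $P_j(\alpha)\ge P_j(1)=\left|\binom{-1/2}{j}\right|$ for every $j$; the uniform bound displayed in Theorem~\ref{thm:square_root_bound} is the worst case $j=n$, while this $j$-dependent strengthening produces the logarithmic factors below. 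Throughout, I would use the classical asymptotic $\left|\binom{-1/2}{j}\right|^{2}=\tfrac{1}{\pi j}(1+O(1/j))$, which yields $\sum_{j=0}^{N}\left|\binom{-1/2}{j}\right|^{2}=\tfrac{1}{\pi}\log N+O(1)$.

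For the sensitivity I would evaluate the first column: $\|A_\chi^{1/2}\|_{1\to 2}^{2}\ge\sum_{m=1}^{n}P_{m-1}(\alpha)^{2}\ge\sum_{j=0}^{n-1}\left|\binom{-1/2}{j}\right|^{2}=\Omega(\log n)$. For the maximum row norm I would choose $m^{\star}=\lceil(n-1)/\log(1/\beta)\rceil$, so that $\alpha^{m^{\star}-1}\ge e^{-1}$, giving $\|A_\chi^{1/2}\|_{2\to\infty}^{2}\ge e^{-1}\sum_{j=0}^{m^{\star}-1}\left|\binom{-1/2}{j}\right|^{2}=\Omega(\log(n/\log(1/\beta)))$. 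For the Frobenius norm, summing by columns yields $\|A_\chi^{1/2}\|_{F}^{2}\ge\sum_{l=1}^{n}\alpha^{l-1}\sum_{j=0}^{n-l}\left|\binom{-1/2}{j}\right|^{2}$; restricting to $l\le n/2$ keeps the inner sum at $\Omega(\log n)$ and leaves a geometric factor $\sum_{l=1}^{n/2}\alpha^{l-1}=(1-\alpha^{n/2})/(1-\alpha)$. In the regime $\beta\in(0,1/e)$, we have $\alpha^{n/2}\le\sqrt{\beta}<1/\sqrt{e}$ bounded away from $1$ and $1-\alpha=\Theta(\log(1/\beta)/n)$, so this factor is $\Theta(n/\log(1/\beta))$; hence $\|A_\chi^{1/2}\|_{F}^{2}=\Omega(n\log n/\log(1/\beta))$.

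Combining: $\mathrm{MaxSE}(A_\chi^{1/2},A_\chi^{1/2})=\|A_\chi^{1/2}\|_{2\to\infty}\,\|A_\chi^{1/2}\|_{1\to 2}=\Omega\!\left(\sqrt{\log n}\,\sqrt{\log(n/\log(1/\beta))}\right)$, and $\mathrm{MeanSE}(A_\chi^{1/2},A_\chi^{1/2})=n^{-1/2}\|A_\chi^{1/2}\|_{F}\,\|A_\chi^{1/2}\|_{1\to 2}=\Omega\!\left(\log n/\sqrt{\log(1/\beta)}\right)$, which matches Corollary~\ref{cor:square_root_factorization}.

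The main obstacle is extracting the $j$-dependent strengthening $P_j(\alpha)\ge|\binom{-1/2}{j}|$ from the argument behind Theorem~\ref{thm:square_root_bound} (whose stated form records only the worst case $j=n$), and controlling $(1-\alpha^{n/2})/(1-\alpha)$ uniformly over $\beta\in(0,1/e)$: depending on how $\log(1/\beta)$ scales with $n$, the base $\alpha$ can sit anywhere in $(0,1)$, and the binomial asymptotic has to be used carefully down to the smallest indices that actually contribute to each sum.
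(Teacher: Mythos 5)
Your proposal is correct and takes essentially the same approach as the paper: lower-bound the entries by $\alpha^{(l-1)/2}\,r_{m-l}$ (with $r_j = \lvert\binom{-1/2}{j}\rvert$), use the first column for the sensitivity, a row near index $(n-1)/\log(1/\beta)$ for the max row norm, and sum the row bounds for the Frobenius norm. The obstacle you worry about is not actually one: Lemma~\ref{lem:square_root_lower_bound} is stated for arbitrary $n\ge 1$, so applying it with the truncation index $j=m-l$ already yields the $j$-dependent bound $P_j(\alpha)\ge\lvert\binom{-1/2}{j}\rvert$ directly (and indeed the paper's proof of the corollary uses exactly $(A_\chi^{1/2})_{m,l}\ge\alpha^{(l-1)/2}r_{m-l}$); your choice of an explicit maximizing $m^\star$ and the direct column-wise Frobenius computation replace the paper's calls to Lemma~\ref{lem:alpha_m_log_m_max} and Lemma~\ref{lem:alpha_m_log_m_mean}, but the content is the same.
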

We prove these statements next, beginning with necessary lemmas.
\begin{restatable}{lemma}{squareRootLemma}\label{lem:square_root}
For a specific choice of the learning rate coefficients $\chi_i = \alpha^{2i}$ with $\alpha \in (0, 1)$, we have:
\begin{equation}
    (A_{\chi}^{1/2})_{m, l} = \alpha^{l}\prod\limits_{k = 1}^{m - l} \frac{1 - \alpha^{2k - 1}}{1 - \alpha^{2k}}
\end{equation}
\end{restatable}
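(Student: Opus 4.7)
The plan is to exhibit a lower-triangular matrix $L$ with positive diagonal satisfying $LL = A_{\chi}$, and identify its entries with the claimed product. Since $A_{\chi}$ is lower triangular with positive diagonal $\alpha^2,\dots,\alpha^{2n}$, the lower-triangular square root with positive diagonal is unique (standard Cholesky-type argument), so matching any such candidate will close the proof. Motivated by the observation that the $l$-th column of $A_{\chi}$ is constant equal to $\alpha^{2l}$ for all $m \ge l$, I would use the ansatz $L_{m,l} = \alpha^l\, c_{m-l}$ for a sequence $(c_N)_{N\ge 0}$ with $c_0 = 1$. A one-line reindexing gives
\[
(LL)_{m,l} = \sum_{k=l}^{m} (\alpha^k c_{m-k})(\alpha^l c_{k-l}) = \alpha^{2l}\sum_{j=0}^{m-l}\alpha^{j} c_j\, c_{m-l-j},
\]
so the requirement $LL = A_{\chi}$ reduces to the scalar convolution identity
\[
\sum_{j=0}^{N}\alpha^{j} c_j\, c_{N-j} = 1 \qquad\text{for every }N\ge 0.
\]

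The key step is to encode this as a generating function identity and exploit the $q$-series structure. Setting $C(x) = \sum_{N\ge 0} c_N x^N$, the convolution relation is equivalent to $C(x)\,C(\alpha x) = \tfrac{1}{1-x}$. Using the $q$-Pochhammer notation $(a;q)_N := \prod_{k=0}^{N-1}(1 - a q^k)$, I would recognize the proposed coefficient
\[
c_N = \prod_{k=1}^{N}\frac{1-\alpha^{2k-1}}{1-\alpha^{2k}} = \frac{(\alpha;\alpha^2)_N}{(\alpha^2;\alpha^2)_N},
\]
and invoke the $q$-binomial theorem with $q=\alpha^2$, $a=\alpha$ to get the closed form
\[
C(x) = \frac{(\alpha x;\alpha^2)_\infty}{(x;\alpha^2)_\infty}.
\]
A one-line telescoping then yields
\[
C(x)\, C(\alpha x) = \frac{(\alpha x;\alpha^2)_\infty}{(x;\alpha^2)_\infty}\cdot\frac{(\alpha^2 x;\alpha^2)_\infty}{(\alpha x;\alpha^2)_\infty} = \frac{(\alpha^2 x;\alpha^2)_\infty}{(x;\alpha^2)_\infty} = \frac{1}{1-x},
\]
where the last step uses the factorization $(x;\alpha^2)_\infty = (1-x)(\alpha^2 x;\alpha^2)_\infty$.

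The main obstacle is spotting that the skewed convolution $\sum_j \alpha^j c_j c_{N-j} = 1$ admits a clean closed form through $q$-Pochhammer symbols; once that connection is made, the verification collapses to a single telescoping step. If one preferred to avoid $q$-series machinery, one could instead prove the convolution identity by strong induction on $N$ using the one-step recurrence $c_N = \tfrac{1-\alpha^{2N-1}}{1-\alpha^{2N}}\, c_{N-1}$, but this route is noticeably less transparent. Since the resulting $L$ is lower triangular with positive diagonal ($L_{l,l} = \alpha^l > 0$), uniqueness identifies it with $A_{\chi}^{1/2}$, completing the proof. As a sanity check, in the limit $\alpha\to 1$ the ratio collapses via $\tfrac{1-\alpha^{2k-1}}{1-\alpha^{2k}}\to \tfrac{2k-1}{2k}$, so $c_N \to \tfrac{1}{4^N}\binom{2N}{N}$, recovering the known square root of $A_1$.
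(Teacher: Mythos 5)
Your proof is correct and follows essentially the same route as the paper's: reduce the matrix identity to the skewed convolution $\sum_j \alpha^j c_j c_{N-j} = 1$, identify the generating function $C(x) = (\alpha x;\alpha^2)_\infty/(x;\alpha^2)_\infty$ via the $q$-binomial theorem, and verify $C(x)C(\alpha x) = 1/(1-x)$ by telescoping. The only addition is that you make the uniqueness of the lower-triangular square root with positive diagonal explicit, which the paper leaves implicit but does not change the substance of the argument.
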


\begin{proof}
To prove that the coefficients of the square root have the proposed form, we need to show that the square of this matrix is equal to the original one. That is, for all $1 \le l \le m \le n$, we show that:

\begin{equation}
    \sum\limits_{j = l}^{m} \alpha^j \prod_{k = 1}^{m - j} \frac{1 -\alpha^{2k - 1}}{1 - \alpha^{2k}} \cdot \alpha^l \prod_{k = 1}^{j - l} \frac{1 -\alpha^{2k - 1}}{1 - \alpha^{2k}} = \alpha^{2l}
\end{equation}

or equivalently,

\begin{equation}
\sum\limits_{j = 0}^{m - l}\alpha^{j}\prod_{k = 1}^{m - l -j} \frac{1 -\alpha^{2k - 1}}{1 - \alpha^{2k}}\prod_{k = 1}^{j} \frac{1 -\alpha^{2k - 1}}{1 - \alpha^{2k}} = 1,
\end{equation}
which is a convolution of the sequences $a_j$ and $a_j \alpha^j$, where

\begin{equation}
    a_j = \prod_{k = 1}^{j} \frac{1 -\alpha^{2k - 1}}{1 - \alpha^{2k}} = \frac{(\alpha; \alpha^2)_j}{(\alpha^2; \alpha^2)_j},
\end{equation}
and $(a;q)_n$ denotes the $q$-Pochhammer symbol, given by $\prod_{k = 0}^{n -1}(1 - aq^k)$. We will prove the identity using generating functions. First, we find the generating function of $a_j$:

\begin{equation}
    f(x) = \sum\limits_{j = 0}^{\infty}a_jx^j =  \sum\limits_{j = 0}^{\infty}\frac{(\alpha; \alpha^2)_j}{(\alpha^2; \alpha^2)_j} x^j = \frac{(\alpha x; \alpha^2)_{\infty}}{(x; \alpha^2)_{\infty}},
\end{equation}
where the last equality follows from the $q$-binomial theorem. Therefore, the generating function of the convolution of $a_j$ and $a_j \alpha^j$ is:

\begin{equation}
    f(x)f(\alpha x) = \frac{(\alpha x; \alpha^2)_{\infty}}{(x; \alpha^2)_{\infty}} \cdot \frac{(\alpha^2 x; \alpha^2)_{\infty}}{(\alpha x; \alpha^2)_{\infty}} = \frac{(\alpha^2 x; \alpha^2)_{\infty}}{(x; \alpha^2)_{\infty}} = \prod_{n = 0}^{\infty} \frac{1 - x\alpha^{2n +2}}{1 - x\alpha^{2n}} = \frac{1}{1 - x},
\end{equation}
as the product telescopes, yielding the generating function of the unit sequence $(1, 1, 1, \dots)$, thus concluding the proof.
\end{proof}
\begin{restatable}{lemma}{squareRootLowerBound}\label{lem:square_root_lower_bound}
For any $n \ge 1$ and $\alpha \in (0,1)$, the following lower bound holds:
\begin{equation}
    \prod_{k = 1}^{n} \frac{1 - \alpha^{2k - 1}}{1 - \alpha^{2k}} 
    \ge \max\left\{\left|\binom{-1/2}{n}\right|, 
    \frac{\sqrt{1 - \alpha^2}}{\Gamma_{\alpha^2}(1/2)}\right\},
\end{equation}
where $\Gamma_q(x)$ denotes the $q$-Gamma function, and 
$\lim_{\alpha\to 1^{-}}\Gamma_{\alpha^2}(1/2) = \Gamma(1/2) = \sqrt{\pi}$.
\end{restatable}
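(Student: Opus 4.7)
The plan is to bound the product $P_n := \prod_{k=1}^n \frac{1-\alpha^{2k-1}}{1-\alpha^{2k}}$ by two different arguments and take the maximum.

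For the $q$-Gamma bound, I would first rewrite the product using $q$-Pochhammer notation with base $q = \alpha^2$:
\begin{equation}
P_n = \frac{(\alpha;\alpha^2)_n}{(\alpha^2;\alpha^2)_n}.
\end{equation}
I would then observe that $P_n$ is \emph{monotonically decreasing} in $n$: the ratio $P_{n+1}/P_n = (1-\alpha^{2n+1})/(1-\alpha^{2n+2})$ lies in $(0,1)$ for $\alpha\in(0,1)$, since $\alpha^{2n+1} > \alpha^{2n+2}$. Hence $P_n \geq P_\infty$, and evaluating the limit via the standard definition $\Gamma_q(x) = (1-q)^{1-x}\,(q;q)_\infty/(q^x;q)_\infty$ with $q=\alpha^2$, $x=1/2$ gives
\begin{equation}
P_\infty = \frac{(\alpha;\alpha^2)_\infty}{(\alpha^2;\alpha^2)_\infty} = \frac{\sqrt{1-\alpha^2}}{\Gamma_{\alpha^2}(1/2)},
\end{equation}
which proves the first bound. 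The asymptotic $\lim_{\alpha\to 1^-}\Gamma_{\alpha^2}(1/2)=\Gamma(1/2)=\sqrt{\pi}$ is a classical fact about $q$-Gamma functions, which I would just cite.

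For the binomial bound, I would use the identity $\bigl|\binom{-1/2}{n}\bigr| = \prod_{k=1}^n \tfrac{2k-1}{2k}$ and prove the inequality \emph{term by term}: for every $k\geq 1$ and every $\alpha\in(0,1)$,
\begin{equation}
\frac{1-\alpha^{2k-1}}{1-\alpha^{2k}} \;\geq\; \frac{2k-1}{2k}.
\end{equation}
Cross-multiplying, this reduces to showing $g_k(\alpha) := 2k\alpha^{2k-1} - (2k-1)\alpha^{2k} \leq 1$. A short calculation gives $g_k(1) = 1$ and $g_k'(\alpha) = 2k(2k-1)\alpha^{2k-2}(1-\alpha) \geq 0$ on $(0,1)$, so $g_k$ increases to $1$ and the termwise inequality holds. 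Taking the product over $k=1,\dots,n$ yields $P_n \geq |\binom{-1/2}{n}|$.

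The proof is mostly mechanical once the $q$-Pochhammer reformulation is in place; the only mild obstacle is recognising the right $q$-Gamma identity to apply (so that the limit factor $\sqrt{1-\alpha^2}$ appears naturally) and checking monotonicity of $P_n$, both of which are quick. Combining the two bounds via $\max\{\cdot,\cdot\}$ then gives the statement of the lemma.
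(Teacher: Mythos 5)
Your proposal is correct and follows essentially the same structure as the paper's proof: for the $q$-Gamma term you compare $P_n$ to the infinite product and identify the limit via the $q$-Gamma definition (identical to the paper), and for the binomial term you establish the per-factor inequality $\frac{1-\alpha^{2k-1}}{1-\alpha^{2k}} \ge \frac{2k-1}{2k}$ via a monotonicity-in-$\alpha$ argument, which is what the paper does as well. The only cosmetic difference is the execution of that per-factor step: the paper rewrites the factor as $1 - \frac{1}{1+\alpha^{-1}+\cdots+\alpha^{-(2k-1)}}$ to see it decreases in $\alpha$ and hence is minimized in the limit $\alpha\to1^{-}$, whereas you cross-multiply and check the sign of $g_k'(\alpha)$; both are one-line calculus checks of the same fact.
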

\begin{proof}
    First, we show that $f_n(\alpha) = \prod_{k = 1}^{n} \frac{1 - \alpha^{2k - 1}}{1 - \alpha^{2k}}$ is a decreasing function of $\alpha$. Therefore,
    \begin{equation}
        f_n(\alpha) \ge f_n(1) = \prod_{k = 1}^{n} \frac{2k - 1}{2k} = \left|\binom{-1/2}{n}\right|.
    \end{equation}
    To prove this, we observe that each individual term is a decreasing function of $\alpha$:
    \begin{equation}
        \frac{1 - \alpha^{2k - 1}}{1 - \alpha^{2k}} = 1 - \frac{\alpha^{2k - 1} - \alpha^{2k}}{1 - \alpha^{2k}} = 1 - \frac{\alpha^{-1} - 1}{\alpha^{-2k} - 1} = 1 - \frac{1}{1 + \alpha^{-1} + \dots + \alpha^{-(2k - 1)}}.
    \end{equation}

    For the second part of the inequality, we show that
    \begin{equation}
        f_n(\alpha) \ge f_{\infty}(\alpha) =  \prod_{k = 1}^{\infty} \frac{1 - \alpha^{2k - 1}}{1 - \alpha^{2k}} = \frac{(\alpha;\alpha^2)_{\infty}}{(\alpha^2;\alpha^2)_{\infty}} = \frac{\sqrt{1 - \alpha^2}}{\Gamma_{\alpha^2}(1/2)},
    \end{equation}
    where the inequality holds because each term of the product is less than $1$, the infinite product converges, and the $q$-Gamma function is defined by
    \begin{equation}
        \Gamma_q(x) = (1 - q)^{1 - x}\frac{(q;q)_{\infty}}{(q^x;q)_{\infty}}.
    \end{equation}
    This concludes the proof.
\end{proof}
\begin{proof}[Proof of Theorem~\ref{thm:square_root_bound}]
    The proof follows from combining Lemma~\ref{lem:square_root} for the equality and Lemma~\ref{lem:square_root_lower_bound} for the lower bound. For convenience, we considered $\chi_i = \alpha^{2i}$ in those lemmas. To achieve $\alpha^{i - 1}$, we first divide the square root matrix by $\alpha$ so that we start from learning rates of $1$ rather than $\alpha^2$. Then, we replace $\alpha$ with $\alpha^{1/2}$, which concludes the proof.  
\end{proof}
\begin{proof}[Proof of Corollary~\ref{cor:square_root_factorization}]
To use Lemma~\ref{lem:square_root} and Lemma~\ref{lem:square_root_lower_bound}, we need to adjust the choice of $\alpha$, as previous lemmas consider $\chi_k=\alpha^{2k}$ while here $\chi_k=\alpha^{k-1}$. This gives
\begin{equation}
    (A_{\chi}^{1/2})_{m,l} \;\ge\; \alpha^{(l-1)/2}\,r_{m-l}.
\end{equation}

Thus the maximum column norm of $A_{\chi}^{1/2}$ is at least the norm of its first column, which in turn is at least the maximum column norm of $A_1^{1/2}$; the latter is $\Theta(\log n)$.

For the $m$-th row-sum of squares,
\begin{equation}
\sum_{l=1}^m (A_{\chi}^{1/2})_{m,l}^2
\;\ge\; \sum_{l=1}^m \alpha^{\,l-1}r_{m-l}^2
\;\ge\; \frac{\alpha^m}{\pi} \sum_{l=1}^m \frac{1}{l\,\alpha^l}
\;\ge\; \frac{\alpha^m}{\pi} \log m.
\end{equation}

\textbf{MaxSE.} Taking the maximum over $m$ and applying Lemma~\ref{lem:alpha_m_log_m_max} yields
\begin{equation}
\max_{1\le m\le n} \sum_{l=1}^m (A_{\chi}^{1/2})_{m,l}^2
= \Omega\!\left(\log\frac{n}{\log(1/\beta)}\right),
\end{equation}
so the maximum row norm is $\Omega\bigl(\sqrt{\log\frac{n}{\log(1/\beta)}}\bigr)$. Multiplying by the maximum column norm $\Omega(\log n)$ gives the first bound.

\textbf{MeanSE.} Averaging over $m$ and using Lemma~\ref{lem:alpha_m_log_m_mean},
\begin{equation}
\frac{1}{n}\sum_{m=1}^n \sum_{l=1}^m (A_{\chi}^{1/2})_{m,l}^2
= \Theta\!\left(\frac{\log n}{\log(1/\beta)}\right),
\end{equation}
so the average row norm is $\Omega\bigl(\sqrt{\tfrac{\log n}{\log(1/\beta)}}\bigr)$. Multiplying by the maximum column norm $\Omega(\log n)$ gives the second bound.
\end{proof}

We also give the following lemma on the inverse of $A_{\chi}^{1/2}$.
\begin{restatable}{lemma}{inverseMatrixLemma}\label{lem:inverse_matrix}
The inverse matrix $A_{\chi}^{-1/2}$ for a specific choice of the learning rate coefficients $\chi_i = \alpha^{2i}$ with $\alpha \in (0, 1)$ has the following form:
\begin{equation}
    (A_{\chi}^{-1/2})_{m, l} = \frac{-(1 - \alpha)}{\alpha^{l + 1} (1 - \alpha^{2(m - l) - 1})} 
    \prod\limits_{k = 1}^{m - l} \frac{1 - \alpha^{2k - 1}}{1 - \alpha^{2k}}.
\end{equation}
\end{restatable}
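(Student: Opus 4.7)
The plan is to verify the formula by using the same generating-function machinery that already appeared in the proof of Lemma~\ref{lem:square_root}. The first step is to observe that $A_{\chi}^{1/2}$ factors as $A_{\chi}^{1/2} = T \cdot D$, where $D = \mathrm{diag}(\alpha, \alpha^2, \ldots, \alpha^n)$ and $T$ is the lower-triangular Toeplitz matrix with entries $T_{m,l} = a_{m-l}$ for $a_j = \prod_{k=1}^{j} \frac{1 - \alpha^{2k-1}}{1 - \alpha^{2k}}$. This is immediate from Lemma~\ref{lem:square_root}, since $(A_{\chi}^{1/2})_{m,l} = \alpha^l a_{m-l}$. Consequently, $A_{\chi}^{-1/2} = D^{-1} T^{-1}$, and the problem reduces to computing $T^{-1}$, which is lower-triangular Toeplitz with some sequence $(\tilde a_j)_{j \ge 0}$ satisfying $\sum_{j=0}^{n} a_j \tilde a_{n-j} = \delta_{n,0}$.

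Next, I would invoke the generating-function identity already established in the proof of Lemma~\ref{lem:square_root}: the sequence $(a_j)$ has generating function $f(x) = \frac{(\alpha x;\alpha^2)_\infty}{(x;\alpha^2)_\infty}$ by the $q$-binomial theorem. The convolution identity for $(\tilde a_j)$ is exactly the statement that its generating function is $1/f(x) = \frac{(x;\alpha^2)_\infty}{(\alpha x;\alpha^2)_\infty}$. Applying the $q$-binomial theorem in the form $\sum_{j \ge 0} \frac{(a;q)_j}{(q;q)_j} z^j = \frac{(az;q)_\infty}{(z;q)_\infty}$ with the substitution $q = \alpha^2$, $z = \alpha x$, and $a = \alpha^{-1}$ yields the explicit expansion
\begin{equation*}
    \tilde a_j \;=\; \alpha^j \,\frac{(\alpha^{-1};\alpha^2)_j}{(\alpha^2;\alpha^2)_j}\,.
\end{equation*}

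The final step is a purely algebraic simplification to match the stated form. Separating out the anomalous first factor $(1 - \alpha^{-1}) = -\alpha^{-1}(1 - \alpha)$ of $(\alpha^{-1};\alpha^2)_j$, then multiplying numerator and denominator of the remaining $q$-Pochhammer ratio by $(1 - \alpha^{2j-1})$ to complete the product $\prod_{k=1}^{j} \frac{1 - \alpha^{2k-1}}{1 - \alpha^{2k}}$, gives
\begin{equation*}
    \tilde a_j \;=\; \frac{-(1-\alpha)\,\alpha^{j-1}}{1 - \alpha^{2j-1}} \prod_{k=1}^{j} \frac{1 - \alpha^{2k-1}}{1 - \alpha^{2k}}\,.
\end{equation*}
Substituting $(A_{\chi}^{-1/2})_{m,l} = \alpha^{-m} \tilde a_{m-l}$ recovers the claimed formula, and the $m = l$ case can be verified by direct inspection (both sides equal $\alpha^{-l}$ under the convention that the empty product is $1$ and that $1 - \alpha^{-1}$ cancels appropriately).

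The main obstacle is nothing conceptual; it is the bookkeeping that turns the $q$-Pochhammer expression into the precise form featuring the denominator $(1 - \alpha^{2(m-l) - 1})$. An alternative that avoids the $q$-binomial theorem entirely is direct verification: multiply the conjectured formula by $A_{\chi}^{1/2}$ and show the row-by-column sums telescope to $\delta_{m,l}$; however, this approach hides the origin of the formula and still requires the same telescoping step, so the generating-function route is preferable for exposition.
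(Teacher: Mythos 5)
Your proof is correct and rests on the same core machinery as the paper's: the generating function $f(x) = (\alpha x;\alpha^2)_\infty/(x;\alpha^2)_\infty$ for the sequence $a_j$, the $q$-binomial theorem, and the telescoping identity $f(x)\,g(\alpha x) = 1$ (here phrased as computing $1/f(x)$ directly). The one organizational difference is that you run the argument forward — deriving the formula for $\tilde a_j$ from $1/f(x)$ and then simplifying the $q$-Pochhammer ratio — whereas the paper posits the formula and verifies it by showing the convolution of $a_j$ and $\alpha^j b_j$ has generating function $1$. Your factorization $A_\chi^{1/2} = T D$ with $T$ lower-triangular Toeplitz and $D = \mathrm{diag}(\alpha,\dots,\alpha^n)$ is a clean way to reduce to the Toeplitz inverse, and the bookkeeping in the final simplification checks out (including the $j=0$ / $m=l$ case, where $1-\alpha^{-1} = -\alpha^{-1}(1-\alpha)$ cancels the leading $-(1-\alpha)/\alpha$ to give the correct diagonal entry $\alpha^{-l}$). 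No gaps.
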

\begin{proof}
To prove that $A_{\chi}^{-1/2}$ corresponds to the inverse square root matrix, we will show that its product with the square root matrix $A_{\chi}^{1/2}$ yields the identity matrix:

\begin{equation}
    \sum\limits_{j = l}^{m} (A_{\chi}^{1/2})_{m ,j} (A_{\chi}^{-1/2})_{j, l} = \sum\limits_{j = l}^{m} \alpha^j \left[\prod_{k = 1}^{m - j} \frac{1 - \alpha^{2k - 1}}{1 - \alpha^{2k}}\right] \cdot \frac{\alpha - 1}{\alpha^{l + 1} (1 -  \alpha^{2(j - l) - 1})} \left[\prod_{k = 1}^{j - l} \frac{1 - \alpha^{2k - 1}}{1 - \alpha^{2k}}\right] = \mathbbm{1}_{l = m}.
\end{equation}

This is equivalent to proving the following identity:

\begin{equation}
    \sum\limits_{j = 0}^{m - l} \alpha^j \left[\prod_{k = 1}^{m - l -  j} \frac{1 - \alpha^{2k - 1}}{1 - \alpha^{2k}}\right] \cdot \frac{\alpha - 1}{\alpha(1 -  \alpha^{2j - 1})} \left[\prod_{k = 1}^{j} \frac{1 - \alpha^{2k - 1}}{1 - \alpha^{2k}}\right] = \mathbbm{1}_{l = m}.
\end{equation}

This can be interpreted as a convolution of two sequences $a_j$ and $\alpha^j b_j$, defined as follows:

\begin{equation}
    a_j = \prod_{k = 1}^{j} \frac{1 - \alpha^{2k - 1}}{1 - \alpha^{2k}} = \frac{(\alpha; \alpha^2)_j}{(\alpha^2; \alpha^2)_j}, \qquad 
    b_j = \frac{\alpha - 1}{\alpha(1 - \alpha^{2j - 1})} \prod_{k = 1}^{j} \frac{1 - \alpha^{2k - 1}}{1 - \alpha^{2k}} = \frac{(1 / \alpha; \alpha^2)_{j}}{(\alpha^2; \alpha^2)_j},
\end{equation}
where $(a;q)_n$ denotes the $q$-Pochhammer symbol, given by $\prod_{k = 0}^{n - 1}(1 - a q^k)$.

Analogously to Lemma~\ref{lem:square_root}, we will prove the identity via generating functions. Since the generating function $f(x)$ of $a_j$ is already known, it remains to find the generating function of $b_j$:

\begin{equation}
    g(x) = \sum\limits_{j = 0}^{\infty} x^j b_j = \sum\limits_{j = 0}^{\infty} x^j \frac{(1 / \alpha; \alpha^2)_{j}}{(\alpha^2; \alpha^2)_j} = \frac{(x/\alpha;\alpha^2)_{\infty}}{(x; \alpha^2)_{\infty}}.
\end{equation}

Therefore, the generating function of the convolution of $a_j$ and $\alpha^j b_j$ is given by the product of the generating functions $f(x)$ and $g(\alpha x)$, resulting in:

\begin{equation}
    f(x)g(\alpha x) = \frac{(\alpha x; \alpha^2)_{\infty}}{(x; \alpha^2)_{\infty}} \cdot \frac{(x; \alpha^2)_{\infty}}{(\alpha x; \alpha^2)_{\infty}} = 1,
\end{equation}
which concludes the proof.
\end{proof}

\section{Single participation: Naive factorizations}

\begin{restatable}{lemma}{abcBounds}\label{lem:abc-bounds}
\[
\begin{array}{l@{\qquad\qquad}l}
  (a)\; \mathrm{MaxSE}(A_1^{1/2}, A_{1}^{1/2}D) = \Theta(\log n) 
      & \mathrm{MeanSE}(A_1^{1/2}, A_{1}^{1/2}D) = \Theta(\log n) \\
  (b)\; \mathrm{MaxSE}(A_{\chi}, I) = \Theta\!\left(\sqrt{\tfrac{n}{\log 1/\beta}}\right) 
      & \mathrm{MeanSE}(A_{\chi}, I) = \Theta\!\left(\sqrt{\tfrac{n}{\log 1/\beta}}\right) \\
  (c)\; \mathrm{MaxSE}(I, A_{\chi}) = \Theta(\log n) 
      & \mathrm{MeanSE}(I, A_{\chi}) = \Theta(\log n)
\end{array}
\]
\end{restatable}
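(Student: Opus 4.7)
The plan is to apply the product formulas $\mathrm{MaxSE}(B,C) = \|B\|_{2\to\infty}\,\|C\|_{1\to 2}$ and $\mathrm{MeanSE}(B,C) = \tfrac{1}{\sqrt{n}}\|B\|_F\,\|C\|_{1\to 2}$ directly, reducing each of the three parts to three explicit matrix-norm computations on $A_1^{1/2}$, $A_\chi$, and the identity.

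For (a), I would exploit the Toeplitz structure of $A_1^{1/2}$: its $j$-th subdiagonal entry equals $r_j = \binom{2j}{j}/4^j \sim 1/\sqrt{\pi j}$, yielding the key estimate $\sum_{j=0}^{n-1} r_j^2 = \Theta(\log n)$. This gives $\|A_1^{1/2}\|_{2\to\infty}^2 = \Theta(\log n)$ (attained at the last row) and $\|A_1^{1/2}\|_F^2 = \sum_{m=1}^n \sum_{j=0}^{m-1} r_j^2 = \Theta(n\log n)$. For $C = A_1^{1/2}D$, its $l$-th column is $\chi_l$ times the $l$-th column of $A_1^{1/2}$, and because $\chi_1 = 1$ the first column realizes $\|C\|_{1\to 2} = \Theta(\sqrt{\log n})$, and because $\chi_l \leq 1$ no other column can exceed this. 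Multiplying the three norms produces both of the $\Theta(\log n)$ bounds claimed for MaxSE and MeanSE.

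For (b), $\|I\|_{1\to 2}=1$, so the analysis reduces to $\|A_\chi\|_{2\to\infty}$ and $\|A_\chi\|_F$. Row $m$ of $A_\chi$ has squared norm $\sum_{j=1}^{m}\chi_j^2$, which is monotone in $m$ and maximized at $m=n$. For exponential decay, a Riemann-sum estimate
\[
\sum_{j=1}^{n}\chi_j^2 \;\asymp\; n\int_0^1 \beta^{2x}\,dx \;=\; \frac{n(1-\beta^2)}{2\log(1/\beta)}
\]
gives $\|A_\chi\|_{2\to\infty}^2 = \Theta(n/\log(1/\beta))$, and an analogous estimate of $\|A_\chi\|_F^2 = \sum_{j=1}^n (n-j+1)\chi_j^2$ yields $\Theta(n^2/\log(1/\beta))$, producing the stated $\Theta(\sqrt{n/\log(1/\beta)})$ for both metrics. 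For (c), the identity factor contributes $\|I\|_{2\to\infty}=1$ and $\|I\|_F/\sqrt{n}=1$, so both MaxSE and MeanSE reduce to $\|A_\chi\|_{1\to 2}$, the maximum column norm of $A_\chi$; analyzing $l \mapsto (n-l+1)\chi_l^2$ under the schedule and combining with the identity contributions yields the stated order.

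The main technical step is the Riemann-sum calculation in (b), where the constants must be tracked carefully in order to extract the $1/\log(1/\beta)$ factor uniformly over $\beta \in (0,1/e)$; one must also verify that boundary effects of the Riemann sum are absorbed into the implicit constants. The other parts reduce to essentially elementary manipulations of the Catalan-type sums $\sum r_j^2$ in (a), and a monotone optimization of $(n-l+1)\chi_l^2$ in (c).
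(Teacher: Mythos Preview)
Your approach is correct and matches the paper's: both reduce each part to the product of the relevant matrix norms, use $\chi_1=1$ to identify the maximal column of $A_1^{1/2}D$ in (a), and compute the geometric/Riemann sums for $A_\chi$ in (b). One caveat on (c): the lemma statement contains a typo---both the paper's own proof and Table~\ref{tab:exponential_decay} give $\Theta(\sqrt{n})$, not $\Theta(\log n)$, since the first column of $A_\chi$ has norm exactly $\sqrt{n}$; your analysis of $l\mapsto(n-l+1)\chi_l^2$ would correctly produce $\sqrt{n}$ (maximized at $l=1$), so do not try to force it to $\log n$.
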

\begin{proof}
\textbf{(a)}  
Since $\chi_1 = 1$ and all other $\chi_t \le 1$, the maximum column norm is still achieved in the first column and is exactly the same as that of $A_1^{1/2}$. Thus,
\begin{align*}
    &\text{MaxSE}(A_1^{1/2}, A_{1}^{1/2}D) = \text{MaxSE}(A_1^{1/2}, A_{1}^{1/2}) = \Theta(\log n),\\
    &\text{MeanSE}(A_1^{1/2}, A_{1}^{1/2}D) = \text{MeanSE}(A_1^{1/2}, A_{1}^{1/2}) = \Theta(\log n),
\end{align*}
which follows from the analysis of the prefix-sum square-root factorization by \citet{henzinger2024}.

\textbf{(b)}  
The maximum column norm of $I$ is $1$. The maximum row norm of $A_{\chi}$ is
\begin{equation}
    \sqrt{\sum_{k = 1}^{n}\chi_k^2} 
    = \sqrt{\sum_{k = 0}^{n - 1} \beta^{\tfrac{2k}{n - 1}}} 
    = \sqrt{\frac{1 - \beta^{\tfrac{2n}{n - 1}}}{1 - \beta^{\tfrac{2}{n - 1}}}} 
    = \Theta\!\left(\sqrt{\tfrac{n}{\log(1/\beta)}}\right).
\end{equation}

The normalized Frobenius norm $\tfrac{1}{\sqrt{n}}\|A_{\chi}\|_F$ is
\begin{align}
    \frac{1}{\sqrt{n}}\|A_{\chi}\|_F 
    &= \frac{1}{\sqrt{n}}\sqrt{\sum_{k = 1}^{n}(n + 1 - k)\chi_k^2} = \frac{1}{\sqrt{n}}\sqrt{\sum_{k = 1}^{n}(n + 1 - k)\beta^{\tfrac{2(k - 1)}{n - 1}}}\\
    &= \frac{1}{\sqrt{n}} \sqrt{\sum_{k = 0}^{n - 1}(n - k)\beta^{\tfrac{2k}{n - 1}}} = \sqrt{\frac{\alpha^{2(n +  1)} - \alpha^2(n + 1) + n}{n(1 - \alpha^2)^2}},
\end{align}
where $\alpha = \beta^{\tfrac{1}{n - 1}}$. Hence $1 - \alpha^2 \sim \tfrac{2\log (1/\beta)}{n}$ and $\alpha^{2n} \sim \beta^2$, which results in
\begin{align}
    \text{MeanSE}(A_{\chi}, I) 
    = \frac{1}{\sqrt{n}}\|A_{\chi}\|_F 
    = \Theta\!\left( \sqrt{\tfrac{1}{1 - \alpha^2}}\right) 
    = \Theta\!\left(\sqrt{\tfrac{n}{\log(1/\beta)}}\right). 
\end{align}

\textbf{(c)}  
The maximum row norm of $I$ is $1$, as is its normalized Frobenius norm. The maximum column norm of $A_{\chi}$ is attained in the first column and is exactly $\sqrt{n}$, which concludes the proof.
\end{proof}

\section{Single-participation: Learning-rate-aware Toeplitz square-root}
\BxCxUpperBound*

\begin{proof}

For the exponential learning rate $\chi_k=\beta^{\frac{k-1}{n-1}}=\alpha^{k-1}$, consider the factorization
\[
A_{\chi} = A_{\chi}\,C_\alpha^{-1}\times C_\alpha = B_{\alpha} \times C_{\alpha},
\]
with
\begin{equation}
    C_{\alpha} = \begin{pmatrix}
        1 & 0 & \dots & 0\\
        \alpha r_1 & 1 & \dots & 0\\
        \vdots & \vdots & \ddots & \vdots\\
        \alpha^{n - 1} r_{n - 1} & \alpha^{n - 2} r_{n - 2} & \dots & 1
    \end{pmatrix},\quad
    C_{\alpha}^{-1} = \begin{pmatrix}
        1 & 0 & \dots & 0\\
        \alpha \tilde{r}_1 & 1 & \dots & 0\\
        \vdots & \vdots & \ddots & \vdots\\
        \alpha^{n - 1} \tilde{r}_{n - 1} & \alpha^{n - 2} \tilde{r}_{n - 2} & \dots & 1
    \end{pmatrix},
\end{equation}
where $(r_k)$ and $(\tilde r_k)$ are the coefficients of $A_1^{1/2}$ and $A_1^{-1/2}$, respectively.
By equation~\eqref{eq:_C_alpha_sens},
\begin{equation}
    \|C_\alpha\|_{1\to 2}
    = \mathcal{O}\Big(\frac{1}{\alpha}\sqrt{\log\frac{1}{1-\alpha^2}}\Big)
    = \mathcal{O}\Big(\sqrt{\log\frac{n}{\log(1/\beta)}}\Big).
\end{equation}
The corresponding left matrix is $B_\alpha$, which can be computed as 

\begin{equation*}
(B_{\alpha})_{m,l}
    = \alpha^{l}\sum_{t=0}^{m-l}\tilde r_t \alpha^{2t} =  \alpha^{l}\sum_{t=0}^{m-l}(r_t - r_{t-1})\alpha^{2t} = \alpha^{2m-l} r_{m-l} + \alpha^{l}\left(\sum\limits_{t = 0}^{m - l - 1} r_t \alpha^{2t} - \sum\limits_{t = 1}^{m - l}r_{t - 1}\alpha^{2t}\right).
\end{equation*}
Rewriting the sum yields
\begin{equation}\label{eq:Balphacoeff}
    (B_{\alpha})_{m,l}
    = \alpha^{2m-l} r_{m-l}
      + \alpha^{l}(1-\alpha^2)\sum_{t=0}^{m-l-1}\alpha^{2t} r_t .
\end{equation}

Using \((a+b)^2\le 2a^2+2b^2\) and summing over \(l\),
\begin{equation}\label{eq:row-ss-two-terms-final}
\sum_{l=1}^m (B_\alpha)_{m,l}^2
\le 2\sum_{l=1}^m \alpha^{4m-2l} r_{m-l}^2
   + 2(1-\alpha^2)^2 \sum_{l=1}^m \alpha^{2l}\Big(\sum_{t=0}^{m-l-1}\alpha^{2t} r_t\Big)^2 .
\end{equation}

\textit{First term.} Since $r_u \le \frac{1}{\sqrt{\pi u}}$ and $r_0 = 1$ we get \(\sum_{u=0}^{m-1} r_u^2 \le 1 + \frac{1}{\pi} + \frac{1}{\pi}\log(m-1)\),
\begin{equation}\label{eq:first-term-main-final}
2\sum_{l=1}^m \alpha^{4m-2l} r_{m-l}^2
= 2\alpha^{2m}\sum_{u=0}^{m-1}\alpha^{2u} r_u^2
\le \frac{2}{\pi}\,\alpha^{2m}\log m + \mathcal{O}(1).
\end{equation}

\textit{Second term.} With \(r_0=1\) and \(r_t \le \frac{1}{\sqrt{\pi t}}\) for \(t\ge 1\),
\[
\sum_{t=0}^{m-l-1}\alpha^{2t} r_t
\le 1 + \frac{1}{\sqrt{\pi}}\sum_{t=1}^{\infty}\frac{\alpha^{2t}}{\sqrt{t}},
\qquad
\sum_{l=1}^m \alpha^{2l} \le \frac{\alpha^2}{1-\alpha^2},
\]
and
\begin{equation}\label{eq:int-bound-series-final}
    \sum_{t=1}^{\infty}\frac{\alpha^{2t}}{\sqrt{t}}
    \le \int_{0}^{\infty}\frac{e^{2x\log\alpha}}{\sqrt{x}}\,dx
    = \frac{\sqrt{\pi}}{\sqrt{2\log(1/\alpha)}}.
\end{equation}
Hence
\begin{equation}\label{eq:second-term-final}
2(1-\alpha^2)^2 \sum_{l=1}^m \alpha^{2l}\Big(\sum_{t=0}^{m-l-1}\alpha^{2t} r_t\Big)^2
\le 2\alpha^{2}(1-\alpha^2)\Big(1 + \frac{1}{\sqrt{2\log(1/\alpha)}}\Big)^2
= \mathcal{O}(1).
\end{equation}

Combining \eqref{eq:first-term-main-final}–\eqref{eq:second-term-final},
\begin{equation}\label{eq:Sm-final-compact}
\sum_{l=1}^m (B_\alpha)_{m,l}^2
\le \frac{2}{\pi}\,\alpha^{2m}\log m + \mathcal{O}(1).
\end{equation}

\textbf{MaxSE.} 
From \eqref{eq:Sm-final-compact},
\begin{equation}
\|B_\alpha\|_{2\to\infty}^2=\max_m \sum_{l=1}^m (B_\alpha)_{m,l}^2
\le \max_m \Big(\frac{2}{\pi}\alpha^{2m}\log m + \mathcal{O}(1)\Big)
= \mathcal{O}\!\left(\log\frac{n}{\log(1/\beta)}\right)
\end{equation}
by Lemma~\ref{lem:alpha_m_log_m_max} with \(\alpha\mapsto \alpha^2\).
Multiplying by \(\|C_\alpha\|_{1\to 2}= \mathcal{O}\big(\frac{1}{\alpha}\sqrt{\log\frac{1}{1-\alpha^2}}\big)\) \cite[Lemma~7]{kalinin2024} gives the stated bound.

\textbf{MeanSE.}
From \eqref{eq:Sm-final-compact},
\begin{equation}
\frac{1}{n}\|B_\alpha\|_F^2
=\frac{1}{n}\sum_{m=1}^n \sum_{l=1}^m (B_\alpha)_{m,l}^2
\le \frac{2}{\pi}\cdot \frac{1}{n}\sum_{m=1}^n \alpha^{2m}\log m + \mathcal{O}(1)
= \mathcal{O}\!\left(\frac{\log n}{\log(1/\beta)}\right)
\end{equation}
by Lemma~\ref{lem:alpha_m_log_m_mean} with \(\alpha\mapsto \alpha^2\).
Thus \(\frac{1}{\sqrt{n}}\|B_\alpha\|_F=\mathcal{O}\!\big(\sqrt{\frac{\log n}{\log(1/\beta)}}\big)\), and multiplying by \(\|C_\alpha\|_{1\to 2}\) (as above) yields the claimed bound.
\end{proof}

\section{Single-participation: Prefix-sum factorization}

\begin{lemma}\label{lem:prefix-error-single-mech}
Let $(\chi_t)_{t=1}^n$ be a positive sequence taken from $[\beta, \infty)$ where $\beta > 0$ is a constant, and
\begin{equation}
    Q = \sum_{l=1}^{n-1}\left(\sum_{t=0}^{n-l-1} \lvert \chi_{l+t} - \chi_{l+t+1}\rvert r_t\right)^2 = o(\log n).
\end{equation}
Then
\begin{align}
    &\mathrm{MaxSE}(B_{\chi}, A_1^{1/2}) 
    =  \Theta\left(\sqrt{\log n}\cdot\sqrt{\max_{m\in[n]} \chi_m^2 \log m} \right),\\
    &\mathrm{MeanSE}(B_{\chi}, A_1^{1/2}) 
    = \Theta\left(\sqrt{\log n}\cdot\sqrt{\frac{1}{n}\sum_{m=1}^{n} \chi_m^2\log m}\right).
\end{align}
\end{lemma}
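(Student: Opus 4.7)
The plan is to begin by applying a summation-by-parts identity (analogous to the one the paper uses in the multi-participation derivation) to write each entry of $B_\chi = A_\chi A_1^{-1/2}$ in the clean form
\[
(B_\chi)_{m,l} \;=\; \chi_m\, r_{m-l} \;+\; \sum_{t=0}^{m-l-1}(\chi_{l+t} - \chi_{l+t+1})\, r_t,
\]
so that $(B_\chi)_{m,l} = a_{m,l} + b_{m,l}$ splits into a \emph{main} part $a_{m,l} := \chi_m r_{m-l}$ and a \emph{perturbation} $b_{m,l}$. For the main part I would invoke the standard estimate $\sum_{k=0}^{m-1} r_k^2 = \tfrac{1}{\pi}\log m + O(1)$, giving $\sum_l a_{m,l}^2 = \chi_m^2\,\Theta(\log m)$ and hence $\|a\|_F^2 = \sum_m \chi_m^2\,\Theta(\log m)$. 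For the perturbation I would note that by the triangle inequality, $|b_{m,l}|^2 \leq \bigl(\sum_{t=0}^{n-l-1}|\chi_{l+t}-\chi_{l+t+1}|\,r_t\bigr)^2$ uniformly in $m$, so $\sum_l b_{m,l}^2 \leq Q = o(\log n)$ uniformly in $m$, and in particular $\|b\|_F^2 \leq n\,Q$.

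Next I would pass from $a$ and $b$ to $B_\chi$ via the $\ell_2$ reverse-triangle inequality applied row-wise and globally: $\bigl|\|a_{m,\cdot}\|_2 - \|b_{m,\cdot}\|_2\bigr| \leq \|(B_\chi)_{m,\cdot}\|_2 \leq \|a_{m,\cdot}\|_2 + \|b_{m,\cdot}\|_2$, and its Frobenius analogue $\bigl|\|a\|_F - \|b\|_F\bigr| \leq \|B_\chi\|_F \leq \|a\|_F + \|b\|_F$. The assumption $\chi_m \geq \beta$ guarantees $\max_m \chi_m^2 \log m \geq \beta^2 \log n$ and $\tfrac{1}{n}\sum_m \chi_m^2 \log m = \Omega(\log n)$, so the perturbation contributions of order $\sqrt{Q}$ are asymptotically dominated in both the maximum-row and average-row quantities. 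Upgrading the inequalities to $\Theta$-estimates then yields $\|B_\chi\|_{2\to\infty} = \Theta\bigl(\sqrt{\max_m \chi_m^2 \log m}\bigr)$ and $\tfrac{1}{\sqrt n}\|B_\chi\|_F = \Theta\bigl(\sqrt{\tfrac{1}{n}\sum_m \chi_m^2 \log m}\bigr)$. The two identities claimed in the lemma then follow immediately by multiplying these by the known column-norm asymptotic $\|A_1^{1/2}\|_{1\to 2} = \Theta(\sqrt{\log n})$ of the prefix-sum square root.

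The main obstacle is obtaining the matching lower bound; the upper bound alone is essentially routine. Without the $\beta$-separation of $\chi$, the perturbation could in principle overwhelm the main term either at the argmax row (for MaxSE) or across many rows (for MeanSE), and the reverse-triangle argument would collapse to an upper bound only. The two-part hypothesis — $Q = o(\log n)$ combined with $\chi_m \geq \beta$, which forces both the row maximum and the row average of $\chi_m^2 \log m$ to be $\Omega(\log n)$ — is calibrated exactly so that the perturbation is asymptotically negligible and the $\Theta$-statements survive. A minor subtlety worth checking is that the uniform bound $\sum_l b_{m,l}^2 \leq Q$ uses the full tail of $|\chi_{l+t}-\chi_{l+t+1}|$ up to index $n$ and so is slightly loose for small $m$; but since only rows with $m$ of order $n$ can contribute meaningfully to either metric (again by $\chi \geq \beta$), this slack is harmless.
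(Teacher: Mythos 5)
Your proposal is correct and follows essentially the same route as the paper's own proof: you use the same summation-by-parts identity $(B_\chi)_{m,l} = \chi_m r_{m-l} + \sum_{t=0}^{m-l-1}(\chi_{l+t}-\chi_{l+t+1})r_t$, split into a main term and a perturbation, bound the perturbation's squared row sum uniformly by $Q$, and observe that $\chi_m\ge\beta$ forces both $\max_m\chi_m^2\log m$ and $\tfrac{1}{n}\sum_m\chi_m^2\log m$ to be $\Omega(\log n)$, which dominates the $o(\log n)$ perturbation. The only difference is cosmetic (you phrase the two-sided comparison via the $\ell_2$ reverse triangle inequality where the paper uses $(a+b)^2\le 2a^2+2b^2$ and $(a+b)^2\ge\tfrac{1}{2}a^2-b^2$), and you correctly use $\|A_1^{1/2}\|_{1\to 2}=\Theta(\sqrt{\log n})$ in the final multiplication.
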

\begin{proof}
    We have that
    \begin{equation}
        (B_{\chi})_{m, l} = \chi_l + \sum_{t=1}^{m-l} \tilde{r}_{t} \chi_{t+l}
    \end{equation}
    where $\tilde{r}_t = \frac{-r_t}{2t-1}$ are the coefficients of $A_{1}^{-1/2}$.
    Applying summation by parts (also known as Abel's transformation), we obtain:
    \begin{align}
        (B_{\chi})_{m,l} &= \chi_m\sum_{t=0}^{m-l} \tilde{r}_t - \sum_{t=0}^{m-l-1}(\chi_{l+t+1} - \chi_{l+t})\sum_{j=0}^t \tilde{r}_j\\
        &= \chi_m r_{m-l} + \sum_{t=0}^{m-l-1} (\chi_{l+t} - \chi_{l+t+1})r_t.\label{eq:B_x_abel}
    \end{align}
    Defining $\Delta_k = \lvert \chi_k - \chi_{k+1}\rvert$, and using $(a+b)^2 \leq 2a^2 + 2b^2$, we get for its squared row sum:
    \begin{align}
        \sum_{l=1}^{m}(B_{\chi})_{m,l}^2
        \leq 2\sum_{l=1}^{m} \chi_m^2 r_{m-l}^2 + 2\underbrace{\sum_{l=1}^{m-1}\left(\sum_{t=0}^{m-l-1} \Delta_{l+t}r_t\right)^2}_{Q_m}
    \end{align}
    Similarly, using the inequality $(a+b)^2 \geq \frac{1}{2}a^2 - b^2$:
    \begin{align}
        \sum_{l = 1}^{m} (B_{\chi})_{m,l}^2
        \geq \frac{1}{2} \chi_m^2 \sum_{l = 1}^{m} r_{m - l}^2 - Q_m
    \end{align}
    Using the bounds $\frac{1}{\sqrt{\pi(t+1)}}\leq r_t \leq \frac{1}{\sqrt{\pi t}}$, we arrive at
    \begin{align}
        \frac{1}{2\pi} \chi_m^2 \log m + \mathcal{O}(1) - Q_m
        \leq \sum_{l = 1}^{m} (B_{\chi})_{m,l}^2
        \leq \frac{2}{\pi} \chi_m^2 \log m + \mathcal{O}(1) + 2Q_m.
    \end{align}
    As $Q_m \leq Q$ (observe that $Q_m$ is a truncated sum of positive summands), we can write
    \begin{equation}
        \sum_{l = 1}^{m} (B_{\chi})_{m,l}^2
        = \Theta\left( \chi_m^2 \log m\right) \pm \mathcal{O}(Q).
    \end{equation}
    For the requisite norms, we get:
    \begin{align}
        \|B_{\chi}\|_{2\to\infty}^2
        &= \max_{m\in [n]} \sum_{l=1}^{m} (B_{\chi})^{2}_{m, l}
        = \underbrace{\Theta\left( \max_{1\leq m\leq n} \chi_m^2 \log m\right)}_{\Theta(\log n)} \pm \mathcal{O}(Q)
        = \Theta\left( \max_{1\leq m\leq n} \chi_m^2 \log m\right)\\
        \frac{1}{n} \| B_{\chi}\|_{F}^2
        &= \frac{1}{n}\sum_{m=1}^{n}\sum_{l=1}^{m} (B_{\chi})_{m,l}^2
        = \underbrace{\Theta\left(\frac{1}{n}\sum_{m=1}^{n} \chi_m^{2}\log m \right)}_{\Theta(\log n)} \pm \mathcal{O}(Q)
        = \Theta\left(\frac{1}{n}\sum_{m=1}^{n} \chi_m^{2}\log m \right)
    \end{align}
    as $Q = o(\log n)$ by assumption.
    The final statement is derived from taking the square-root of the requisite norm and multiplying by $\| A_{1}^{1/2}\|_{1\to 2} = \Theta(\log n)$.
\end{proof}

For Lemma~\ref{lem:prefix_sum_error_uniform}, we will invoke Young's inequality for sequences.
We restate it next, after introducing the necessary notation; see \cite[Theorem 1.2.12]{Grafakos2014-kt} for a more general statement with a proof.
Given $1 \leq p < \infty$, let $a\in\ell^{p}(\mathbb{Z})$ denote that $a = (a_t)_{t\in \mathbb{Z}}$ has finite $\ell^p$ norm, i.e., $\| a \|_p = (\sum_{t\in\mathbb{Z}} a_t^p)^{1/p} < \infty$.
For two such sequences $a$ and $b$, we define their convolution by $a\ast b$ where for any $k\in\mathbb{Z} : (a\ast b)_k = \sum_{t\in\mathbb{Z}_t} a_t b_{k-t}$.
\begin{lemma}[Young's inequality]
   Let $1 \leq p, q, r < \infty$ and $\frac{1}{p} + \frac{1}{q} = \frac{1}{r} + 1$.
   Then for any $a\in\ell^p(\mathbb{Z})$ and $b\in\ell^{q}(\mathbb{Z})$, we have that
   \begin{equation}
       \| a \ast b \|_r \leq \| a \|_p \cdot \| b \|_q.
   \end{equation}
\end{lemma}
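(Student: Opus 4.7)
The plan is to derive Young's inequality from the three-function form of Hölder's inequality, which is the classical route. First I would introduce two auxiliary exponents $s, t \in [1,\infty)$ via $\tfrac{1}{s} = \tfrac{1}{p} - \tfrac{1}{r}$ and $\tfrac{1}{t} = \tfrac{1}{q} - \tfrac{1}{r}$; the hypothesis $\tfrac{1}{p} + \tfrac{1}{q} = 1 + \tfrac{1}{r}$ ensures $\tfrac{1}{r} + \tfrac{1}{s} + \tfrac{1}{t} = 1$, so $(r,s,t)$ is an admissible Hölder triple. The identities $\tfrac{p}{r} + \tfrac{p}{s} = 1$ and $\tfrac{q}{r} + \tfrac{q}{t} = 1$ also follow immediately and will govern the exponent bookkeeping in the rest of the argument.

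The key algebraic step is the pointwise three-factor decomposition
\begin{equation*}
|a_{t'}\, b_{k-t'}| = \bigl(|a_{t'}|^{p} |b_{k-t'}|^{q}\bigr)^{1/r} \cdot |a_{t'}|^{p/s} \cdot |b_{k-t'}|^{q/t},
\end{equation*}
which is consistent with the exponent identities above. Summing over $t'$ and applying Hölder's inequality with exponents $(r, s, t)$ yields
\begin{equation*}
|(a \ast b)_k| \leq \Bigl(\sum_{t'} |a_{t'}|^{p} |b_{k-t'}|^{q}\Bigr)^{1/r} \, \|a\|_{p}^{p/s} \, \|b\|_{q}^{q/t}.
\end{equation*}

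Raising to the $r$-th power, summing over $k$, and swapping summation order (justified by nonnegativity via Tonelli) then gives $\|a \ast b\|_{r}^{r} \leq \|a\|_{p}^{p}\|b\|_{q}^{q} \cdot \|a\|_{p}^{pr/s} \|b\|_{q}^{qr/t}$, and the exponents collapse to $r$ on each side using $p(1 + r/s) = r$ and $q(1 + r/t) = r$. Taking $r$-th roots completes the proof. Edge cases such as $p = 1$ or $q = 1$ would be handled separately, reducing to Minkowski's inequality. The main obstacle is really just guessing the right auxiliary exponents and the right three-factor decomposition of the summand; once those are in place, the remaining steps are pure bookkeeping.
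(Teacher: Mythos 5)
The paper does not prove this lemma; it simply cites Grafakos, Theorem 1.2.12, so there is no in-paper argument to compare yours against. Your proof is the standard three-exponent Hölder argument and it is correct: the auxiliary exponents $1/s = 1/p - 1/r$, $1/t = 1/q - 1/r$ do satisfy $1/r + 1/s + 1/t = 1$ under the hypothesis, the pointwise factorization $|a_{t'} b_{k-t'}| = (|a_{t'}|^p |b_{k-t'}|^q)^{1/r}\, |a_{t'}|^{p/s}\, |b_{k-t'}|^{q/t}$ is exact because $p/r + p/s = 1$ and $q/r + q/t = 1$, and after raising to the $r$-th power and applying Tonelli the exponents indeed collapse via $p(1 + r/s) = r$ and $q(1 + r/t) = r$. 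You correctly flag that $p=1$ or $q=1$ makes $t$ or $s$ infinite and needs separate (Minkowski-type) treatment. The only minor omission is that you do not remark that $s, t \geq 1$ follows from $1/p - 1/r \leq 1/p \leq 1$ (and the nonnegativity of $1/s, 1/t$ from the constraint), which is needed for the Hölder triple to be legitimate, but this is a one-line check.
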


\begin{lemma}\label{lem:prefix_sum_error_uniform}
Let $(\chi_t)_{t=1}^n$ be a positive sequence.
Fix $n \geq 2$ and define $(\Delta_t)_{t=1}^{n-1}$ via
\begin{equation}
    \Delta_t = \lvert \chi_t - \chi_{t+1} \rvert \qquad (\text{for all}\ 1\leq t\leq n-1).
\end{equation}
Then 
\begin{equation}
    Q = \sum_{l=1}^{n-1}\left(\sum_{t=0}^{n-l-1} \lvert \chi_{l+t} - \chi_{l+t+1}\rvert r_t\right)^2
    =\mathcal{O}\left(n \sum_{k=1}^{n-1} \Delta_k^2 \right).
\end{equation}

\end{lemma}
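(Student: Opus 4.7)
The plan is to recognize the inner sum as (essentially) a convolution of $\Delta$ with $r$, and then apply Young's inequality. Concretely, for $1 \le l \le n-1$ set
\begin{equation*}
    y_l \;=\; \sum_{t=0}^{n-l-1} \Delta_{l+t}\,r_t \;=\; \sum_{u=l}^{n-1} \Delta_u\,r_{u-l},
\end{equation*}
so that $Q = \sum_{l=1}^{n-1} y_l^2 = \|y\|_{\ell^2(\mathbb{Z})}^2$ after extending $y$ by zero outside $[1,n-1]$. Similarly, extend $\Delta$ by zero outside $\{1,\dots,n-1\}$, and truncate $r$ by zero outside $\{0,\dots,n-1\}$ (this does not change any $y_l$ since $t \le n-l-1 \le n-1$ in the original range). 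Defining $\tilde r_k = r_{-k}$, we then get $y_l = \sum_{u \in \mathbb{Z}} \Delta_u\,\tilde r_{l-u} = (\Delta * \tilde r)_l$ for every $l \in \mathbb{Z}$.

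Next I would invoke Young's inequality with exponents $p=2,\ q=1,\ r=2$ (so that $\tfrac1p+\tfrac1q = \tfrac{3}{2} = \tfrac1r + 1$), which yields
\begin{equation*}
    \|y\|_{\ell^2} \;\le\; \|\Delta\|_{\ell^2}\cdot\|\tilde r\|_{\ell^1}
    \;=\; \Big(\sum_{k=1}^{n-1}\Delta_k^2\Big)^{1/2}\cdot \sum_{t=0}^{n-1} r_t.
\end{equation*}

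It then remains to bound the truncated $\ell^1$ norm of $r$. Using the standard estimate $r_t = \tfrac{1}{4^t}\binom{2t}{t} \le \tfrac{1}{\sqrt{\pi t}}$ for $t\ge 1$ (and $r_0 = 1$), I would write
\begin{equation*}
    \sum_{t=0}^{n-1} r_t \;\le\; 1 + \frac{1}{\sqrt{\pi}}\sum_{t=1}^{n-1}\frac{1}{\sqrt{t}}
    \;\le\; 1 + \frac{1}{\sqrt{\pi}}\int_{0}^{n-1} t^{-1/2}\,dt
    \;=\; \mathcal{O}(\sqrt{n}).
\end{equation*}
Squaring the Young bound and combining with this estimate gives $Q = \|y\|_{\ell^2}^2 = \mathcal{O}\big(n\sum_{k=1}^{n-1}\Delta_k^2\big)$, as required.

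The proof is essentially mechanical once the inner sum is recast as a convolution, so I do not expect a real obstacle. The one thing to be careful about is the book-keeping when passing to bi-infinite sequences: I need the truncated $r$ and the zero-extended $\Delta$ to reproduce $y_l$ exactly on $\{1,\dots,n-1\}$, and I need $y$ to vanish outside that range so that $\|y\|_{\ell^2(\mathbb Z)}^2 = Q$. These are straightforward from the definition of $y_l$ and the supports of the extended sequences, so the whole argument reduces to one invocation of Young's inequality together with the well-known $\mathcal{O}(\sqrt{n})$ partial-sum bound on the central-binomial coefficients $r_t$.
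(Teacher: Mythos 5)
Your approach is identical to the paper's: recast the inner sum as a convolution, apply Young's inequality with exponents $(p,q,r)=(2,1,2)$, and bound $\sum_{t\le n-1} r_t = \mathcal{O}(\sqrt{n})$. One small slip in the bookkeeping you flag yourself: the convolution $(\Delta * \tilde r)_l$ does \emph{not} vanish outside $\{1,\dots,n-1\}$ (with $\Delta$ supported on $[1,n-1]$ and $\tilde r$ supported on $[-(n-1),0]$, the convolution is supported on $[2-n,n-1]$), so the zero-extended $y$ does not equal $\Delta*\tilde r$ everywhere and your stated identity $y_l=(\Delta*\tilde r)_l$ for all $l\in\mathbb{Z}$ is false; but this is harmless because you only need $Q=\sum_{l=1}^{n-1}(\Delta*\tilde r)_l^2 \le \|\Delta*\tilde r\|_{\ell^2}^2$, which is exactly how the paper writes it.
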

\begin{proof}
    Define the two sequences $(a_t)_{t\in\mathbb{Z}}, (b_t)_{t\in\mathbb{Z}}$ via
    \begin{equation}
        a_t = \begin{cases}
           \Delta_t\quad&\text{for}\ 1\leq t\leq n-1\\
           0\quad&\text{otherwise}
        \end{cases}
        \qquad
        b_t = \begin{cases}
           r_{-t} \quad&\text{for}\ 2-n \leq t \leq 0\\
           0\quad &\text{otherwise}
        \end{cases}
    \end{equation}
    Note that $a, b$ and $a\ast b$ are all in $\ell^p(\mathbb{Z})$ as they are zero-padded finite sequences, moreover
    \begin{equation}
        (a\ast b)_l
        = \sum_{t=-\infty}^{\infty} a_t b_{l-t}
        = \sum_{t=1}^{n-1} \Delta_{t} b_{l-t}
        = \sum_{t=l}^{n-1} \Delta_{t} r_{t-l}
        = \sum_{t=0}^{n-l-1} \Delta_{l+t} r_t.
    \end{equation}
    We can thus write
    \begin{align}
        Q
        &\leq \sum_{l=-\infty}^{\infty}(a\ast b)_l^2
        = \| a \ast b\|_2^2
        \leq \| a \|_2^2\cdot \|b\|_{1}^2
        = \mathcal{O}\left(n\sum_{t=1}^{n-1} \Delta_t^2 \right)\,
    \end{align}
    where the second inequality is Young's inequality, and the last step uses $\sum_{t=0}^{n-2} r_t = \mathcal{O}(\sqrt{n})$.
\end{proof}

\begin{restatable}{lemma}{error-nonuniform}\label{lem:prefix_sum_error_nonuniform}
Fix $n\geq 2$, and let $(\chi_t)_{t=1}^n$ be a positive sequence satisfying
\begin{equation*}
    \Delta_k = \lvert \chi_k - \chi_{k+1} \rvert \leq \frac{C}{k(1+\log k)}
\end{equation*}
for some absolute constant $C > 0$.
Then 
\begin{equation}
    Q = \sum_{l=1}^{n-1}\left(\sum_{t=0}^{n-l-1} \lvert \chi_{l+t} - \chi_{l+t+1}\rvert r_t\right)^2
    = \mathcal{O}(1).
\end{equation}
\end{restatable}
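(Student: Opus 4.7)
The plan is to bound each inner sum
\[
S_l \;=\; \sum_{t=0}^{n-l-1} \Delta_{l+t}\, r_t
\]
pointwise as a function of $l$, and then show that $\sum_{l=1}^{n-1} S_l^2$ is a convergent tail in $l$. The key input from the hypothesis is that $\Delta_k \leq \frac{C}{k(1+\log k)}$ gives a decay rate on $\Delta_{l+t}$ that, combined with the standard bound $r_0 = 1$ and $r_t \leq \frac{1}{\sqrt{\pi t}}$ for $t\geq 1$, is barely summable in a sense to be made precise.

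First I would peel off the $t=0$ contribution, which is just $\Delta_l \leq \frac{C}{l(1+\log l)}$, and then bound the remainder by
\[
\sum_{t=1}^{\infty} \frac{C}{(l+t)(1+\log(l+t))}\cdot \frac{1}{\sqrt{\pi t}}\,.
\]
I would split this tail sum at $t=l$: for $t\leq l$, pull out the factor $\tfrac{1}{l(1+\log l)}$ and use $\sum_{t=1}^{l} \tfrac{1}{\sqrt{t}} = \mathcal{O}(\sqrt{l})$, producing a contribution of order $\frac{1}{\sqrt{l}(1+\log l)}$; for $t > l$, use $l+t \geq t$ and $1+\log(l+t)\geq 1+\log l$ to reduce to $\frac{1}{1+\log l}\sum_{t>l} t^{-3/2}$, which by integral comparison is also $\mathcal{O}\!\left(\tfrac{1}{\sqrt{l}(1+\log l)}\right)$. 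Combining these with the $t=0$ term (which is even smaller, dominated by $\tfrac{1}{l(1+\log l)}$) yields a uniform bound
\[
S_l \;=\; \mathcal{O}\!\left(\frac{1}{\sqrt{l}\,(1+\log l)}\right)\qquad(l\geq 1)\,.
\]

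Squaring and summing,
\[
Q \;=\; \sum_{l=1}^{n-1} S_l^2 \;=\; \mathcal{O}\!\left(\sum_{l=1}^{\infty} \frac{1}{l\,(1+\log l)^2}\right)\,,
\]
and the latter series converges by comparison to $\int_{2}^{\infty}\! \tfrac{dx}{x(\log x)^2} < \infty$ (antiderivative $-1/\log x$). Hence $Q=\mathcal{O}(1)$ independently of $n$, as claimed.

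The main obstacle (but really a bookkeeping one rather than a deep one) is that the hypothesis $\Delta_k \leq C/(k(1+\log k))$ is essentially tight for this conclusion — a slightly slower decay on $\Delta_k$ would push $S_l$ to order $(\log l)^{-1}/\sqrt{l}$ with a smaller power of $\log$, at which point $\sum_l S_l^2$ would diverge logarithmically. So the split at $t=l$ must be executed carefully to capture exactly the extra $\log$-factor in the denominator that makes the final series summable; in particular one should not be tempted to bound $(1+\log(l+t))^{-1}$ by $1$ in either half of the split.
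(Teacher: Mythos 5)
The proposal is correct. You establish the pointwise bound $S_l = \mathcal{O}\!\left(l^{-1/2}(1+\log l)^{-1}\right)$ on the inner sum by splitting at $t=l$: for $1\leq t\leq l$ you may factor out $\frac{1}{l(1+\log l)}$ since $x\mapsto x(1+\log x)$ is increasing, leaving $\sum_{t\leq l} t^{-1/2} = \mathcal{O}(\sqrt{l})$; for $t>l$ you keep the logarithmic factor and use $\sum_{t>l} t^{-3/2} = \mathcal{O}(l^{-1/2})$; the $t=0$ term $\Delta_l$ is dominated by the rest. Then $\sum_l S_l^2 = \mathcal{O}\!\left(\sum_l \tfrac{1}{l(1+\log l)^2}\right) = \mathcal{O}(1)$ by the integral test. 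Each step is verified, and your remark that one must retain the logarithmic factor on both sides of the split is exactly the right thing to flag — dropping it on either side costs a factor of $\log l$ in $S_l^2$ and the resulting series diverges as $\log\log n$.

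This is a genuinely different route from the paper's proof. The paper first bounds $r_t\leq 1/\sqrt{t+1}$, expands the square of the inner sum, and exchanges summation order to obtain a bilinear form
$S = \sum_{j,k}\Delta_j\Delta_k H(j,k)$ with the correlation kernel $H(j,k)=\sum_{r=1}^{\min\{j,k\}}\frac{1}{\sqrt{r}\sqrt{r+\lvert k-j\rvert}}$, and then partitions the index pairs into a diagonal block ($j=k$), a far-off-diagonal block ($j\leq\lceil k/2\rceil$), and a near block ($\lceil k/2\rceil<j<k$), each estimated by a separate integral computation (involving $\operatorname{arsinh}$, Stirling's bound on $\log(\lceil k/2\rceil!)$, etc.). Your argument avoids the double-sum expansion entirely by bounding the inner sum uniformly in $l$; it is shorter and more elementary, yet loses nothing for the $\mathcal{O}(1)$ conclusion. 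The paper's decomposition is more informative about which pairs $(j,k)$ dominate, but for this lemma that extra structure is not needed.
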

\begin{proof}
    We consider the (coarse) upper bound $r_t \leq \frac{1}{\sqrt{\pi t}} < \frac{1}{\sqrt{t+1}}$, and start manipulating $Q$:
    \begin{align}
        Q=\sum_{l = 1}^{n-1}\left(\sum_{t=0}^{n-l-1} \Delta_{l+t}r_t\right)^2
        &\leq \sum_{l = 1}^{n-1}\left(\sum_{t=0}^{n-l-1} \frac{\Delta_{l+t}}{\sqrt{t+1}}\right)^2
        = \sum_{l = 1}^{n-1}\left(\sum_{t=l}^{n-1} \frac{\Delta_{t}}{\sqrt{t-l+1}}\right)^2\\
        &= \sum_{l = 1}^{n-1}\sum_{j=l}^{n-1}\sum_{k=l}^{n-1}\frac{\Delta_j\Delta_k}{\sqrt{j-l+1}\sqrt{k-l+1}}\\
        &= \sum_{j=1}^{n-1}\sum_{k=1}^{n-1}\Delta_j\Delta_k\sum_{l=1}^{\min\{j, k\}}\frac{1}{\sqrt{j-l+1}\sqrt{k-l+1}}
        = S
    \end{align}
    Our task is now reduced to showing that $S = \mathcal{O}(1)$.
    For $1 \leq j \leq k$, define $H(j, k) = \sum_{r=1}^{j} \frac{1}{\sqrt{r}\sqrt{r+k-j}}$.
    We can now write
    \begin{align}
        S &= \sum_{k=1}^{n-1} \Delta_k^2 H(k, k) + 2 \sum_{k=1}^{n-1}\sum_{j < k} \Delta_j \Delta_k H(j, k)\\
        &= \underbrace{\sum_{k=1}^{n-1} \Delta_k^2 H(k, k)}_{D}
        + 2 \underbrace{\sum_{k=1}^{n-1}\sum_{j=1}^{ \lceil k/2\rceil } \Delta_j \Delta_k H(j, k)}_{F}
        + 2\underbrace{\sum_{k=1}^{n-1}\sum_{j=\lceil k/2\rceil +1}^{k-1} \Delta_j \Delta_k H(j, k)}_{N}\\
        &= D + 2F + 2N.
    \end{align}
    We bound each separately.
    \paragraph*{Bounding $D$.}
    For $D$, we have that $H(k, k) = \sum_{r=1}^{k} r^{-1} = \mathcal{O}(1+\log k)$, and so
    \begin{align}
        D = \sum_{k=1}^{n-1} \Delta_k^2 H(k, k) \leq \sum_{k=1}^{n-1} \frac{C'}{k^2(1+\log k)} = \mathcal{O}(1).
    \end{align}
    \paragraph*{Bounding $F$.}
    We have that $k-j \geq k -\lceil k/2\rceil \geq k/2$, and so
    \begin{align}
        H(j, k) = \sum_{r=1}^{j} \frac{1}{\sqrt{r}\sqrt{r+k-j}}
        \leq \sqrt{\frac{2}{k}}\sum_{r=1}^{j} \frac{1}{\sqrt{r}}
        \leq 2\sqrt{2}\sqrt{\frac{j}{k}}.
    \end{align}
    Plugging into our expression for $F$:
    \begin{align}
        F &= \sum_{k=1}^{n-1}\sum_{j=1}^{ \lceil k/2\rceil } \Delta_j \Delta_k H(j, k)
        \leq 2\sqrt{2}C^2 \sum_{k=1}^{n-1}\frac{1}{k^{3/2}(1+\log k)}\underbrace{\sum_{j=1}^{ \lceil k/2\rceil } \frac{1}{\sqrt{j}(1+\log j)}}_{T(\lceil k/2 \rceil)}.
    \end{align}
    We will show that $T(K) = \mathcal{O}(\frac{\sqrt{K}}{1 + \log K})$ via integral inequality and integration by parts:
    \begin{align}
        T(K) &= \sum_{j=1}^{K} \frac{1}{\sqrt{j}(1+\log j)} 
        = \mathcal{O}(1) + \sum_{j=\lceil e^2 \rceil}^{K}\frac{1}{\sqrt{j}(1+\log j)}\\
        &\leq \mathcal{O}(1) + \int_{e^2}^{K} \frac{dz}{\sqrt{z}(1 + \log z)}
        = \mathcal{O}(1) + 2\int_{e}^{\sqrt{K}} \frac{du}{1 + 2\log u}
    \end{align}
    where the last step uses a variable substitution $z = u^2$ ($dz = 2u\,du$).
    Continuing from the integral:
    \begin{align}
        \int_{e}^{\sqrt{K}} \frac{du}{1 + 2\log u} &=
        \left[\frac{u}{1 + 2\log u}\right]^{u=\sqrt{K}}_{u=e} + \int_{e}^{\sqrt{K}} \frac{2}{(1 + 2\log u)^2}\ du\\
        &\leq \frac{\sqrt{K}}{1+\log K} + 2\int_{e}^{\sqrt{K}} \frac{du}{(1 + 2\log u)^2}\\
        &\leq \frac{\sqrt{K}}{1+\log K} + \frac{2}{3}\int_{e}^{\sqrt{K}} \frac{du}{1 + 2\log u},
    \end{align}
    where the last step follows from $f(u) = \frac{1}{1+2\log u}$ taking on values in $(0, 1/3]$ for $u\in[e, \sqrt{K}]$, and so $f(u)^2 \leq f(u)/3$.
    Solving for the integral, we have that
    \begin{align}
        \int_{e}^{\sqrt{K}} \frac{du}{1 + 2\log u} \leq \frac{3\sqrt{K}}{1+\log K}
    \end{align}
    and so $T(K) = \mathcal{O}(\frac{\sqrt{K}}{1+\log K})$.
    Going back to bounding $F$, we have that
    \begin{align}
        F &\leq 2\sqrt{2}C^2 \sum_{k=1}^{n-1}\frac{T(\lceil k/2\rceil)}{k^{3/2}(1+\log k)}
        \leq C'' \sum_{k=1}^{n-1} \frac{1}{k(1+\log k)^2}\\
        &\leq C''\left(1 + \int_{1}^{\infty} \frac{dz}{z(1 + \log z)^2} \right)
        = C''\left(1 +  \int_{0}^{\infty}\frac{du}{(1+u)^2} \right)
        = 2C'',
    \end{align}
    where the integral step uses the variable substitution $u=\log z$ ($dz = z\ du$),
    and $C''$ is an absolute constant.

    \paragraph*{Bounding $N$.}
    Here we have that $\lceil k/2\rceil + 1 \leq j < k$ (and so $k-j \leq k/2 - 1)$.
    It follows that
    \begin{align}
        H(j, k) &= \sum_{r=1}^{j} \frac{1}{\sqrt{r(r+k-j)}}
        \leq \frac{1}{\sqrt{1+k-j}} + \int_{1}^{j} \frac{dz}{\sqrt{z(z+k-j)}}\\
        &= \frac{1}{\sqrt{1+k-j}} + \left[2\,\mathrm{arsinh}\sqrt{\frac{z}{k-j}}\right]^{z=j}_{z=1}
        \leq \frac{1}{\sqrt{1+k-j}} + 2\,\mathrm{arsinh}\sqrt{\frac{j}{k-j}}.
    \end{align}
    Noting that $\mathrm{arsinh}(u) \leq \log(1+2u) \leq \log 3u$ for $u\geq 1$, and that $\frac{k}{k-j} \geq \frac{j}{k-j} \geq 1$, we can simplify further:
    \begin{align}
        H(j, k) \leq \frac{1}{\sqrt{1+k-j}} + 2\log\left(3\sqrt{\frac{j}{k-j}}\right) \leq 2\log\left(\frac{9k}{k-j}\right)
    \end{align}
    Plugging our bound into the expression for $N$ yields:
    \begin{align}
       N &= \sum_{k=1}^{n-1}\sum_{j=\lceil k/2\rceil + 1}^{k-1} \Delta_j \Delta_k H(j, k)\\
       &\leq \sum_{k=1}^{n-1}\sum_{d=1}^{\lceil k / 2\rceil} \Delta_k \Delta_{k-d} H(k, k-d)\\
       %&\leq \sum_{k=1}^{n-1}\sum_{d=1}^{\lceil k / 2\rceil} \Delta_k \Delta_{k-d} H(k, k-d)\\
       &\leq 2C^2 \sum_{k=1}^{n-1}\sum_{d=1}^{\lceil k / 2 \rceil} \frac{\log(9k/d)}{k(1+\log k)(k-d)(1+\log(k-d))}\\
       &\leq 4C^2 \sum_{k=1}^{n-1}\frac{1}{k^2(1+\log(k/2))^2}\sum_{d=1}^{\lceil k / 2 \rceil} \log(9k/d).
    \end{align}
    For the inner sum, we note that
    \begin{align}
        \sum_{d=1}^{\lceil k/2 \rceil} \log(9k/d)
        &= \lceil k/2 \rceil\log(9k) - \log(\lceil k/2 \rceil!)\\
        &\leq \lceil k/2 \rceil\log(9k) - \lceil k/2\rceil \log(\lceil k/2 \rceil) + \lceil k/2 \rceil\\
        &\leq (1+\log 18)\lceil k/2 \rceil
    \end{align}
    where the first inequality uses Stirling's lower bound: $\log(t!) \geq t\log t - t$.
    Continuing, we have thus shown
    \begin{align}
        N &\leq 4C^2(1+\log 18) \sum_{k=1}^{n-1}\frac{\lceil k/2\rceil}{k^2(1+\log(k/2))^2}
        \leq C''' \sum_{k=1}^{n-1}\frac{1}{k(1+\log k)^2}\\
        &\leq C''' \left(1 + \int_{1}^{\infty} \frac{1}{k(1+\log k)^2}\right)
        \leq 2 C'''
    \end{align}
    where the last integral was already computed in bounding $F$, and $C'''$ is an absolute constant.
    Taking it all together, we have shown that
    \begin{align}
        S = D + 2F + 2N = \mathcal{O}(1),
    \end{align}
    proving the theorem statement.
\end{proof}

\thmPrefixErrorSingle*
\begin{proof}
   Statement follows immediately from invoking Lemma~\ref{lem:prefix-error-single-mech} with the bounds on $Q$ derived from Lemma~\ref{lem:prefix_sum_error_uniform} and \ref{lem:prefix_sum_error_nonuniform}. 
\end{proof}

\prefixErrorSingle*
\begin{proof}
    The result will be derived from invoking Corollary~\ref{thm:prefix-error-single}.
    We split the treatment of the learning schedules based on if their change over time is, roughly, uniform (exponential/linear/cosine schedules), or not (polynomial schedule).
    For the first case we show that $\|\Delta\|_2 = \sqrt{\sum_{t=1}^{n-1} \Delta_t^2} = o(\sqrt{\log(n)/n})$; for the second we show that $\Delta_t = \mathcal{O}(1/(t\log t))$.
    We begin with the uniform case.
    
    \paragraph*{Exponential schedule.} $\chi_k = \beta^{\frac{k-1}{n-1}}$ and so
    \begin{align}
        \Delta_t = \lvert \chi_t - \chi_{t+1}\rvert
        = \beta^{\frac{t-1}{n-1}}(1 - \beta^{\frac{1}{n-1}})
        \leq 1 - e^{-\frac{\log(1/\beta)}{n-1}}
        = \mathcal{O}\left(\frac{\log(1/\beta)}{n}\right).
    \end{align}
    It follows that $\| \Delta \|_2 = \mathcal{O}(\log(1/\beta) / \sqrt{n}) = o(\sqrt{\log (n)/n})$,
    \paragraph*{Linear schedule.} $\chi_k = 1 - (1-\beta)\frac{k-1}{n-1}$ and so
    \begin{align}
        \Delta_t = \lvert \chi_t - \chi_{t+1}\rvert
        = \frac{1-\beta}{n-1}
    \end{align}
    and so $\| \Delta \|_2 = \mathcal{O}(1 / \sqrt{n-1}) = o\!\left(\!\sqrt{\log(n)/n}\right)$.
    \paragraph*{Cosine schedule.} $\chi_k = \beta + \frac{1-\beta}{2}\left(1 + \cos\left(\frac{(k-1)}{n-1}P\pi\right)\right)$, and so
    \begin{align}
        \Delta_t &= \lvert \chi_t - \chi_{t+1}\rvert
        = \frac{1-\beta}{2}\left\lvert \cos\left(\frac{t-1}{n-1}\pi\right) - \cos\left(\frac{t}{n-1}\pi\right) \right\rvert\\
        &= \frac{(1-\beta)}{2}\cdot\frac{\pi}{n-1} \left\lvert \sin\left(\frac{\xi}{n-1} \pi\right)\right\rvert
        \leq \frac{(1-\beta)\pi}{2(n-1)} = \mathcal{O}\left(1/n\right)
    \end{align}
    where the third equality uses the mean value theorem applied to $f(z) = \cos\left(c z \right)$ on $[t-1, t]$ with $\xi\in (t-1, t)$.
    It follows that $\| \Delta\|_2 = \mathcal{O}(1/\sqrt{n})$, which is $o\left(\sqrt{\log( n)/n}\right)$.
    \paragraph*{Polynomial schedule.} $\chi_k = \beta + (1-\beta)\frac{\left(\frac{n}{k}\right)^{\gamma} - 1}{n^\gamma - 1}$, $\gamma > 0$, and so
    \begin{align}
        \Delta_k
        &= \lvert \chi_k - \chi_{k+1}\rvert
        = \frac{1-\beta}{n^\gamma - 1}\left[ \left(\frac{n}{k}\right)^{\gamma} - \left(\frac{n}{k+1}\right)^\gamma\right]\\
        &= \frac{(1-\beta)n^\gamma}{(n^\gamma - 1)k^\gamma}\left[ 1 - \left(\frac{k}{k+1}\right)^\gamma\right]
        = \mathcal{O}\left(k^{-(\gamma+1)}\right).
    \end{align}
    This also implies $\Delta_k = \mathcal{O}\left(\frac{1}{k\log k}\right)$, completing the last case.
\end{proof}

\subsection{Error for specific learning rates}
In this section we give tight error bounds for the prefix-sum factorization for each of the learning rate schedules discussed in this paper (see Table~\ref{tab:learning_rate_decays} for the list).
In Table~\ref{tab:prefix-sum-single-errors} we give the corresponding error bounds, all of which are proved later in the section.
\begin{table}[t!]
\caption{MaxSE and MeanSE errors for the factorization $B_{\chi} \times A_{1}^{1/2} = A_{\chi} (A_1)^{-1/2}\times A_{1}^{1/2} = A_{\chi}$ under single participation, listed for each of the learning rate schedules in Table~\ref{tab:learning_rate_decays}.
$\beta \in (0, 1/e)$ is assumed throughout.
(a) Exponential decay, proven in Corollary~\ref{cor:prefixsummaxseexp}; (b) polynomial decay, proven in Corollary~\ref{cor:prefix-sum-poly}; (c) linear decay, proven in Corollary~\ref{cor:prefix-sum-lin}; (d) cosine decay, proven in Corollary~\ref{cor:prefix-sum-cos}.
}
\label{tab:prefix-sum-single-errors}
\centering
\begin{tabular}{lll}

\toprule
\textbf{Learning rate $\chi_t$} & \textbf{MaxSE} & \textbf{MeanSE} \\
\midrule
(a) $\chi_t = \beta^{\frac{t-1}{n-1}}$ & $\Theta\left(\sqrt{\log n}\sqrt{\log\frac{n}{\log(1/\beta)}}\right)$ & $\Theta\left(\frac{\log n}{\sqrt{\log(1/\beta)}}\right)$\\
(b) $\chi_t = \beta + (1-\beta)\frac{(n/t)^{\gamma} - 1}{n^\gamma -1}$, $\gamma \geq 1$ & $\Theta\left(\sqrt{\log n\left(\beta^2\log n + \frac{(1-\beta)^2}{\gamma}\right)}\right)$ & $\Theta(\beta\log n)$\\
(c) $\chi_t = \beta + (1-\beta)\frac{t-1}{n-1}$ & $\Theta(\log n)$ & $\Theta(\log n)$\\
(d) $\chi_t = \beta + \frac{1-\beta}{2}\left(1 + \cos\left(\frac{t-1}{n-1}\pi\right)\right)$ & $\Theta(\log n)$ & $\Theta(\log n)$\\
\bottomrule
\end{tabular}
\end{table}

\subsubsection{Exponential learning rate decay}
\begin{restatable}{lemma}{alphaMLogMMax}\label{lem:alpha_m_log_m_max}
Let $\beta \in (0,1/e)$ and $\alpha=\beta^{1/(n-1)}$. Then
\begin{equation}
    \max_{1 \le m \le n} \alpha^{m}\log m
    = \Theta\!\left(\log \frac{n}{\log(1/\beta)}\right).
\end{equation}
\end{restatable}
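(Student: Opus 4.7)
The plan is to analyze the continuous extension $f(m) = \alpha^m \log m$ on $[1,n]$ and reduce the discrete maximum to a short calculus exercise. Write $L := \log(1/\beta) > 1$ (using $\beta \in (0, 1/e)$), so that $\log(1/\alpha) = L/(n-1)$. Differentiating gives
\begin{equation*}
    f'(m) = \alpha^m\left(\frac{1}{m} - \frac{L}{n-1}\log m\right),
\end{equation*}
which vanishes at the unique solution $m^*$ of $m^* \log m^* = (n-1)/L$ and shows that $f$ is strictly quasi-concave. For $n$ large enough relative to $\beta$, the right-hand side lies in $[e, n\log n]$, hence $m^* \in [e, n]$.

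The crux is evaluating $f(m^*)$. The defining equation rearranges to $m^* L/(n-1) = 1/\log m^*$, so
\begin{equation*}
    f(m^*) = e^{-m^* L/(n-1)}\log m^* = e^{-1/\log m^*}\log m^*,
\end{equation*}
which lies between $e^{-1}\log m^*$ and $\log m^*$ as long as $\log m^* \ge 1$. It remains to show $\log m^* = \Theta(\log(n/L))$. Taking logs in $m^* \log m^* = (n-1)/L$ yields
\begin{equation*}
    \log m^* + \log\log m^* = \log\frac{n-1}{L},
\end{equation*}
and a one-step bootstrap (insert the crude upper bound $\log m^* \le \log(n/L)$ back into the right-hand side) gives $\log m^* = (1+o(1))\log(n/L)$, matching the claim on both sides.

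Finally, I would transfer the estimate from the continuous to the discrete maximum. The upper bound $\max_{1\le m\le n} f(m) \le f(m^*) \le \log m^* = \mathcal{O}(\log(n/L))$ is immediate, and for the matching lower bound I would evaluate $f$ at $\lceil m^*\rceil$, using $\alpha^{\lceil m^*\rceil} \ge \alpha \cdot \alpha^{m^*} = e^{-L/(n-1)}\,e^{-1/\log m^*}$, which is bounded below by a positive absolute constant for $n$ large. The only real obstacle is bookkeeping the regime assumption that ensures $m^* \in [e,n]$ and $\log(n/L) \ge 1$; both conditions hold for $n$ sufficiently large given the fixed $\beta$, and the $\Theta$-constants absorb the small-$n$ edge cases.
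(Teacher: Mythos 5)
Your proposal is correct and uses essentially the same approach as the paper: analyze the continuous extension $f(m)=\alpha^m\log m$, locate the critical point via $m\log m = 1/\log(1/\alpha)$, and show $\alpha^{m^*}=e^{-1/\log m^*}=\Theta(1)$ while $\log m^* = \Theta(\log(n/\log(1/\beta)))$. The only difference is cosmetic: for the discrete lower bound the paper tests the simpler point $m_0=\lceil 1/\log(1/\alpha)\rceil$ directly rather than rounding $m^*$, but both yield the same $\Theta$ bound.
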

\begin{proof}
For the lower bound, take $m_0=\lceil 1/\log(1/\alpha)\rceil$. Since 
\(\log(1/\alpha)=\tfrac{1}{n-1}\log(1/\beta)\), we have \(m_0 \le (n-1)/\log(1/\beta)<n\), so $m_0$ is admissible. Moreover, $\alpha^{m_0}\ge e^{-1}\alpha$ and $\log m_0 \ge \log \tfrac{1}{\log(1/\alpha)}$, giving  
\begin{equation}
\max_{1\le m\le n}\alpha^m\log m 
\;\ge\; \Omega\!\left(\log \tfrac{1}{\log(1/\alpha)}\right).
\end{equation}

For the upper bound, write $f(m)=\alpha^m\log m$ with real $m>1$. Then $\tfrac{d}{dm}\log f(m)=\log \alpha+1/(m\log m)$, so the maximizer satisfies $m\log m=1/\log(1/\alpha)$. At this point, $\log m \sim \log \tfrac{1}{\log(1/\alpha)}$ and $\alpha^m=e^{-1/\log m}=\Theta(1)$, hence $f(m)=\mathcal{O}(\log \tfrac{1}{\log(1/\alpha)})$.

Thus
    
\begin{equation}
\max_{1\le m\le n}\alpha^m\log m = \Theta\!\left(\log \tfrac{1}{\log(1/\alpha)}\right).
\end{equation}
Finally, since \(\log \tfrac{1}{\log(1/\alpha)}=\log \tfrac{n-1}{\log(1/\beta)}=\Theta(\log \tfrac{n}{\log(1/\beta)})\), the claim follows.
\end{proof}

\begin{restatable}{lemma}{alphaMLogMMean}\label{lem:alpha_m_log_m_mean}
Let $\beta \in (0,1/e)$ and $\alpha=\beta^{1/(n-1)}$. Then
\begin{equation}
    \frac{1}{n}\sum_{m=1}^{n}\alpha^{m}\log m
    = \Theta\!\left(\frac{\log n}{\log(1/\beta)}\right).
\end{equation}
\end{restatable}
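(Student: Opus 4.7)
My plan is to parallel the proof of Lemma~\ref{lem:alpha_m_log_m_max}. Writing $L = \log(1/\beta) > 1$ and $a = \log(1/\alpha) = L/(n-1)$ so that $\alpha^m = e^{-am}$, I focus on the regime $a \in (0, 1]$ (equivalently $L \le n-1$), where $1 - \alpha \ge a/2$; the complementary degenerate regime $L > n-1$ forces $\alpha < 1/e$, so both the sum and the claimed quantity are of the same constant order.

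For the upper bound I would apply the crude estimate $\log m \le \log n$ together with the geometric sum, using $1-e^{-a} \ge a/2$:
\begin{equation*}
    \frac{1}{n}\sum_{m=1}^n \alpha^m \log m
    \;\le\; \frac{\log n}{n}\cdot \frac{\alpha}{1-\alpha}
    \;\le\; \frac{2(n-1)\log n}{nL}
    \;=\; \mathcal{O}\!\left(\tfrac{\log n}{L}\right).
\end{equation*}

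For the lower bound I would isolate the window $m \in [T, 2T]$, where $T = \lceil 1/a\rceil = \lceil (n-1)/L\rceil$ is the scale at which $\alpha^m \log m$ sits near the maximum identified in Lemma~\ref{lem:alpha_m_log_m_max}. Assuming $2T \le n$, each summand there satisfies $\alpha^m \ge e^{-2aT} \ge e^{-4}$ (since $aT \le 2$) and $\log m \ge \log T$, so
\begin{equation*}
    \sum_{m=1}^n \alpha^m \log m
    \;\ge\; T \cdot e^{-4} \cdot \log T
    \;=\; \Omega\!\left(\tfrac{n}{L}\log\tfrac{n}{L}\right).
\end{equation*}
Dividing by $n$ and using $\log(n/L) = \Theta(\log n)$ in the intended regime $\log L = o(\log n)$ gives the matching $\Omega(\log n/L)$ bound. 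The residual case $2T > n$ (forcing $L = \Theta(1)$) is handled instead by summing over $m \in [1, n/2]$, where $\alpha^m \ge e^{-L}$ is a positive constant and $\sum_{m=1}^{n/2}\log m = \Theta(n \log n)$ delivers the bound directly.

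The hard part is the lower bound: the window argument naturally produces $\log(n/L)$ rather than $\log n$, so tightness of the $\Theta$ hinges on $\log L = o(\log n)$ throughout the regime considered. A regime-robust alternative would be a direct sum-to-integral comparison via integration by parts, $\int_1^n e^{-ax}\log x\,dx = -\alpha^n(\log n)/a + (1/a)\int_a^{an} e^{-z}/z\,dz$, combined with the classical asymptotic $\int_a^{an} e^{-z}/z\,dz = \log(1/a) + O(1)$ valid as $a \to 0$ and $an \to \infty$, then transferred back to the discrete sum using monotonicity of the integrand on either side of its maximum.
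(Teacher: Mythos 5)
Your proof is correct, and it takes a genuinely different route than the paper. The paper splits $\log m = \log n + \log(m/n)$, treats the first piece via the closed-form geometric sum $\sum_m \alpha^m = \alpha(1-\alpha^n)/(1-\alpha)$, and dispatches the second piece by recognizing $\frac{1}{n}\sum_m \alpha^m\log(m/n)$ as a Riemann sum converging to the bounded integral $\int_0^1\beta^x\log x\,dx$. Your upper bound is essentially the same as the paper's first-term estimate (both rely on $\log m\le\log n$ plus the geometric sum), but your lower bound is a self-contained elementary window argument: restrict to $m\in[T,2T]$ with $T\sim(n-1)/\log(1/\beta)$, where $\alpha^m=\Theta(1)$ and $\log m=\Theta(\log T)$, with a fallback to summing over $m\in[1,n/2]$ when $2T>n$. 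This avoids the (small but real) singularity-at-zero care that the Riemann-sum argument implicitly requires, at the cost of a short case split.

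Two small cautions worth being precise about. First, the case $2T>n$ is not merely a "residual" edge: since $L=\log(1/\beta)$ can be any constant in $(1,\infty)$ under the hypothesis $\beta\in(0,1/e)$, for $L<2$ you have $T>n/2$ and that branch is the one that fires for all large $n$ --- so it is load-bearing, not degenerate, and you are right to handle it. Second, your remark that tightness "hinges on $\log L=o(\log n)$" is honest but a non-issue here: the lemma is an asymptotic in $n$ with $\beta$ (hence $L$) fixed, so $\log L=O(1)$ automatically and $\log(n/L)=\Theta(\log n)$. If you wanted a statement uniform over $\beta\to 0$ with $n$, the integration-by-parts alternative you sketch (reducing to $\int_a^{an}e^{-z}/z\,dz=\log(1/a)+O(1)$) is indeed the cleaner regime-robust route and reproduces the paper's leading constant $(1-\beta)/\log(1/\beta)$ as well.
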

\begin{proof}
Splitting $\log m=\log n+\log(m/n)$ gives
\begin{equation}
\frac{1}{n}\sum_{m=1}^{n}\alpha^{m}\log m
= \frac{\log n}{n}\sum_{m=1}^{n}\alpha^{m}
+ \frac{1}{n}\sum_{m=1}^{n}\alpha^{m}\log(m/n).
\end{equation}

The first sum is geometric: 
\(\sum_{m=1}^{n}\alpha^m=\alpha(1-\alpha^n)/(1-\alpha)\).
Since \(\alpha=1-\tfrac{\log(1/\beta)}{n-1}+o(1/n)\), we have
\(1-\alpha\sim \tfrac{\log(1/\beta)}{n-1}\) and \(\alpha^n\to \beta\).
Thus \(\tfrac{1}{n}\sum_{m=1}^n \alpha^m \sim (1-\beta)/\log(1/\beta)\),
so the first term is \(\sim \tfrac{1-\beta}{\log(1/\beta)}\log n
= \Theta(\tfrac{\log n}{\log(1/\beta)})\).

The second sum is a Riemann sum, converging to 
\(I(\beta)=\int_0^1 \beta^x \log x\,dx\).
Since \(I\) is monotone decreasing with \(I(0)=0\), \(I(1)=-1\),
we have \(|I(\beta)|=O(1)\). Hence the first term dominates,
and the result follows.
\end{proof}
\prefixSumExp*
\begin{proof}
   Invoking Lemma~\ref{lem:prefix-error-single} we have that
   \begin{align}
       \mathrm{MaxSE}\left(B_{\chi}, A_1^{1/2}\right)
       &= \Theta\left(\sqrt{\log n} \cdot \sqrt{\max_{1\leq t\leq n} \beta^{2\frac{t-1}{n-1}} \log t}\right) = \Theta\left(\sqrt{\log n}\sqrt{\frac{n}{\log(1/\beta)}}\right)\\
       \mathrm{MeanSE}\left(B_{\chi}, A_1^{1/2}\right),
       &= \Theta\left(\sqrt{\log n} \cdot \sqrt{\frac{1}{n}\sum_{t=1}^{n} \beta^{\frac{2(t-1)}{n-1}} \log t}\right) = \Theta\left(\sqrt{\frac{\log n}{\log(1/\beta)}}\right),
   \end{align}
   where the last step of each equation invokes Lemma \ref{lem:alpha_m_log_m_max} and \ref{lem:alpha_m_log_m_mean} respectively for $\beta' = \beta^2\in(0, 1/e)$.
\end{proof}

\subsubsection{Polynomial learning rate decay}
\begin{restatable}{lemma}{PolyRateMax}\label{lem:poly-rate-max}
Let $1 \leq m\leq n$ be integers, $\beta\in(0, 1/e)$, $\gamma \geq 1$ and $n$ sufficiently large.
Let $\chi_k = \beta + (1-\beta)\frac{\left(\frac{n}{k}\right)^\gamma-1}{n^\gamma - 1}$.
Then
\begin{equation} 
    \max_{1 \le m \le n} \chi_m^2 \log m
    = \Theta\!\left( \beta^2\log n  + \frac{(1-\beta)^2}{\gamma}\right).
\end{equation}
\end{restatable}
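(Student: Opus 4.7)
The plan is to split $\chi_m = \beta + (1-\beta)\,y_m$ with $y_m := \frac{(n/m)^\gamma - 1}{n^\gamma - 1} \in [0,1]$, and expand
\[
\chi_m^2 \log m \;=\; \beta^2 \log m \;+\; 2\beta(1-\beta)\, y_m \log m \;+\; (1-\beta)^2\, y_m^2 \log m,
\]
treating the three contributions separately. This decomposition naturally produces one term that behaves like $\beta^2 \log n$ (dominant near $m=n$) and two terms that should produce the $(1-\beta)^2/\gamma$ contribution (dominant for small $m$).

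For the upper bound I would first note $y_m \le 2/m^\gamma$ whenever $n^\gamma \ge 2$, then invoke the elementary identity $\sup_{m\ge 1}\log m/m^\alpha = 1/(e\alpha)$, attained at $m=e^{1/\alpha}$, once with $\alpha=\gamma$ (cross term) and once with $\alpha=2\gamma$ (quadratic term). Combining $\beta^2 \log m \le \beta^2 \log n$ with $\beta(1-\beta)\le (1-\beta)^2$ (valid since $\beta<1/e<1/2$) yields
\[
\max_{1\le m\le n}\chi_m^2 \log m \;=\; \mathcal{O}\!\left(\beta^2\log n + \frac{(1-\beta)^2}{\gamma}\right).
\]

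For the lower bound I would use two probe points. Evaluating at $m=n$ gives $\chi_n^2 \log n = \beta^2 \log n$, handling the first summand. For the $(1-\beta)^2/\gamma$ piece I would evaluate at an integer near the continuous maximiser $m=e^{1/(2\gamma)}$ of $\log m/m^{2\gamma}$, using the pointwise lower bound $y_m \ge 1/(2m^\gamma)$ valid for $m\le n/2$ (from $n^\gamma-1 \le 2(n/m)^\gamma$ when $m\le n/2$). Plugging in then yields $\chi_m^2\log m \gtrsim (1-\beta)^2 \log m / m^{2\gamma}$ at the chosen probe.

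The hard part is that for $\gamma\ge 1$ one has $e^{1/(2\gamma)}\le \sqrt e$, which always rounds up to the integer $m^*=2$, giving only $f(2)=\Omega((1-\beta)^2/4^\gamma)$ — much smaller than $(1-\beta)^2/\gamma$ once $\gamma$ is large. I expect the proof to resolve this via the ``$n$ sufficiently large'' hypothesis: taking $n \ge n_0(\beta,\gamma)$ so that $\beta^2 \log n \gtrsim (1-\beta)^2/\gamma$ in exactly the regime where the integer-rounding loss is severe, so that the sum $\beta^2 \log n + (1-\beta)^2/\gamma$ is absorbed into $f(n)$ up to an absolute constant. For small-to-moderate $\gamma$, where $e^{1/(2\gamma)}$ lies close to $2$, the integer rounding costs only a constant factor and the $m^*=2$ probe suffices directly. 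Gluing the two regimes together under a clean quantitative version of ``sufficiently large'' is the main technical point.
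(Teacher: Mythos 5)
Your decomposition and the upper-bound argument are fine and mirror the paper's, which instead uses $(a+b)^2 \le 2a^2 + 2b^2$ to reduce to $\beta^2\log m$ and $(1-\beta)^2 m^{-2\gamma}\log m$ and then maximizes the latter over real $m$; your three-term expansion is a bit more refined but lands in the same place. The part of the proposal that needs fixing is the lower bound, and interestingly you have put your finger on a genuine soft spot that also appears in the paper's written proof: the paper probes at $m_0 = \lceil e^{1/(2\gamma)}\rceil$ and writes the bound $\frac{(1-\beta)^2}{4e}\log\bigl(e^{1/(2\gamma)}-1\bigr)$, which is \emph{negative} for every $\gamma \ge 1$ (since $e^{1/(2\gamma)}-1 \le e^{1/2}-1 < 1$), so as written that probe contributes nothing. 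You are right that the integer rounding to $m^*=2$ only yields $\Omega\bigl((1-\beta)^2 4^{-\gamma}\log 2\bigr)$, which decays far too fast in $\gamma$.

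Where the proposal goes astray is in the proposed repair. There is no need for a $\gamma$-dependent threshold $n_0(\beta,\gamma)$, no case split, and no gluing. Since $\gamma \ge 1$, you always have $(1-\beta)^2/\gamma \le (1-\beta)^2$. Therefore once $\log n \ge (1-\beta)^2/\beta^2$ — a condition that depends only on the constant $\beta$, not on $\gamma$ or any tradeoff between them — the single probe $m = n$ already gives
\[
\chi_n^2\log n = \beta^2\log n \;\ge\; \tfrac12\Bigl(\beta^2\log n + (1-\beta)^2\Bigr) \;\ge\; \tfrac12\Bigl(\beta^2\log n + \tfrac{(1-\beta)^2}{\gamma}\Bigr),
\]
which is exactly the required two-term lower bound with an absolute constant $1/2$. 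In other words, the second probe point is redundant for the regime $\gamma \ge 1$ covered by the lemma, and ``$n$ sufficiently large'' can be read as a $\beta$-only condition (the same condition, combined with $n^\gamma \ge 1 + 2(1-\beta)/\beta$, that also makes the two-sided estimate $\chi_m = \Theta(\beta + (1-\beta)m^{-\gamma})$ used in the upper bound kick in, and that is again implied by an $n_0(\beta)$ uniform in $\gamma\ge 1$). So the ``main technical point'' you anticipate — a clean quantitative gluing of two regimes — simply does not arise; you can drop the $m^*$ probe entirely and close the lower bound with $m=n$ alone.
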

\begin{proof}
    Before we start, we will find the following inequality useful:
    \begin{align}
        \chi_m  &= \beta + (1-\beta)m^{-\gamma} +\frac{(1-\beta)(m^{-\gamma}-1)}{n^\gamma-1} \geq \left(1 - \frac{1-\beta}{\beta(n^{\gamma}-1)}\right)\left(\beta + (1-\beta)m^{-\gamma}\right)
    \end{align}
    In particular for large enough $n$, and $1\leq m\leq n$, we have that
    \begin{align}
        \chi_m  &\geq \frac{1}{2}\left(\beta + (1-\beta)m^{-\gamma}\right),
    \end{align}
    and for all $n$, and $1\leq m\leq n$, also that
    \begin{align}
        \chi_m  &\leq \beta + (1-\beta)m^{-\gamma},
    \end{align}
    implying that it suffices for us to argue about
    \begin{align}
        \chi_m = \Theta(\beta + (1-\beta) m^{-\gamma})
    \end{align}
    when convenient.
    We now begin with the upper bound.
    Using $(a+b)^2 \leq 2a^2 + 2b^2$ in the first step:
    \begin{align}
        \max_{1 \leq m \leq n} \chi_m^2 \log m 
        &\leq \max_{1 \leq m\leq n} 2(\beta^2 + (1-\beta)^2 m^{-2\gamma})\log m\\
        &\leq 2\beta^2\log n + 2(1-\beta)^2 \max_{1 \leq m\leq n}  m^{-2\gamma}\log m
    \end{align}
    Defining $f(z) = z^{-2\gamma}\log z$, we have that $f'(z) = z^{-2\gamma-1}(1 - 2\gamma\log z)$.
    Solving $f(z) = 0$ yields the maximizer $z = e^{\frac{1}{2\gamma}}$, and so
    \begin{align}
        \max_{m} \chi_m^2\log m \leq 2\left(\beta^2\log n +\frac{(1-\beta)^2}{2e\gamma}\right)
        = \mathcal{O}\left(\beta^2\log n + \frac{(1-\beta)^2}{\gamma}\right).
    \end{align}
    For the lower bound, we note that setting $m_0 = n$ yields $\chi_{m_0}^2 \log m_0 = \beta^2\log n$.

    Instead choosing $m_0 = \left\lceil e^{\frac{1}{2\gamma}}\right\rceil$ yields
    \begin{align}
        \chi_{m_0}^2\log m_0 
        \geq \frac{1}{4}(\beta + (1-\beta)e^{-\frac{1}{2}})^2\log(e^{\frac{1}{2\gamma}} - 1)
        \geq \frac{(1-\beta)^2}{4e}\log(e^{\frac{1}{2\gamma}}-1) = \Omega\left(\frac{(1-\beta)^2}{\gamma}\right)
    \end{align}
    Combining the two lower bounds, we get
    \begin{align}
        \max_{m} \chi_m^2\log m
        = \Omega\left(\max\left\{ \beta^2\log n, \frac{(1-\beta)^2}{\gamma} \right\}\right)
        = \Omega\left(\beta^2\log n + \frac{(1-\beta)^2}{\gamma}\right),
    \end{align}
    finishing the proof.
\end{proof}

\begin{restatable}{lemma}{PolyRateMean}\label{lem:poly-rate-mean}
Let $1 \leq m\leq n$ be integers, $\beta\in(0, 1/e)$, $\gamma \geq 1$ and $n$ sufficiently large.
Let $\chi_k = \beta + (1-\beta)\frac{\left(\frac{n}{k}\right)^\gamma-1}{n^\gamma - 1}$.
Then
\begin{equation} 
    \frac{1}{n} \sum_{m=1}^n \chi_m^2 \log m
    = \Theta\!\left( \beta^2\log n \right).
\end{equation}
\end{restatable}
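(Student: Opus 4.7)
The plan is to mirror the structure used in the proof of Lemma~\ref{lem:poly-rate-max}, but average over $m$ rather than take the maximum, so that the polynomially decaying piece integrates against a convergent tail and the constant (in $m$) piece dominates.

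First, I would invoke the two-sided pointwise bound
\begin{equation*}
    \chi_m = \Theta\!\left(\beta + (1-\beta)\,m^{-\gamma}\right)\qquad (1\leq m\leq n,\ n \text{ sufficiently large}),
\end{equation*}
which was established inside the proof of Lemma~\ref{lem:poly-rate-max} (the upper bound is immediate from the definition of $\chi_m$; the lower bound follows from the factor $1-\tfrac{1-\beta}{\beta(n^\gamma-1)} \geq 1/2$ for large $n$). Squaring and using $(a+b)^2 = \Theta(a^2+b^2)$ for $a,b\geq 0$, I obtain $\chi_m^2 = \Theta(\beta^2 + (1-\beta)^2 m^{-2\gamma})$, so that
\begin{equation*}
    \frac{1}{n}\sum_{m=1}^n \chi_m^2 \log m
    = \Theta\!\left(\beta^2 \cdot \frac{1}{n}\sum_{m=1}^n \log m \;+\; (1-\beta)^2\cdot \frac{1}{n}\sum_{m=1}^n m^{-2\gamma}\log m \right).
\end{equation*}

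Next I would bound the two pieces separately. For the first piece, Stirling's approximation (or the elementary bound $\sum_{m=1}^n \log m = \log(n!) = \Theta(n \log n)$) yields $\tfrac{1}{n}\sum_{m=1}^n \log m = \Theta(\log n)$, contributing $\Theta(\beta^2 \log n)$. For the second piece, because $\gamma \geq 1$ forces $2\gamma \geq 2$, the series $\sum_{m=1}^\infty m^{-2\gamma}\log m$ converges (by an integral comparison with $\int_1^\infty z^{-2\gamma}\log z\, dz < \infty$), so $\sum_{m=1}^n m^{-2\gamma}\log m = \mathcal{O}(1)$, and dividing by $n$ the second term is $\mathcal{O}(1/n)$.

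Finally, for any fixed $\beta \in (0, 1/e)$, $\beta^2 \log n \to \infty$ while $\mathcal{O}(1/n) = o(\beta^2 \log n)$, so the first piece dominates and I conclude $\tfrac{1}{n}\sum_{m=1}^n \chi_m^2 \log m = \Theta(\beta^2 \log n)$. The main subtlety, as opposed to an obstacle, is simply making sure the hidden constants in the $\Theta$-bound on $\chi_m$ are uniform in $m$ (which they are, for $n$ large enough, by the reasoning in Lemma~\ref{lem:poly-rate-max}); everything else reduces to textbook estimates on $\sum \log m$ and the convergent tail $\sum m^{-2\gamma}\log m$.
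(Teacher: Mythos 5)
Your proof is correct and follows essentially the same strategy as the paper's: you use the two-sided pointwise bound $\chi_m = \Theta(\beta + (1-\beta)m^{-\gamma})$ from the proof of Lemma~\ref{lem:poly-rate-max}, square, split the average into the $\beta^2$ piece and the $m^{-2\gamma}$ piece, and argue the former dominates. The one tactical difference is in how you control the polynomial piece: the paper first bounds $\log m \leq \log n$ uniformly and then integrates $t^{-2\gamma}$, whereas you keep $\log m$ inside and invoke convergence of $\sum_{m\geq 1} m^{-2\gamma}\log m$ directly, giving the sharper estimate $\mathcal{O}(1/n)$ for that piece. This is actually slightly cleaner, and incidentally sidesteps a small sign slip in the paper's evaluation of $\int_1^n t^{-2\gamma}\,dt$ (the paper's displayed antiderivative is off, though its conclusion is unaffected since $\beta$ and $1-\beta$ are both constants). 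Both routes yield the same $\Theta(\beta^2\log n)$ conclusion.
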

\begin{proof}
    We will again use that
    \begin{equation}
        \frac{1}{2}(\beta + (1-\beta)^2 m^{-\gamma})
        \leq \chi_m
        \leq \beta + (1-\beta)^2 m^{-\gamma}
    \end{equation}
    as shown in the proof of Lemma~\ref{lem:poly-rate-max}.
    First the upper bound.
    We write
    \begin{align}
        \frac{1}{n}\sum_{m=1}^n \chi_m^2\log m
        &\leq \frac{1}{n}\sum_{m=1}^{n} 2(\beta^2 + (1-\beta)^2m^{-2\gamma})\log m
        \leq \frac{2\log n}{n}\sum_{m=2}^{n} \beta^2 + (1-\beta)^2m^{-2\gamma}\\
        &\leq 2\beta^2\log n + \frac{2(1-\beta)^2\log n}{n} \int_{t=1}^{n}t^{-2\gamma}\ dt\\
        &= 2\beta^2\log n + \frac{2(1-\beta)^2\log n}{n}\cdot\frac{n^{2\gamma-1} - 1}{2\gamma - 1} = \mathcal{O}(\beta^2\log n),
    \end{align}
    where the second term in the second-to-last expression can be identified as $o(\log n)$ for any value of $\gamma > 0$.
    For the lower bound,
    \begin{align}
        \frac{1}{n}\sum_{m=1}^{n} \chi_m^2 \log m
        &\geq \frac{1}{n} \sum_{m=1}^{n} \frac{1}{4}\left(\beta + (1-\beta)m^{-\gamma}\right)^2\log m
        \geq \frac{1}{4n} \sum_{m=1}^{n} \beta^2\log m\\
        &\geq \frac{\beta^2}{4n} \sum_{m=\lceil m/2\rceil}^{n} \log m
        \geq \frac{\beta^2(n/2 - 1)\log(n/2)}{4n} = \Omega(\beta^2\log n)\qedhere.
    \end{align}
\end{proof}
\begin{corollary}\label{cor:prefix-sum-poly}
    For polynomial learning rate decay $\chi_k = \beta + (1-\beta)\frac{\left(\frac{n}{k}\right)-1}{n^{\gamma}-1}$ with constant $\beta\in(0, 1/e)$ and $\gamma \geq 1$,
    the prefix-sum-based factorization $A_\chi = A_\chi(A_1^{-1/2})\times A_1^{1/2}$ gives the following values for MaxSE and MeanSE:
    \begin{align}
        \mathrm{MaxSE}(B_{\chi}, A_1^{-1/2}) &= \Theta\left(\sqrt{\log n\left(\beta^2\log n + \frac{(1-\beta)^2}{\gamma}\right)}\right),\\
        \mathrm{MeanSE}(B_{\chi}, A_1^{-1/2}) &= \Theta\left(\beta\log n\right).
    \end{align}
\end{corollary}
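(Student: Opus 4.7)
The plan is to derive the corollary as an immediate consequence of Theorem~\ref{thm:prefix-error-single} combined with the two polynomial-schedule estimates developed just before (Lemma~\ref{lem:poly-rate-max} and Lemma~\ref{lem:poly-rate-mean}). The work has essentially already been done elsewhere, and all that remains is a bookkeeping combination step.

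First, I would verify that the polynomial schedule $\chi_t = \beta + (1-\beta)\frac{(n/t)^\gamma - 1}{n^\gamma - 1}$ with $\gamma \geq 1$ falls under the hypotheses of Theorem~\ref{thm:prefix-error-single}. This is already handled in Lemma~\ref{lem:prefix-error-single}, where the polynomial case is the one with nonuniform $\Delta_t$: one shows $\Delta_t = \mathcal{O}(t^{-(\gamma+1)}) = \mathcal{O}(1/(t(1+\log t)))$, so the first (pointwise) assumption of Theorem~\ref{thm:prefix-error-single} is satisfied. Hence Theorem~\ref{thm:prefix-error-single} applies and yields
\begin{align*}
\mathrm{MaxSE}(B_\chi, A_1^{1/2}) &= \Theta\!\left(\sqrt{\log n}\cdot\sqrt{\max_{m\in[n]}\chi_m^2\log m}\right),\\
\mathrm{MeanSE}(B_\chi, A_1^{1/2}) &= \Theta\!\left(\sqrt{\log n}\cdot\sqrt{\tfrac{1}{n}\sum_{m=1}^n \chi_m^2\log m}\right).
\end{align*}

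Second, I would substitute the two polynomial-specific estimates. By Lemma~\ref{lem:poly-rate-max} we have $\max_{m}\chi_m^2\log m = \Theta(\beta^2\log n + (1-\beta)^2/\gamma)$, which plugged into the MaxSE expression gives the first claimed bound directly. By Lemma~\ref{lem:poly-rate-mean} we have $\tfrac{1}{n}\sum_{m=1}^n \chi_m^2\log m = \Theta(\beta^2\log n)$, so the MeanSE expression becomes $\Theta(\sqrt{\log n}\cdot \sqrt{\beta^2\log n}) = \Theta(\beta\log n)$.

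There is really no main obstacle here, since the heavy lifting has been isolated in the previous lemmas. The only mild care needed is to make sure the square roots distribute cleanly — for MaxSE the term inside the outer square root is a sum of two nonnegative quantities, so $\sqrt{\log n}\cdot\sqrt{\beta^2\log n + (1-\beta)^2/\gamma}$ is already in the desired form and should not be further simplified (the two terms can dominate one another in different regimes of $\beta,\gamma$). For MeanSE, in contrast, the $\beta^2\log n$ term strictly dominates by Lemma~\ref{lem:poly-rate-mean}, so the bound collapses to $\Theta(\beta\log n)$ with no residual dependence on $\gamma$. Thus the proof consists essentially of one line invoking Theorem~\ref{thm:prefix-error-single} and two lines of substitution.
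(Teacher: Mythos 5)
Your proposal is correct and follows exactly the same route as the paper: verify the schedule satisfies the hypotheses of Theorem~\ref{thm:prefix-error-single} via Lemma~\ref{lem:prefix-error-single} (the polynomial case falling under the pointwise $\Delta_t = \mathcal{O}(1/(t\log t))$ condition), apply the theorem, and substitute the quantities from Lemmas~\ref{lem:poly-rate-max} and~\ref{lem:poly-rate-mean}. The paper's own proof is a one-liner that says exactly this; your write-up just spells out the intermediate substitution and the simplification $\sqrt{\log n}\cdot\beta\sqrt{\log n}=\beta\log n$ explicitly.
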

\begin{proof}
   Result is immediate from invoking Lemma~\ref{lem:prefix-error-single}, together with Lemma~\ref{lem:poly-rate-max} and \ref{lem:poly-rate-mean} for the MaxSE and MeanSE errors respectively.
\end{proof}

\subsubsection{Linear learning rate decay}
\begin{restatable}{lemma}{LinearRateMax}\label{lem:lin-rate-max}
Let $\chi_k = 1 - (1-\beta)\frac{k-1}{n-1}$, $\beta\in(0, 1/e)$ and $n\geq 2$.
Then
\begin{equation} 
    \max_{1 \le m \le n} \chi_m^2 \log m
    = \Theta\!\left( \log n \right).
\end{equation}
\end{restatable}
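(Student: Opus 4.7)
The plan is to show matching upper and lower bounds of order $\log n$ directly from the fact that for a linear schedule the coefficients $\chi_m$ stay in the constant range $[\beta, 1]$ for $\beta$ a fixed constant.

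For the upper bound I would observe that since $\chi_1 = 1$ and the sequence is monotonically decreasing, $\chi_m \le 1$ for all $1 \le m \le n$. Hence
\begin{equation*}
    \max_{1 \le m \le n} \chi_m^2 \log m \;\le\; \max_{1 \le m \le n} \log m \;=\; \log n,
\end{equation*}
which immediately gives the $\mathcal{O}(\log n)$ half.

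For the lower bound I would simply evaluate at $m = n$. By definition $\chi_n = \beta$, so
\begin{equation*}
    \max_{1 \le m \le n} \chi_m^2 \log m \;\ge\; \chi_n^2 \log n \;=\; \beta^2 \log n \;=\; \Omega(\log n),
\end{equation*}
where the last step uses that $\beta \in (0, 1/e)$ is a constant independent of $n$. Combining the two bounds yields $\max_{1 \le m \le n} \chi_m^2 \log m = \Theta(\log n)$.

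There is essentially no obstacle here: the linear schedule is bounded above by $1$ and below by the constant $\beta$, so the max of $\chi_m^2 \log m$ is sandwiched between a constant multiple of $\log n$ from below and $\log n$ from above. The only subtlety, compared to the exponential case treated in Lemma~\ref{lem:alpha_m_log_m_max}, is that here the schedule never decays to a scale where balancing the trade-off between $\chi_m^2$ and $\log m$ matters; the endpoint $m = n$ already suffices for the lower bound.
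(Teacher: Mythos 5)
Your proof is correct and follows essentially the same approach as the paper: the upper bound is identical, and the only difference is that for the lower bound the paper evaluates at $m_0 = \lfloor (n+1)/2 \rfloor$ (yielding $\chi_{m_0} \ge (1+\beta)/2$) whereas you evaluate at the endpoint $m=n$ (yielding $\chi_n = \beta$), both of which give a constant multiple of $\log n$ since $\beta$ is a fixed constant.
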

\begin{proof}
    For the upper bound, using $\chi_k \leq 1$, we directly get
    \begin{equation}
        \max_{1 \le m \le n} \chi_m^2 \log m \leq \log n.
    \end{equation}
    For the lower bound, pick $m_0 = \lfloor (n+1)/2 \rfloor$ where $\chi_{m_0} \geq (1+\beta)/2$:
    \begin{align}
        \max_{1 \le m \le n} \chi_m^2 \log m 
        &\geq \chi_{m_0}^2 \log m_0
        = \frac{(1+\beta)^2}{4}\log\left\lfloor\frac{n+1}{2}\right\rfloor
        = \Omega(\log n),
    \end{align}
    finishing the proof.
\end{proof}

\begin{restatable}{lemma}{LinearRateMean}\label{lem:lin-rate-mean}
Let $\chi_k = 1 - (1-\beta)\frac{k-1}{n-1}$, $\beta\in(0, 1/e)$ and $n\geq 2$.
Then
\begin{equation} 
    \frac{1}{n}\sum_{m=1}^n \chi_m^2 \log m
    = \Theta\!\left( \log n \right).
\end{equation}
\end{restatable}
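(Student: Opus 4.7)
The plan is to sandwich $\chi_k$ between two constants that do not depend on $k$, turning the sum into a scaled version of $\tfrac{1}{n}\sum_{m=1}^n \log m$, whose asymptotics are elementary. Since $\chi_k$ is affine in $k$, monotonically decreasing from $\chi_1 = 1$ to $\chi_n = \beta$, we have the pointwise bounds $\beta \leq \chi_k \leq 1$ for all $1 \leq k \leq n$, so that $\beta^2 \leq \chi_k^2 \leq 1$. This reduces the problem entirely to estimating $\tfrac{1}{n}\sum_{m=1}^n \log m$.

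For the upper bound, I would simply write
\[
\frac{1}{n}\sum_{m=1}^n \chi_m^2 \log m \;\leq\; \frac{1}{n}\sum_{m=1}^n \log m \;=\; \frac{\log(n!)}{n} \;\leq\; \log n,
\]
which is $\mathcal{O}(\log n)$. For the lower bound, applying the other side of the pointwise bound gives
\[
\frac{1}{n}\sum_{m=1}^n \chi_m^2 \log m \;\geq\; \beta^2 \cdot \frac{\log(n!)}{n},
\]
and Stirling's formula (or the coarse estimate $\log(n!) \geq \sum_{m=\lceil n/2\rceil}^n \log(n/2) \geq (n/2)\log(n/2)$) yields $\log(n!)/n = \Omega(\log n)$, so the whole expression is $\Omega(\beta^2 \log n) = \Omega(\log n)$ since $\beta$ is a fixed positive constant.

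There is essentially no obstacle here; the argument is structurally identical to Lemma~\ref{lem:lin-rate-max}, except the maximum is replaced by an average, and the averaging only costs a constant factor thanks to $\chi_k$ being uniformly bounded away from both $0$ and the trivial envelope. The only mild subtlety is ensuring the lower bound does not degrade to $o(\log n)$: this is handled by the $\beta^2$ factor being a positive absolute constant under the standing assumption $\beta \in (0, 1/e)$ (in fact, any $\beta > 0$ would suffice). Combining the two directions gives $\Theta(\log n)$.
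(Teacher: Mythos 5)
Your proof is correct and takes essentially the same approach as the paper: bound $\chi_k^2$ by constants and reduce to estimating $\tfrac{1}{n}\sum_{m=1}^n \log m$. The only cosmetic difference is that the paper truncates the sum at $\lfloor (n+1)/2\rfloor$ to use the slightly sharper constant $(1+\beta)/2$, whereas you use the global bound $\chi_k \geq \beta$ directly; both yield $\Theta(\log n)$.
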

\begin{proof}
    For the upper bound, again using $\chi_k \leq 1$, and $\log m \leq \log n$ we directly get
    \begin{equation}
        \frac{1}{n}\sum_{m=1}^{n} \chi_m^2 \log m \leq \frac{1}{n}\cdot n\log n = \log n.
    \end{equation}
    For the lower bound, we truncate the sum at $m_0 = \lfloor (n+1)/2 \rfloor$ and use the bound $\chi_{k} \geq \frac{1+\beta}{2}$ for all $k\leq m_0$:
    \begin{align}
        \frac{1}{n}\sum_{m=1}^{n} \chi_m^2 \log m
        &\geq \frac{1}{n}\sum_{m=1}^{m_0} \chi_m^2 \log m
        \geq \frac{(1+\beta)^2}{4n} \sum_{m=1}^{m_0} \log m
        \geq \Omega\left( \log n \right),
    \end{align}
    where the last step can be seen by truncating the sum, taking the upper half of the indices, and lower bounding each of the $\Omega(n)$ logarithms by $\log\lceil m_0/2\rceil = \Omega(\log n)$.
    This finishes the proof.
\end{proof}
\begin{restatable}{corollary}{prefixSumLine}\label{cor:prefix-sum-lin}
For linear learning rate decay $\chi_k = 1-(1-\beta)\frac{k - 1}{n - 1}$ with $\beta\in(0, 1/e)$, the prefix-sum--based factorization $A_{\chi} = A_{\chi} (A_1)^{-1/2} \times A_1^{1/2}$ gives the following values for MaxSE and MeanSE:
\begin{align}
&\mathrm{MaxSE}(B_{\chi}, A_1^{1/2}) 
= \Theta \left( \log n \right),
\qquad \mathrm{MeanSE}(B_{\chi}, A_1^{1/2}) 
= \Theta \left( \log n \right).
\end{align}
\end{restatable}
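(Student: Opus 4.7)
The plan is to invoke Theorem~\ref{thm:prefix-error-single} and then substitute the two weighted log-sum bounds established in Lemma~\ref{lem:lin-rate-max} and Lemma~\ref{lem:lin-rate-mean}. Since the theorem expresses both errors as $\sqrt{\log n}$ times a weighted-log-mean (resp.\ log-max) of $(\chi_m^2)$, the corollary reduces to: (i) verifying that $\chi_k = 1-(1-\beta)\tfrac{k-1}{n-1}$ satisfies the hypotheses of Theorem~\ref{thm:prefix-error-single}; (ii) plugging in the sharp values of the two weighted sums.

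First I would observe that the hypothesis check is already done inside the proof of Lemma~\ref{lem:prefix-error-single}, where the linear schedule is shown to satisfy $\Delta_t = (1-\beta)/(n-1)$, hence $\sum_{t=1}^{n-1}\Delta_t^2 = \mathcal{O}(1/n) = o(\log n / n)$, i.e.\ the second alternative hypothesis of Theorem~\ref{thm:prefix-error-single}. So the theorem applies and gives
\begin{align*}
\mathrm{MaxSE}(B_{\chi}, A_1^{1/2}) &= \Theta\!\left(\sqrt{\log n}\cdot \sqrt{\max_{1\le m\le n} \chi_m^2 \log m}\right),\\
\mathrm{MeanSE}(B_{\chi}, A_1^{1/2}) &= \Theta\!\left(\sqrt{\log n}\cdot \sqrt{\tfrac{1}{n}\sum_{m=1}^{n} \chi_m^2 \log m}\right).
\end{align*}

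Next I would cite Lemma~\ref{lem:lin-rate-max} to replace the inner maximum by $\Theta(\log n)$, and Lemma~\ref{lem:lin-rate-mean} to replace the inner average by $\Theta(\log n)$. Substituting both into the above display yields
\begin{equation*}
\mathrm{MaxSE}(B_{\chi}, A_1^{1/2}) = \Theta(\sqrt{\log n}\cdot \sqrt{\log n}) = \Theta(\log n),
\end{equation*}
and identically $\mathrm{MeanSE}(B_{\chi}, A_1^{1/2}) = \Theta(\log n)$, which is the claim.

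There is no real obstacle here: all of the heavy lifting has been done upstream in Theorem~\ref{thm:prefix-error-single} (the Abel-summation identity and the $Q = o(\log n)$ bound via Young's inequality) and in the two tight estimates on $\chi_m^2 \log m$ for the linear schedule. The only minor bookkeeping is making sure the ``for some $\beta$'' constant range in both lemmas and in Theorem~\ref{thm:prefix-error-single} is consistent with the statement's $\beta \in (0, 1/e)$, which it is.
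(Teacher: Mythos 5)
Your proposal is correct and follows essentially the same route as the paper: invoke Theorem~\ref{thm:prefix-error-single} (whose hypotheses the linear schedule satisfies, a fact the paper delegates to Lemma~\ref{lem:prefix-error-single}), then substitute the $\Theta(\log n)$ bounds from Lemmas~\ref{lem:lin-rate-max} and~\ref{lem:lin-rate-mean}. The only cosmetic difference is that you re-derive the $\Delta_t = (1-\beta)/(n-1)$ check inline rather than citing Lemma~\ref{lem:prefix-error-single} for it.
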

\begin{proof}
   Result is immediate from invoking Lemma~\ref{lem:prefix-error-single}, together with Lemma~\ref{lem:lin-rate-max} and \ref{lem:lin-rate-mean} for the MaxSE and MeanSE errors respectively.
\end{proof}

\subsubsection{Cosine Learning Rate Decay.}
\begin{restatable}{lemma}{CosRateMax}\label{lem:cos-rate-max}
Let $1 \leq m\leq n$ be integers, $\beta\in(0, 1/e)$, and $n$ sufficiently large.
Let $\chi_k = \beta + \frac{1 - \beta}{2}(1 + \cos(\frac{k - 1}{n - 1}\pi))$.
Then
\begin{equation} 
    \max_{1\leq m\leq n} \chi_m^2 \log m
    = \Theta\!\left( \log n \right).
\end{equation}
\end{restatable}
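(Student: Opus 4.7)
The plan is to mirror the argument used in the linear case (Lemma~\ref{lem:lin-rate-max}) since the cosine schedule also begins at $\chi_1 = 1$ and ends at $\chi_n = \beta$, and behaves monotonically in between. Because $\chi_k \in [\beta, 1]$ for every $k$, the upper bound $\max_{1\le m\le n}\chi_m^2\log m \le \log n$ is immediate from $\chi_m \le 1$ and $\log m \le \log n$.

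For the lower bound I would choose an index $m_0$ roughly in the middle of the range, e.g.\ $m_0 = \lfloor (n+1)/2\rfloor$, so that the argument of the cosine satisfies $(m_0-1)/(n-1) \approx 1/2$ and hence $\cos(\tfrac{m_0-1}{n-1}\pi)$ is close to $0$. This makes $\chi_{m_0} \ge (1+\beta)/2 - o(1) = \Omega(1)$ for $n$ large enough. Combined with $\log m_0 = \log \lfloor (n+1)/2\rfloor = \Omega(\log n)$, this yields
\begin{equation*}
    \max_{1\le m\le n}\chi_m^2 \log m \;\ge\; \chi_{m_0}^2 \log m_0 \;=\; \Omega(\log n)\,.
\end{equation*}

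The only small subtlety is to make the cosine bound quantitative for finite $n$. One clean way is to use that $\cos$ is continuous and $\cos(\pi/2) = 0$, so for all sufficiently large $n$ we can guarantee $|\cos(\tfrac{m_0-1}{n-1}\pi)| \le 1/2$, giving $\chi_{m_0} \ge \beta + \tfrac{1-\beta}{4} = \tfrac{1+3\beta}{4}$. Alternatively, one can bound the cosine via its Taylor expansion around $\pi/2$ and absorb the error term. Either avenue is routine, and the rest of the argument is essentially identical to the linear schedule. The main ``obstacle,'' such as it is, amounts to bookkeeping the constants so that the lower bound holds uniformly in $n$, which is standard.
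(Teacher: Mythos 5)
Your proposal is correct and follows the same approach as the paper: the trivial upper bound from $\chi_m\le 1$, and a lower bound by evaluating at $m_0=\lfloor (n+1)/2\rfloor$. The paper handles the "quantitative cosine" subtlety you flag a bit more cleanly: since $\tfrac{m_0-1}{n-1}\le\tfrac12$ for every integer $n\ge 2$, one has $\cos\!\left(\tfrac{m_0-1}{n-1}\pi\right)\ge 0$ exactly, giving $\chi_{m_0}\ge\tfrac{1+\beta}{2}$ with no $o(1)$ error and no appeal to continuity or Taylor expansion.
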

\begin{proof}
    First the upper bound.
    We use that $\chi_k \leq 1$:
    \begin{align}
        \max_{1\leq m\leq n} \chi_m^2 \log m &\leq \max_{1\leq m \leq n} \log m  = \log n.
    \end{align}
    For the lower bound, we set $m_0 = \lfloor (n+1)/2 \rfloor$.
    \begin{align}
        \max_{1\leq m\leq n} \chi_m^2 \log m 
        &\geq \left(\beta + \frac{1-\beta}{2}\left(1 + \cos\left(\frac{\left\lfloor (n+1)/2\right\rfloor-1}{n-1}\pi\right)\right)\right)^2\log\left\lfloor\frac{n+1}{2}\right\rfloor\\
        &\geq \left(\beta + \frac{1-\beta}{2}\left(1 + \cos\left(\frac{\pi}{2}\right)\right)\right)^2\log\left(\frac{n-1}{2}\right)\\
        &= \frac{(1+\beta)^2}{4}\log\left(\frac{n-1}{2}\right) = \Omega(\log n).\qedhere
    \end{align}
\end{proof}

\begin{restatable}{lemma}{CosRateMean}\label{lem:cos-rate-mean}
Let $1 \leq m\leq n$ be integers, $\beta\in(0, 1/e)$, and $n$ sufficiently large.
Let $\chi_k = \beta + \frac{1 - \beta}{2}\left(1 + \cos\left(\frac{k - 1}{n - 1}\pi\right)\right)$.
Then
\begin{equation} 
    \frac{1}{n} \sum_{m=1}^n \chi_m^2 \log m
    = \Theta\!\left( \log n \right).
\end{equation}
\end{restatable}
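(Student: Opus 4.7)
The plan is to mimic the proof of Lemma~\ref{lem:lin-rate-mean} almost verbatim, exploiting the fact that the cosine schedule, like the linear schedule, takes values in $[\beta, 1]$ with $\chi_1 = 1$ and is bounded away from $\beta$ on the first half of the index range.

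For the upper bound, I would simply observe that $\chi_k \le 1$ for every $k$ (since $\cos\bigl(\tfrac{k-1}{n-1}\pi\bigr) \le 1$ with equality at $k=1$). Therefore
\begin{equation*}
    \frac{1}{n}\sum_{m=1}^n \chi_m^2 \log m \;\le\; \frac{1}{n}\sum_{m=1}^n \log m \;\le\; \log n\,,
\end{equation*}
which gives the $\mathcal{O}(\log n)$ direction immediately.

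For the lower bound, the idea is to restrict the sum to an interval of indices on which $\chi_m$ is bounded below by a positive constant and $\log m = \Omega(\log n)$. Set $m_0 = \lfloor (n+1)/2\rfloor$. For $1 \le m \le m_0$, the argument $\tfrac{m-1}{n-1}\pi$ lies in $[0, \pi/2]$, so $\cos\bigl(\tfrac{m-1}{n-1}\pi\bigr) \ge 0$, and hence
\begin{equation*}
    \chi_m \;\ge\; \beta + \tfrac{1-\beta}{2} \;=\; \tfrac{1+\beta}{2}\,.
\end{equation*}
Truncating further to $m \in [\lceil m_0/2 \rceil, m_0]$, each $\log m$ is at least $\log\lceil m_0/2 \rceil = \Omega(\log n)$, and this range contains $\Omega(n)$ indices. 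Combining these two facts,
\begin{equation*}
    \frac{1}{n}\sum_{m=1}^n \chi_m^2 \log m
    \;\ge\; \frac{(1+\beta)^2}{4n}\!\!\sum_{m=\lceil m_0/2\rceil}^{m_0}\!\!\log m
    \;=\; \Omega(\log n)\,.
\end{equation*}

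I do not expect any serious obstacle: the only thing to be careful about is choosing the truncation point so that the cosine term is actually nonnegative (the first half of the index range suffices) and then applying the same ``keep the upper half of a block of logs'' trick used in Lemma~\ref{lem:lin-rate-mean}. No $n$-dependence of $\beta$ needs to be tracked since $\beta$ is a fixed constant in $(0, 1/e)$, and the shape of the cosine makes the lower bound slightly more generous than in the linear case, not harder.
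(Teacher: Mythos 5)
Your proof is correct and follows essentially the same approach as the paper: bound $\chi_m \le 1$ for the upper bound, and for the lower bound truncate the sum to roughly $[n/4, n/2]$, use the nonnegativity of $\cos$ on the first half of the range to get $\chi_m \ge \tfrac{1+\beta}{2}$, and note that $\Omega(n)$ terms each contribute $\Omega(\log n)$. The paper's truncation range $[\lceil n/4\rceil, \lfloor (n+1)/2\rfloor]$ is exactly your $[\lceil m_0/2\rceil, m_0]$, so the two arguments coincide.
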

\begin{proof}
    For the upper bound, we again use that $\chi_k \leq 1$.
    \begin{align}
        \frac{1}{n}\sum_{m=1}^n \chi_m^2 \log m 
        \leq \frac{1}{n}\cdot n\log n = \log n
    \end{align}
    We prove the lower bound by truncating the sum.
    \begin{align}
        \frac{1}{n} \sum_{m=1}^{n} \chi_m^2 \log m
        &\geq \frac{1}{n}\sum_{m=\lceil n/4 \rceil}^{\lfloor (n+1)/2\rfloor}\left(\beta + \frac{(1-\beta)}{2}\left(1+ \cos\left(\frac{m-1}{n-1}\pi\right)\right)\right)^2\log m\\
        &\geq \frac{1}{n}\sum_{m=\lceil n/4 \rceil}^{\lfloor (n+1)/2\rfloor}\left(\beta + \frac{(1-\beta)}{2}\left(1  +\cos\left(\frac{\lfloor(n+1)/2\rfloor - 1}{n-1}\pi\right)\right)\right)^2\log \left\lceil\frac{n}{4}\right\rceil\\
        &\geq \frac{1}{n}\sum_{m=\lceil n/4 \rceil}^{\lfloor (n+1)/2\rfloor}\left(\frac{1+\beta}{2}\right)^2\log \left(\frac{n}{4}\right)\\
        &= \frac{(1+\beta)^2}{4n}\left(\left\lfloor\frac{n+1}{2}\right\rfloor - \left\lceil\frac{n}{4}\right\rceil + 1\right)\log \left(\frac{n}{4}\right) = \Omega( \log n)\qedhere.
    \end{align}
    
    \begin{corollary}\label{cor:prefix-sum-cos}
        For cosine learning rate decay $\chi_k = \beta + \frac{1-\beta}{2}\left(1 + \cos\left(\frac{k-1}{n-1}\pi\right)\right)$ with $\beta\in(0, 1/e)$,
        the prefix-sum-based factorization $A_\chi = A_\chi (A_1)^{-1/2}\times A_1^{1/2}$ gives the following values for MaxSE and MeanSE:
        \begin{equation}
            \mathrm{MaxSE}(B_{\chi}, A_1^{-1/2}) = \Theta(\log n),
            \qquad \mathrm{MeanSE}(B_{\chi}, A_1^{-1/2}) = \Theta(\log n).
        \end{equation}
    \end{corollary}
    \begin{proof}
       Result is immediate from invoking Lemma~\ref{lem:prefix-error-single}, together with Lemma~\ref{lem:cos-rate-max} and \ref{lem:cos-rate-mean} for the MaxSE and MeanSE errors respectively.
    \end{proof}
    \end{proof}

\section{Single-participation: Lower bounds}
\gammaBounds*
\begin{proof}
We prove each bound separately in Lemmas~\ref{lem:gamma2_lower_bound} and~\ref{lem:gammaF_lower_bound}.
\end{proof}

\begin{restatable}{lemma}{gammaTwoLowerBound}\label{lem:gamma2_lower_bound}
Let $A_{\chi} = A_1 D_{\chi}$, where $D_{\chi} = \mathrm{diag}(\chi_1,\dots,\chi_n)$ with positive
$\chi_1, \chi_2, \ldots, \chi_n$. Then
\begin{equation}
    \gamma_2(A_{\chi}) := \inf_{B\times C = A_{\chi}}\text{MaxSE}(B, C) \;\ge\; \max_{1 \le k \le n} \frac{1}{\pi}\,(\min\limits_{j \le k} \chi_j)\log k.
\end{equation}
\end{restatable}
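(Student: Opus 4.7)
The plan is to reduce this lower bound to the classical $\gamma_2$-norm bound for the prefix-sum matrix via submatrix restriction and diagonal rescaling. Writing $\gamma_2(M) := \inf_{B \times C = M} \|B\|_{2 \to \infty}\|C\|_{1 \to 2}$ for brevity, I would, for any fixed $t \in \{1, \ldots, n\}$, establish the chain
\[
\gamma_2(A_\chi) \;\geq\; \gamma_2(A_\chi^{[t]}) \;\geq\; \Big(\min_{j \leq t} \chi_j\Big)\,\gamma_2(A_1^{[t]}) \;\geq\; \frac{1}{\pi}\Big(\min_{j \leq t} \chi_j\Big)\log t,
\]
where $A_\chi^{[t]}$ and $A_1^{[t]}$ denote the leading principal $t \times t$ submatrices. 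Taking the maximum over $t$ then yields the stated bound.

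For the first inequality (submatrix monotonicity), I would take any factorization $A_\chi = BC$, restrict $B$ to its first $t$ rows and $C$ to its first $t$ columns, and observe that the product equals $A_\chi^{[t]}$. Restriction can only decrease the maximum row $\ell_2$-norm of $B$ and the maximum column $\ell_2$-norm of $C$, so passing to the infimum yields the claim. For the second inequality (diagonal rescaling), I would use $A_\chi^{[t]} = A_1^{[t]} D_t$ with $D_t = \mathrm{diag}(\chi_1, \ldots, \chi_t)$. Given any factorization $A_\chi^{[t]} = B' C'$, the pair $(B',\, C' D_t^{-1})$ is a valid factorization of $A_1^{[t]}$: the row norms of $B'$ are unchanged, while the $j$-th column of $C' D_t^{-1}$ has norm $\|C'[:,j]\|_2/\chi_j \leq \|C'\|_{1 \to 2}/\min_{j \leq t}\chi_j$ by positivity of $\chi_j$. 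Taking the infimum over factorizations of $A_\chi^{[t]}$ completes the step.

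For the third inequality I plan to invoke the classical lower bound $\gamma_2(A_1^{[t]}) \geq (\log t)/\pi$ for the $t\times t$ prefix-sum matrix, which is a standard result in the continual observation literature (e.g., \citet{fichtenberger2023constant, henzinger2024}). I do not expect any step to be a significant obstacle: submatrix monotonicity is routine, the rescaling argument uses only positivity of the $\chi_j$'s, and the prefix-sum lower bound is well-established. The most delicate point is to ensure the cited prefix-sum bound provides exactly the constant $1/\pi$ uniformly in $t$ rather than only asymptotically; if only a slightly weaker off-the-shelf statement is available, I would specialize its Fourier-analytic proof to the $t\times t$ case to recover precisely the claimed form.
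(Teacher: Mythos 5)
Your proof is correct, but it takes a genuinely different route from the paper's. The paper works with the dual (maximization) characterization of $\gamma_2$, writing $\gamma_2(A) = \max\{\|P^{1/2} A Q^{1/2}\|_* : P,Q\ \text{diagonal, nonneg., } \operatorname{Tr}P=\operatorname{Tr}Q=1\}$, and then plugs in explicit dual witnesses $P=\tfrac{1}{k}I_k$ and $Q \propto D_{:k,:k}^{-2}$ on the $k\times k$ principal submatrix. This reduces the bound to $\|(A_1)_{:k,:k}\|_* \geq \tfrac{k}{\pi}\log k$ after a Cauchy--Schwarz-type step on $\sum_j \chi_j^{-2}$. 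Your argument, by contrast, stays entirely on the primal side: you take any factorization $A_\chi^{[t]} = B'C'$, replace $C'$ by $C' D_t^{-1}$ to get a factorization of $A_1^{[t]}$, and bound the resulting column norms by $\|C'\|_{1\to 2}/\min_{j\le t}\chi_j$, yielding $\gamma_2(A_\chi^{[t]}) \geq (\min_{j\le t}\chi_j)\,\gamma_2(A_1^{[t]})$. You then cite the classical $\gamma_2(A_1^{[t]}) \geq (\log t)/\pi$ directly. Both routes invoke submatrix monotonicity and ultimately rest on the prefix-sum lower bound, but your primal rescaling avoids the SDP-dual formulation and the nuclear-norm estimate entirely, so it is somewhat more elementary. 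Your caveat about the constant is reasonable but moot here: the same $(\log t)/\pi$ bound (via the nuclear norm estimate $\|(A_1)_{:t,:t}\|_* \ge \tfrac{t}{\pi}\log t$) is precisely what the paper uses, so the two proofs rely on the identical underlying constant.
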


\begin{proof}
The optimal factorization error can be written as
\begin{equation}
\gamma_2(A) = \max\left\{ \|P^{1/2} A Q^{1/2}\|_* : P, Q \ \text{diag., nonneg.}, \ \mathrm{Tr}\, P = \mathrm{Tr}\, Q = 1 \right\},
\end{equation}
where $\| \cdot \|_{*}$ denotes the nuclear norm of a matrix.  
It was observed in~\cite{matouvsek2020factorization} that this norm is monotonic with respect to taking submatrices: if $A = B \times C$, then removing rows from $B$ cannot increase the maximum row norm, and removing columns from $C$ cannot increase the maximum column norm. Thus, we can lower bound $\gamma_2(A)$ by the $k \times k$ principal submatrix consisting of the first $k$ rows and columns:
\begin{equation}
\gamma_2(A_{\chi}) \;\ge\; \gamma_2((A_{\chi})_{:k,:k}).
\end{equation}

For the lower bound, let us assume $P = \tfrac{1}{k}I_k$ and 
$Q = \tfrac{1}{\mathrm{Tr}(D_{:k, :k}^{-2})} D_{:k, :k}^{-2}$, which gives
\begin{equation}
\gamma_2((A_{\chi})_{:k,:k}) \;\ge\; \frac{\|(A_1)_{:k,:k}\|_{*}}{\sqrt{k} \, \sqrt{\sum\limits_{j = 1}^{k} \chi_j^{-2}}}.
\end{equation}
Using the bound $\|(A_1)_{:k,:k}\|_{*} \ge \tfrac{k}{\pi}\log k$ and the fact that $\sum_{j=1}^k \chi_j^{-2} \le k(\min\limits_{j \le k} \chi_j)^{-2}$, we conclude
\begin{equation}
\gamma_2(A_{\chi}) \;\ge\; \frac{1}{\pi}\,(\min\limits_{j \le k} \chi_j)\log k.
\end{equation}
Maximizing over $k$ yields the lemma.
\end{proof}

\begin{restatable}{lemma}{gammaFLowerBound}\label{lem:gammaF_lower_bound}
Let $A_{\chi} = A_1 D_{\chi}$, where $D_{\chi} = \mathrm{diag}(\chi_1,\dots,\chi_n)$  with positive
$\chi_1, \chi_2, \ldots, \chi_n$. Then
\begin{equation}
\gamma_F(A_{\chi})\;=\;\inf_{A_{\chi}=BC}\,\mathrm{MeanSE}(B,C)\;\ge\;
\max_{1\le k\le n}\;\frac{1}{\pi}\,\sqrt{\frac{k}{n}}\,(\min\limits_{j \le k} \chi_j)\log k .
\end{equation}
\end{restatable}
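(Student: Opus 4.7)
The plan is to follow the template of Lemma~\ref{lem:gamma2_lower_bound}, replacing the symmetric two-sided diagonal scaling of the Matou\v{s}ek--Nikolov--Talwar characterization with a one-sided Frobenius--nuclear duality. The basic reduction is the inequality
\[
\|A_\chi Q^{1/2}\|_{*} \;\le\; \|B\|_F\,\|C\|_{1\to 2},
\]
valid for every factorization $A_\chi = BC$ and every diagonal PSD $Q$ with $\mathrm{Tr}(Q)=1$. This follows from $\|XY\|_{*}\le \|X\|_F\|Y\|_F$ combined with $\|C Q^{1/2}\|_F^2 = \sum_j q_j\|C_{:,j}\|_2^2 \le \|C\|_{1\to 2}^2\,\mathrm{Tr}(Q)$. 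Dividing by $\sqrt{n}$ and taking the infimum over factorizations gives $\gamma_F(A_\chi)\ge \tfrac{1}{\sqrt{n}}\|A_\chi Q^{1/2}\|_{*}$, so the problem reduces to choosing $Q$ well and lower-bounding a single nuclear norm.

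The choice of $Q$ is the main design step: I pick it to exactly neutralize the learning-rate weighting on the first $k$ coordinates. Setting $q_j = c\,\chi_j^{-2}$ for $j\le k$ and $q_j=0$ otherwise, with $c = \big(\sum_{j\le k}\chi_j^{-2}\big)^{-1}$ to enforce $\mathrm{Tr}(Q)=1$, the $j$-th column of $A_\chi Q^{1/2}$ becomes $\sqrt{c}\,(A_1)_{:,j}$ for $j\le k$ and zero otherwise. Hence $\|A_\chi Q^{1/2}\|_{*} = \sqrt{c}\,\|(A_1)_{:,:k}\|_{*}$, and by Cauchy interlacing (deleting the last $n-k$ rows only decreases each singular value, so the Schatten-$1$ norm cannot grow when we drop rows) we obtain $\|(A_1)_{:,:k}\|_{*}\ge \|(A_1)_{:k,:k}\|_{*} \ge \tfrac{k \log k}{\pi}$, using the same prefix-sum nuclear-norm bound invoked in Lemma~\ref{lem:gamma2_lower_bound}.

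The remaining step is a one-line bound on $\sqrt{c}$: since $\sum_{j\le k}\chi_j^{-2}\le k\,(\min_{j\le k}\chi_j)^{-2}$, we have $\sqrt{c}\ge (\min_{j\le k}\chi_j)/\sqrt{k}$, and assembling everything gives $\gamma_F(A_\chi) \ge \tfrac{1}{\pi}\sqrt{k/n}\,(\min_{j\le k}\chi_j)\log k$ for each $k\in\{1,\dots,n\}$; maximizing over $k$ concludes the proof. I do not expect any serious obstacle here: the Frobenius--nuclear duality is elementary, the choice of $Q$ is dictated by the shape of the desired bound, and the nuclear-norm estimate on $(A_1)_{:k,:k}$ is already taken as given in the companion Lemma~\ref{lem:gamma2_lower_bound}. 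The conceptual point worth stressing is that the one-sided column scaling (versus the two-sided scaling used for $\gamma_2$) is exactly what produces the extra $\sqrt{k/n}$ factor relative to the MaxSE bound, reflecting that $\|B\|_F$ already averages uniformly over rows.
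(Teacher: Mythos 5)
Your proof is correct, and it takes a genuinely different and arguably cleaner route than the paper. The paper's argument is modular: it first restricts to a $k\times k$ principal submatrix via the monotonicity $\gamma_F(A_\chi)\ge\sqrt{k/n}\,\gamma_F((A_\chi)_{:k,:k})$, then invokes the cited bound $\gamma_F(M)\ge \|M\|_*/k$ for a $k\times k$ matrix $M$, and finally uses a dual-certificate argument (a $Y=WD_k^{-1}/\|D_k^{-1}\|_2$ feasible point for the nuclear norm) to replace $(A_\chi)_{:k,:k}$ with $(\min_{j\le k}\chi_j)\cdot(A_1)_{:k,:k}$. You instead prove a single one-sided Frobenius--nuclear duality, $\gamma_F(A_\chi)\ge\tfrac{1}{\sqrt n}\|A_\chi Q^{1/2}\|_*$ for any unit-trace diagonal PSD $Q$, via the Schatten--H\"older inequality $\|XY\|_*\le\|X\|_F\|Y\|_F$ together with $\|CQ^{1/2}\|_F\le\|C\|_{1\to 2}$; a single choice of $Q$ (weighted by $\chi_j^{-2}$ on the first $k$ coordinates, zero elsewhere) then simultaneously truncates to $k$ columns and cancels the learning-rate scaling, collapsing the paper's three-lemma chain into one step. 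Both proofs ultimately rely on the same nuclear-norm estimate $\|(A_1)_{:k,:k}\|_*\ge k\log k/\pi$. Your observation that the asymmetry between $\|B\|_F$ (uniform row averaging) and $\|C\|_{1\to 2}$ (max column norm) is what makes a one-sided scaling suffice, and what produces the $\sqrt{k/n}$ factor missing from the MaxSE bound, is exactly right. One minor nit: the claim that deleting rows from $(A_1)_{:,:k}$ cannot increase the nuclear norm is better attributed to Weyl's monotonicity principle (since $M'^\top M' = M^\top P^\top P M \preceq M^\top M$) than to Cauchy interlacing; the conclusion is of course correct.
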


\begin{proof}
By definition,
\begin{equation}
\gamma_F(A)=\inf_{A=BC}\ \frac{1}{\sqrt{n}}\;\|B\|_F\;\|C\|_{1\to 2},
\end{equation}
where $\|C\|_{1\to 2}=\max_j \|C_{:,j}\|_2$ is the maximum column norm.  

Fix $k\le n$. For any factorization $A=BC$, the principal $k\times k$ submatrix satisfies
\begin{equation}
A_{:k,:k} \;=\; B_{:k,:}\,C_{:k}.
\end{equation}
Since removing rows can only decrease the Frobenius norm, $\|B_{:k,:}\|_F \le \|B\|_F$, 
and removing columns can only decrease the $\| \cdot \|_{1\to 2}$ norm, $\|C_{:k}\|_{1\to 2}\le \|C\|_{1\to 2}$. Therefore
\begin{equation}
\frac{1}{\sqrt{n}}\|B\|_F\|C\|_{1\to 2}
\;\ge\;
\frac{1}{\sqrt{n}}\|B_{:k,:}\|_F\|C_{:k}\|_{1\to 2}
\;=\;
\sqrt{\frac{k}{n}}\,\Bigl(\frac{1}{\sqrt{k}}\|B_{:k,:}\|_F\|C_{:k}\|_{1\to 2}\Bigr).
\end{equation}
Taking the infimum over all factorizations gives
\begin{equation}\label{eq:mono}
\gamma_F(A) \;\ge\; \sqrt{\frac{k}{n}}\;\gamma_F(A_{:k,:k}).
\end{equation}

\smallskip
For the submatrix, we use the bound from~\cite{henzinger2023almost}:
\begin{equation}\label{eq:bsr}
\gamma_F((A_{\chi})_{:k,:k}) \;\ge\; \frac{\|(A_{\chi})_{:k,:k}\|_*}{k},
\end{equation}
where $\|\cdot\|_*$ denotes the nuclear norm.   Recall that the nuclear norm is dual to the spectral norm:
\begin{equation}
\|M\|_* \;=\; \sup_{\|Y\|_2 \le 1}\, \mathrm{tr}(M Y^\top),
\end{equation}
where the supremum is over all matrices $Y$ with operator norm at most $1$.  
Write $(A_{\chi})_{:k,:k}=(A_1)_{:k,:k}D_k$ with $D_k=\mathrm{diag}(\chi_1,\dots,\chi_k)$.  
If $W$ is a dual certificate for $(A_1)_{:k,:k}$, so that $\|W\|_2 \le 1$ and 
$\|(A_1)_{:k,:k}\|_*=\mathrm{tr}((A_1)_{:k,:k}W^\top)$, then consider
\begin{equation}
Y = \frac{W D_k^{-1}}{\|D_k^{-1}\|_2}.
\end{equation}
Since $\|W\|_2\le 1$, we have $\|Y\|_2 \le 1$. Thus
\begin{equation}
\|(A_{\chi})_{:k,:k}\|_* \;\ge\; \mathrm{tr}((A_1)_{:k,:k}D_k Y^\top)
= \frac{1}{\|D_k^{-1}\|_2}\,\mathrm{tr}((A_1)_{:k,:k} W^\top)
= \frac{1}{\|D_k^{-1}\|_2}\,\|(A_1)_{:k,:k}\|_*.
\end{equation}
The largest diagonal entry of $D_k^{-1}$ is $(\min\limits_{j \le k} \chi_j)^{-1}$, so
\begin{equation}\label{eq:nuclear-xk}
\|(A_{\chi})_{:k,:k}\|_* \;\ge\; (\min\limits_{j \le k} \chi_j)\,\|(A_1)_{:k,:k}\|_*.
\end{equation}

Finally, using the standard estimate $\|(A_1)_{:k,:k}\|_* \ge \tfrac{k}{\pi}\log k$, 
combining \eqref{eq:mono}, \eqref{eq:bsr}, and \eqref{eq:nuclear-xk} gives
\begin{equation}
\gamma_F(A_{\chi})\;\ge\;\sqrt{\frac{k}{n}}\cdot \frac{1}{k}\cdot \frac{k}{\pi}(\min\limits_{j \le k} \chi_j)\log k
\;=\; \frac{1}{\pi}\,\sqrt{\frac{k}{n}}\,(\min\limits_{j \le k} \chi_j)\log k.
\end{equation}
Maximizing over $k$ proves the lemma.
\end{proof}

\gammaConsequences*
\begin{proof}
By Theorem~\ref{thm:gamma_bounds}, for any $t$,
\begin{equation}
\inf_{A_{\chi}=BC}\text{MaxSE}(B,C) \;\ge\; \frac{\chi_t}{\pi}\,\log t,
\qquad
\inf_{A_{\chi}=BC}\text{MeanSE}(B,C) \;\ge\; \frac{\chi_t}{\pi}\,\sqrt{\frac{t}{n}}\,\log t.
\end{equation}
Choose
\begin{equation}
t^\star=\left\lceil \frac{n}{\log(1/\beta)}\right\rceil,
\end{equation}
which satisfies $1\le t^\star\le n$ since $\beta\in(0,1/e)$. Then
\begin{equation}
\chi_{t^\star}
=\beta^{\frac{t^\star-1}{n-1}}
=\exp\!\left(-\frac{\log(1/\beta)}{n-1}\,(t^\star-1)\right)
=\Theta(1).
\end{equation}
Hence
\begin{equation}
\inf_{A_{\chi}=BC}\text{MaxSE}(B,C)\;\ge\;\frac{\chi_{t^\star}}{\pi}\,\log t^\star
=\Omega\!\left(\log\!\frac{n}{\log(1/\beta)}\right),
\end{equation}
and, using $t^\star/n=\Theta\!\big(1/\log(1/\beta)\big)$,
\begin{equation}
\inf_{A_{\chi}=BC}\text{MeanSE}(B,C)\;\ge\;\frac{\chi_{t^\star}}{\pi}\,\sqrt{\frac{t^\star}{n}}\,\log t^\star
=\Omega\!\left(\frac{1}{\sqrt{\log(1/\beta)}}\,\log\!\frac{n}{\log(1/\beta)}\right).
\end{equation}
\end{proof}

\section{Multi-participation: Prefix-sum factorization}

\begin{lemma}\label{lem:prefix-error-multi-mech}
Let $(\chi_t)_{t=1}^n$ be a positive sequence taken from $[\beta, \infty)$ where $\beta > 0$ is a constant, and
\begin{equation}
    Q = \sum_{l=1}^{n-1}\left(\sum_{t=0}^{n-l-1} \lvert \chi_{l+t} - \chi_{l+t+1}\rvert r_t\right)^2 = o(\log n).
\end{equation}
Then
\begin{align}
    \| B_{\chi}^{p}\|_{2\to\infty} &= \Theta\left(\sqrt{\max_{1\leq m\leq n} \chi_m^2 \log\left(\min\{m, p\}\right) + \frac{1}{p}\sum_{t=p}^{m-1} \chi_t^2} \right),\\
    \frac{1}{\sqrt{n}}\| B_{\chi}^{p}\|_{F} &= \Theta\left(\sqrt{\frac{1}{n}\sum_{m=1}^{n}\left[\chi_m^2 \log\left(\min\{m, p\}\right) + \frac{1}{p}\sum_{t=p}^{m-1} \chi_t^2\right]} \right).
\end{align}
\end{lemma}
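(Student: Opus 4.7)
My plan is to mirror the proof of Lemma~\ref{lem:prefix-error-single-mech} by first obtaining a clean closed form for the entries of $B_\chi^p$ via Abel summation, then performing a case split on the range of $l$ to handle the banding, and finally bounding the resulting row sum of squares. The key structural gain over the single-participation case is that the banding freezes the telescoping ``anchor'' at index $l+p-1$ once $m-l \geq p$, which is exactly what produces the second summand in the claimed bound.

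\textbf{Step 1 (entry formula).} Since $(C_1^p)^{-1}$ is the banded lower-triangular Toeplitz matrix whose first $p$ subdiagonals carry $\tilde{r}_0,\tilde{r}_1,\dots,\tilde{r}_{p-1}$, I would write
\begin{equation*}
(B_\chi^p)_{m,l} \;=\; \sum_{t=0}^{T-1} \chi_{l+t}\,\tilde{r}_t, \qquad T := \min\{m-l+1,\,p\}.
\end{equation*}
Abel summation, together with the identity $\sum_{s=0}^{t}\tilde{r}_s = r_t$ (which follows from the generating-function identity $(1-x)^{1/2}\cdot(1-x)^{-1}=(1-x)^{-1/2}$), then gives
\begin{equation*}
(B_\chi^p)_{m,l} \;=\; r_{T-1}\,\chi_{l+T-1} \;+\; \sum_{t=0}^{T-2}(\chi_{l+t}-\chi_{l+t+1})\,r_t.
\end{equation*}
This reduces to the formula used in Lemma~\ref{lem:prefix-error-single-mech} when $p > m-l$, and saturates when $p \le m-l$.

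\textbf{Step 2 (main vs.\ error split).} Applying $\tfrac{1}{2}a^2 - b^2 \le (a+b)^2 \le 2a^2 + 2b^2$ to the two summands of Step 1 yields
\begin{equation*}
\tfrac{1}{2}M_m - Q_m^p \;\le\; \sum_{l=1}^{m} (B_\chi^p)_{m,l}^2 \;\le\; 2 M_m + 2 Q_m^p,
\end{equation*}
where $M_m := \sum_{l=1}^{m} r_{T-1}^2\,\chi_{l+T-1}^2$ and $Q_m^p := \sum_{l=1}^{m-1}\bigl(\sum_{t=0}^{T-2}\Delta_{l+t}\,r_t\bigr)^2$ with $\Delta_k = |\chi_k-\chi_{k+1}|$. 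I would then split $M_m$ according to whether $l \ge m-p+1$ (where $T-1 = m-l$ and $\chi_{l+T-1}=\chi_m$) or $l \le m-p$ (where $T-1 = p-1$ and $\chi_{l+T-1}=\chi_{l+p-1}$). Changing variables $u = m-l$ in the first part and $t = l+p-1$ in the second part gives
\begin{equation*}
M_m \;=\; \chi_m^2 \sum_{u=0}^{\min\{m,p\}-1} r_u^2 \;+\; r_{p-1}^2 \sum_{t=p}^{m-1} \chi_t^2,
\end{equation*}
and the coefficient estimates $r_u = \Theta\bigl(1/\sqrt{u+1}\bigr)$ convert this into $M_m = \Theta\bigl(\chi_m^2 \log\min\{m,p\} + \tfrac{1}{p}\sum_{t=p}^{m-1}\chi_t^2\bigr)$, matching the claimed main order.

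\textbf{Step 3 (error absorption and conclusion).} The error term $Q_m^p$ is a truncation (positive summands are restricted to $t\le p-2$) of the quantity that, when summed over $m$, is exactly $Q$ of Lemma~\ref{lem:prefix-error-single-mech}; hence $Q_m^p \le Q = o(\log n)$. Since $\chi_t \ge \beta$, the main quantity $M_m$ satisfies $\max_m M_m = \Omega(\log\min\{n,p\})$ and $\tfrac{1}{n}\sum_m M_m = \Omega\bigl(\tfrac{1}{n}\sum_m \log\min\{m,p\}\bigr)$, which dominate the $o(\log n)$ error and allow the $\pm\mathcal{O}(Q)$ to be absorbed into the $\Theta(\cdot)$. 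Taking maxima and averages over $m$, then square roots, yields the two claimed $\Theta$ bounds. The main obstacle is the bookkeeping in the case split at $l = m-p+1$ (particularly handling $m < p$, $m = p$, and $m > p$ cleanly and ensuring the re-indexing $u = m-l$, $t = l+p-1$ covers the correct ranges); once that is done, the rest of the argument parallels the single-participation proof.
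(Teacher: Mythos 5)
Your proof follows the same route as the paper's: the Abel summation with anchor at the truncated index $l+T-1$ (the paper's proof derives the identical closed form, anchoring instead at $\chi_m$ and then telescoping the tail $\sum_{t=p-1}^{m-l-1}\delta_{l+t}r_{p-1}$); the case split at $l = m-p+1$; the main-versus-error decomposition into $M_m$ and $Q_m^p$ (which the paper writes as $P_1 + P_2$ and $Q_1 + Q_2$ respectively); the observation that $Q_m^p$ is a truncation of $Q$; and the evaluation of $M_m$ via $r_u = \Theta(1/\sqrt{u+1})$. All of this matches.

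There is, however, a gap in Step 3's absorption argument. You claim $\max_m M_m = \Omega(\log\min\{n,p\})$ and $\tfrac{1}{n}\sum_m M_m = \Omega\bigl(\tfrac{1}{n}\sum_m\log\min\{m,p\}\bigr)$ suffice to dominate $Q = o(\log n)$. These lower bounds only account for the logarithmic summand $\chi_m^2\sum_{u=0}^{\min\{m,p\}-1}r_u^2$ of $M_m$. When $p$ is small, say $p = \mathcal{O}(\operatorname{polylog} n)$ or even $p = \mathcal{O}(1)$, $\log\min\{n,p\}$ is $o(\log n)$ and the dominance claim fails. To close the gap you must bring in the second summand $r_{p-1}^2\sum_{t=p}^{m-1}\chi_t^2 \ge \beta^2\,\frac{m-p}{p\pi}$, which becomes large precisely when $p$ is small; taking $m=n$ shows that if $p \le \sqrt{n}$ the second summand is $\Omega(\sqrt{n}) = \omega(\log n)$, while if $p > \sqrt{n}$ the first summand already gives $\Omega(\log n)$. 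Hence the combined leading term is always $\Omega(\log n)$ (uniformly over $p$), which is what is needed to absorb $\pm\mathcal{O}(Q)$ into $\Theta(\cdot)$ for both norms. This is exactly the reasoning the paper appeals to when it notes the leading term ``has smallest asymptotic growth for $p \sim n$, yielding $\Theta(\log n)$, and so is $\Omega(\log n)$ for all choices of $p$.''
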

\begin{proof}
    The entries of $B^p_{\chi}$ can be expressed as follows:
    \begin{equation}
    (B^p_{\chi})_{m,l} = \chi_l + \sum\limits_{t = 1}^{\min\{m - l, p-1\}} \tilde{r}_t \chi_{t + l} = \chi_l + \sum\limits_{t = 1}^{m - l} \tilde{r}_t \chi_{t + l}\mathbbm{1}_{t \le p-1},
    \end{equation}
    where again $\tilde{r}_t = \frac{-r_t}{2t-1}$.
    Following the proof of Lemma~\ref{lem:prefix-error-single-mech} and applying summation of parts:
    \begin{align}
        (B_{\chi}^p)_{m, l} &= \chi_m \sum\limits_{t = 0}^{m - l} \tilde{r}_t \mathbbm{1}_{t \le p-1}
        - \sum\limits_{t = 0}^{m - l - 1} (\chi_{l + t + 1} - \chi_{l + t}) \sum\limits_{j = 0}^{t} \tilde{r}_{j}\mathbbm{1}_{j \le p-1}\\
        &= \chi_{m} r_{\min\{m - l, p - 1\}} + \sum\limits_{t = 0}^{m - l - 1} (\chi_{l + t} - \chi_{l + t + 1})  r_{\min\{t, p - 1\}}.
    \end{align}
    For notational convenience, let $\delta_t = \chi_{t} - \chi_{t+1}$ and $\Delta_t = \lvert \delta_t \rvert$.
    We have two distinct cases for these sums: $m-l \leq p-1$ and $m-l > p-1$.
    Starting with the first case, we get
    \begin{align}
        \left(B_{\chi}^{p}\right)_{m,l} = \chi_m r_{m-l} + \sum_{t=0}^{m-l-1} \delta_{l+t} r_{t}.
    \end{align}
    as in the case without bandedness $(p=n)$.
    For the second case, where $m-l > p-1$, we get
    \begin{align}
        \left(B_{\chi}^{p}\right)_{m,l}
        = \chi_m r_{p-1} + \sum_{t=0}^{p-2} \delta_{l+t} r_{t} + \underbrace{\sum_{t=p-1}^{m-l-1} \delta_{l+t} r_{p-1}}_{= (\chi_{l+p-1} - \chi_m)r_{p-1}}
        = \chi_{l+p-1} r_{p-1} + \sum_{t=0}^{p-2} \delta_{l+t} r_{t}.
    \end{align}
    Combining the two expressions, we can express the squared row sums:
    \begin{align}
        &\sum_{l=1}^{m} \left(B_{\chi}^{p}\right)_{m,l}^2
        = \sum_{l=1}^{\max\{m-p, 0\}} \left(B_{\chi}^{p}\right)_{m,l}^2
        + \sum_{l=\max\{m-p, 0\}+1}^{m} \left(B_{\chi}^{p}\right)_{m,l}^2\\
        &=\underbrace{\sum_{l=1}^{\max\{m-p, 0\}}\left(\chi_{l+p-1} r_{p-1} + \sum_{t=0}^{p-2}\delta_{l+t}r_t \right)^2}_{S_1}
        + \underbrace{\sum_{l=\max\{m-p, 0\}+1}^{m}\left(\chi_m r_{m-l} + \sum_{t=0}^{m-l-1} \delta_{l+t} r_t\right)^2}_{S_2}.
    \end{align}
    We will argue that we can characterize $S_1 + S_2$ tightly.
    Beginning with upper bounds, using $(a+b)^2 \leq 2a^2 + 2b^2$, and letting $q= \max\{m-p, 0\}$:
    \begin{align}
        S_1
        &\leq \sum_{l=1}^{q}\left(\chi_{l+p-1} r_{p-1} + \sum_{t=0}^{p-2}\Delta_{l+t}r_t \right)^2
        \leq 2\underbrace{r_{p-1}^2 \sum_{l=1}^{q}\chi_{l+p-1}^2}_{P_1}
        + 2\underbrace{\sum_{l=1}^{q}\left(\sum_{t=0}^{p-2}\Delta_{l+t}r_t \right)^2}_{Q_1},\\
        S_2
        &\leq \sum_{l=q+1}^{m}\left(\chi_m r_{m-l} + \sum_{t=0}^{m-l-1} \Delta_{l+t} r_t\right)^2
        \leq 2\underbrace{\chi_m^2\sum_{l=q+1}^{m}r_{m-l}^2}_{P_2}
        + 2\underbrace{\sum_{l=q+1}^{m-1}\left(\sum_{t=0}^{m-l-1} \Delta_{l+t} r_t\right)^2}_{Q_2}.
    \end{align}
    Repeating the exercise to get a lower bound on $S_1 + S_2$ via $(a+b)^2 \geq \frac{1}{2}a^2 - b^2$:
    \begin{align}
        S_1
        &= \sum_{l=1}^{q}\left(\chi_{l+p-1} r_{p-1} + \sum_{t=0}^{p-2}\delta_{l+t}r_t \right)^2
        \geq \frac{1}{2}P_1 - \sum_{l=1}^{q}\left(\sum_{t=0}^{p-2}\delta_{l+t}r_t \right)^2
        \geq \frac{1}{2}P_1 - Q_1,\\
        S_2
        &= \sum_{l=q+1}^{m}\left(\chi_m r_{m-l} + \sum_{t=0}^{m-l-1} \delta_{l+t} r_t\right)^2
        \geq \frac{1}{2}P_2 - \sum_{l=q+1}^{m-1}\left(\sum_{t=0}^{m-l-1} \delta_{l+t} r_t\right)^2
        \geq \frac{1}{2}P_2 - Q_2,
    \end{align}
    where the last step in each derivation uses that the expression is made smaller when we replace $\delta_{l+t}$ by $\Delta_{l+t}$.
    It follows that
    \begin{equation}
        \sum_{l=1}^{m} \left(B_{\chi}^{p}\right)_{m,l}^2 
        =S_1 + S_2
        = \Theta(P_1 + P_2) \pm \mathcal{O}(Q_1 + Q_2).
    \end{equation}
    We have that
    \begin{align}
       P_1
       &= r_{p-1}^2 \sum_{l=1}^{\max\{m-p, 0\}}\chi_{l+p-1}^2
       = r_{p-1}^2 \sum_{l=p}^{m-1}\chi_{t}^2
       = \Theta\left(\frac{1}{p}\sum^{m-1}_{t=p} \chi_t^2\right),\\
       P_2
       &= \chi_m^2\sum_{l=q+1}^{m}r_{m-l}^2
       = \chi_m^2\sum_{t=0}^{\min\{m, p\}-1} r_{t}^2
       = \Theta\left( \chi_m^2 \log\min\{m, p\}\right),
    \end{align}
    from using the bound $r_t = \Theta(1/\sqrt{t})$, and
    \begin{equation}
        Q_1 + Q_2 \leq \sum_{l=1}^{m-1}\left(\sum_{t=0}^{m-l-1} \Delta_{l+t} r_t\right)^2 \leq Q,
    \end{equation}
    from increasing the upper limit of the inner sum of $Q_1$ to $m-l-1$, then setting $m=n$.
    And so,
    \begin{align}
       \sum_{l=1}^{m} \left(B_{\chi}^{p}\right)_{m,l}^2 
       = \Theta\left( \chi_m^2 \log\left(\min\{m, p\}\right)
       + \frac{1}{p}\sum_{t=p}^{m-1} \chi_t^2 \right) \pm \mathcal{O}(Q).
    \end{align}
    
    Computing the norms:
    \begin{align}
        \frac{1}{n}\| B_{\chi}^{p}\|_{F}^2
        &= 
        \frac{1}{n}\sum_{m=1}^{n}\sum_{l=1}^{m} \left(B_{\chi}^{p}\right)_{m,l}^2
        = \Theta\left(\frac{1}{n}\sum_{m=1}^{n}\left[\log\min(\{m, p\}) + \frac{\max\{m-p, 0\}}{p}\right] \right) \pm \mathcal{O}(Q)\\
        \| B_{\chi}^{p}\|_{2\to\infty}^2
        &= \max_{1 \leq m\leq n} \sum_{l=1}^{m} \left(B_{\chi}^{p}\right)_{m, l}^2
        = \max_{1 \leq m\leq n} \Theta\left(\log\min(\{m, p\}) + \frac{\max\{m-p, 0\}}{p}\right) \pm \mathcal{O}(Q)
    \end{align}
    For each of the two norms, the first term has smallest asymptotic growth for $p\sim n$, yielding $\Theta(\log n)$, and so is $\Omega(\log n)$ for all choices of $p$, thus dominating $\pm \mathcal{O}(Q)$.
    Taking a square-root finishes the proof.
\end{proof}

\thmPrefixErrorMulti*
\begin{proof}
    As shown in the proof of Theorem~\ref{thm:prefix-error-single}, the condition on $\chi_t$ is sufficient to enforce $Q=o(\log n)$, and so
    \begin{equation}
        \frac{1}{\sqrt{n}}\| B_{\chi}^{p}\|_{F} = \Theta\left(\sqrt{\frac{1}{n}\sum_{m=1}^{n}\left[\chi_m^2 \log\left(\min\{m, p\}\right) + \frac{1}{p}\sum_{t=p}^{m-1} \chi_t^2\right]} \right)
    \end{equation}
    from invoking Lemma~\ref{lem:prefix-error-multi-mech}.
    For the sensitivity $\text{sens}(C_1^p)$, we use the following bound from the proof of \cite[Theorem~4]{kalinin2025back}:
    \begin{equation}
        \text{sens}_{k,b}(C_1^p) = \mathcal{O}\left(\sqrt{k\log p + \frac{kp}{b}}\right).
    \end{equation}
    Inserting the two bounds into $\mathcal{E}(B_{\chi}^p, C_1^p) = \frac{1}{\sqrt{n}}\|B_{\chi}^p\|_F\cdot\text{sens}_{k,b}(C_1^p)$ gives the statement.
\end{proof}

\corPrefixErrorMultiExp*
\begin{proof}
    As $\chi_t$ satisfies the condition of Theorem~\ref{thm:prefix-error-multi}, we have that
    \begin{equation}
        \mathcal{E}(B_{\chi}^p, C_1^p)
        = \mathcal{O}\left(\sqrt{\frac{k}{n}\left(\log p + \frac{p}{b}\right)\sum_{m=1}^{n}\left[\alpha^{2(m-1)} \log\left(\min\{m, p\}\right) + \frac{1}{p}\sum_{t=p}^{m-1} \alpha^{2(t-1)}\right]} \right),
    \end{equation}
    where $\alpha = \beta^{\frac{1}{n-1}}$.
    We will evaluate each of the two terms in the outer sum.
    First off,
    \begin{align}
    \sum_{m=1}^{n}\alpha^{2(m-1)} \log\left(\min\{m, p\}\right)
    &\leq \log p\sum_{m=1}^{n}\alpha^{2(m-1)} = \Theta\left(\frac{n\log p}{\log(1/\beta)}\right),
    \end{align}
    where the last step follows from the proof of Lemma~\ref{lem:alpha_m_log_m_mean}.
    Proceeding with the second term:
    \begin{align}
        \frac{1}{p} \sum_{m=1}^{n}\sum_{t=p}^{m-1}\alpha^{2(t-1)}
        &= \frac{1}{p}\sum_{t=p}^{n-1}(n-t)\alpha^{2(t-1)}
        \leq \frac{n-p}{p}\sum_{t=p}^{n-1}\alpha^{2(t-1)}
        = \mathcal{O}\left(\frac{n^2}{p\log(1/\beta)}\right),
    \end{align}
    where the last step again uses the proof of Lemma~\ref{lem:alpha_m_log_m_mean}.
    It follows that
    \begin{align}
        \mathcal{E}(B_{\chi}^p, C_1^p)
        &= \mathcal{O}\left(\sqrt{\frac{k}{\log(1/\beta)}\left(\log p + \frac{p}{b}\right)\left(\log p + \frac{n}{p}\right)} \right)\\
        &= \mathcal{O}\left(\sqrt{\frac{k}{\log(1/\beta)}}\left(\log p + \sqrt{\frac{n}{b}} + \sqrt{\frac{n\log p}{p}} + \sqrt{\frac{p\log p}{b}} \right)\right).
    \end{align}
    As this last expression exactly matches the error given in \cite[Theorem 4]{kalinin2025back}, up to the $1/\sqrt{\log(1/\beta)}$ factor, it is minimized for the choice of $p = O(b\log b)$ \cite[Corollary 1]{kalinin2025back} achieving error
    \begin{align}
        \mathcal{E}(B_{\chi}^p, C_1^p) = \mathcal{O}\left(\frac{\sqrt{k}\log n + k}{\sqrt{\log(1/\beta)}}\right),
    \end{align}
    completing the proof.
\end{proof}

\section{Multi-participation: Lower bounds}

\lowerboundmulti*

\begin{proof}
We start with the first bound, by definition,
\begin{equation}
    \mathcal{E}(B, C) = \frac{1}{\sqrt{n}} \|B\|_F \cdot \text{sens}_{k, b}(C).
\end{equation}
If we restrict to the principal submatrices $B_{:t,:}$ and $C_{:, :t}$, then removing the rows can only decrease the Frobenius norm, and removing the last $n - t$ columns can only decrease the sensitivity, since any participation pattern for the matrix $C_{:, :t}$ would be a valid pattern for the full matrix. Hence
\begin{equation}
    \mathcal{E}(B, C) \;\ge\; \frac{1}{\sqrt{n}} \, \|B_{:t, :}\|_F \cdot \text{sens}_{k, b}(C_{:, :t}).
\end{equation}

Following the proof of Lemma~6 in \citep{kalinin2025back}, we have
\begin{equation}
   \text{sens}_{k, b}(C_{:, :t}) \;\ge\; \frac{1}{\sqrt{2b}} \|C_{:, :t}\|_F.
\end{equation}
Therefore,
\begin{equation}
   \mathcal{E}(B, C) \;\ge\; \frac{1}{\sqrt{2nb}} \, \|B_{:t, :}\|_F \cdot \|C_{:, :t}\|_F.
\end{equation}

Applying the Schatten inequality for Frobenius and nuclear norms,
\begin{equation}
   \|B_{:t, :}\|_F \cdot \|C_{:, :t}\|_F \;\ge\; \|(A_{\chi})_{:t, :t}\|_{*},
\end{equation}
which gives
\begin{equation}
   \mathcal{E}(B, C) \;\ge\; \frac{1}{\sqrt{2nb}} \, \|(A_{\chi})_{:t, :t}\|_{*}.
\end{equation}

Finally, by Lemma~\ref{lem:gammaF_lower_bound},
\begin{equation}
   \|(A_{\chi})_{:t, :t}\|_{*} \;\ge\; \frac{1}{\pi} (\min\limits_{j \le t} \chi_j) t \log t,
\end{equation}
which implies
\begin{equation}
   \mathcal{E}(B, C) \;\ge\; \max_{t \le n} \frac{\sqrt{k} \, t \, }{\pi \sqrt{2} n}(\min\limits_{j \le t} \chi_j)\log t.
\end{equation}
For the second bound, we use the proof of Theorem~3 from \citet{kalinin2025back}, which shows that  

\begin{equation}
    \mathcal{E}(B, C) \ge \frac{1}{\sqrt{n}}\|BC \pi_1\|_{2} = \frac{1}{\sqrt{n}}\|A_{\chi} \pi_1\|_{2},
\end{equation}

where $\pi_1$ is a vector with ones in positions $1 + jb$ for $j \in [0, k - 1]$, and zeros elsewhere. We can lower bound the norm explicitly:

\begin{align}
    \frac{1}{\sqrt{n}}\|A_{\chi} \pi_1\|_{2} 
    &= \sqrt{\frac{1}{n}\sum\limits_{i = 0}^{k - 1}\sum\limits_{j = 0}^{k - 1} \chi_{1 + jb}\chi_{1 + ib}(n - jb)} \\ 
    &= \sqrt{\sum\limits_{i = 0}^{k - 1}\sum\limits_{j = 0}^{k - 1} \chi_{1 + jb}\chi_{1 + ib}\left(1 - \frac{j}{n/b}\right)} \\
    &\ge \sum\limits_{j = 0}^{k - 1} \chi_{1 + jb}\left(1 - \frac{j}{n/b}\right) 
    \ge \sum\limits_{j = 0}^{k - 1} \chi_{1 + jb}\left(1 - \frac{j}{k - 1}\right),
\end{align}

which concludes the proof.

\end{proof}

\geolowerbound*

\begin{proof}
We substitute $\chi_k = \beta^{\frac{k - 1}{n - 1}}$ in the general lower bound:
\begin{equation}
    \mathcal{E}(B, C) \;\ge\; \max\left\{\max_{t \le n} \frac{\sqrt{k} \, t \, \chi_t \, \log(t)}{\pi \sqrt{2} n}, \; \sum\limits_{j = 0}^{k - 1}\chi_{1 + jb}\left(1 - \frac{j}{k - 1}\right)\right\}.
\end{equation}

For the first term, we substitute $t = \lceil \tfrac{n}{\log (1/\beta)}\rceil$, which gives $\chi_t = \Theta(1)$, resulting in
\begin{equation}
    \mathcal{E}(B, C) = \Omega\left(\frac{\sqrt{k}}{\log(1/\beta)} \log \frac{n}{\log (1/\beta)}\right).
\end{equation}

The second term, we compute explicitly:
\begin{equation}
    \sum\limits_{j = 0}^{k - 1}\chi_{1 + jb}\left(1 - \frac{j}{k - 1}\right) 
    = \alpha \frac{\alpha^{bn} + (1 - \alpha^b)n - 1}{(1 - \alpha^b)^2(n - 1)},
\end{equation}
where $\alpha = \beta^{1/(n - 1)}$. Asymptotically this is equal to $\tfrac{1}{1 - \alpha^b}$, giving the lower bound
\begin{equation}
    \mathcal{E}(B, C) = \Omega \left(\frac{n}{b\log (1/\beta)}\right) = \Omega \left(\frac{k}{\log (1/\beta)}\right).
\end{equation}

Combining those lower bounds as an average, we conclude the proof.
\end{proof}

\end{document}